\newcommand{\bigO}{\mathcal{O}}
\newcommand{\R}{{\mathbb{R}}}
\DeclareMathOperator*{\argmin}{arg\,min}
\newcommand{\T}{\mathcal{T}}
\newcommand{\W}{\mathcal{W}}
\newcommand{\matL}{\mathcal{L}}
\newcommand{\hmatL}{\hat{\mathcal{L}}}
\newcommand{\hF}{\hat{F}}
\newcommand{\tG}{\tilde{G}}
\newcommand{\matS}{\mathcal{S}}
\newcommand{\matH}{\mathcal{H}}
\newcommand{\A}{\mathcal{A}}
\newcommand{\F}{\mathcal{F}}
\newcommand{\B}{\mathcal{B}}
\newcommand{\D}{\mathcal{D}}
\newcommand{\E}{\mathbb{E}}
\newcommand{\Z}{\mathcal{Z}}
\newcommand{\tein}{\text{in}}
\newcommand{\teout}{\text{out}}
\newtheorem{theorem}{Theorem}  
\newtheorem{definition}{Definition}
\newtheorem{proposition}{Proposition}
\newtheorem{lemma}{Lemma}
\newtheorem{remark}{Remark}
\newtheorem{corollary}{Corollary}
\newtheorem{assumption}{Assumption}
\newcommand{\beq}{\begin{equation}}
\newcommand{\eeq}{\end{equation}}
\newcommand{\beqa}{\begin{eqnarray}}
\newcommand{\eeqa}{\end{eqnarray}}
\newcommand{\beqs}{\begin{equation*}}
\newcommand{\eeqs}{\end{equation*}}
\newcommand{\beqas}{\begin{eqnarray*}}
\newcommand{\eeqas}{\end{eqnarray*}}
\DeclarePairedDelimiter\floor{\lfloor}{\rfloor}
\title{Generalization of Model-Agnostic Meta-Learning Algorithms: Recurring and Unseen Tasks}
\author{%
  Alireza Fallah \\
   EECS Department \\
   Massachusetts Institute of Technology \\
  \texttt{afallah@mit.edu} \\
   \And
   Aryan Mokhtari \\
   ECE Department \\
   The University of Texas at Austin \\
   \texttt{mokhtari@austin.utexas.edu} \\
   \And
   Asuman Ozdaglar \\
   EECS Department \\
   Massachusetts Institute of Technology \\
   \texttt{asuman@mit.edu} \\
}
\begin{document}

\maketitle

\begin{abstract}
In this paper, we study the generalization properties of Model-Agnostic Meta-Learning (MAML) algorithms for supervised learning problems. We focus on the setting in which we train the MAML model over $m$ tasks, each with $n$ data points, and characterize its generalization error from two points of view: First, we assume the new task at test time is one of the training tasks, and we show that, for strongly convex objective functions, the expected excess population loss is bounded by ${\bigO}(1/mn)$. Second, we consider the MAML algorithm's generalization to an unseen task and show that the resulting generalization error depends on the total variation distance between the underlying distributions of the new task and the tasks observed during the training process. Our proof techniques rely on the connections between algorithmic stability and generalization bounds of algorithms. In particular, we propose a new definition of stability for meta-learning algorithms, which allows us to capture the role of both the number of tasks $m$ and number of samples per task $n$ on the generalization error of MAML. 
\end{abstract}

\section{Introduction}
In several machine learning problems, it is of interest to design algorithms that can be adjusted based on previous experiences and tasks to perform better on a new task. In particular, meta-learning algorithms achieve such a goal through various approaches, including finding a proper meta-initialization for the new task \cite{finn17a, Reptile, khodak2019adaptive}, updating the model architecture \cite{baker2016designing, zoph2016neural, zoph2018learning}, or learning the parameters of optimization algorithms \cite{ravi2016optimization, NIPS2016_Andry}.

A popular meta-learning framework that has shown promise in practice is Model-Agnostic Meta-Learning (MAML), which was first introduced in \cite{finn17a}. MAML algorithm uses available training data on a number of tasks to come up with a meta-initialization that performs well after it is slightly updated at test time with respect to the new task. 
In other words, unlike standard supervised learning, in which we aim to find a model that generalize well to a new task \textit{without any adaptation step}, in MAML our goal is to find an initial model for learning a new task when \textit{we have access to limited labeled data for that task} to run one (or a few) step(s) of stochastic gradient descent (SGD). 

As shown in Fig.~\ref{fig_MAML}, in MAML we are given $m$ tasks with $m$ corresponding datasets $\{\matS_i\}_{i=1}^m$ in the training phase. Once the model is trained ($w_{\text{train}}^*$), a new task is revealed at test time for which we have access to $K$ \textit{labeled} samples drawn from $\D_{\text{test}}$. We use these labeled samples of the new task to update the trained model by running a step of SGD leading to a new model for the test task ($w_{\text{new}}^*$). We finally evaluate the performance of the updated model over the test task, denoted by $\mathcal{L}_{test}(w_{\text{new}}^*)$. 

MAML and its variants have been  extensively studied over the past few years from both empirical and theoretical point of view \cite{Reptile, MAML++, Meta-SGD, grant2018recasting, alpha-MAML, fallah2019convergence, xu2020meta, ji2020multi, wang2020global}. 
In particular,  \cite{fallah2019convergence} provided convergence guarantees
\begin{wrapfigure}{r}{0.4\textwidth}
\centering
\input{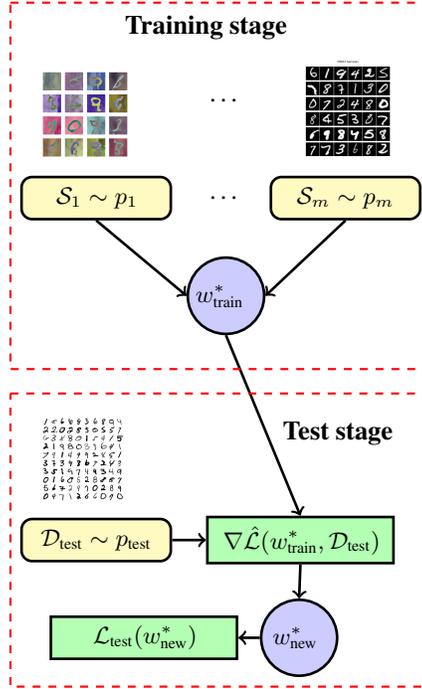}
    \caption{\label{fig_MAML} MAML framework}
    \vspace{-2mm}
\end{wrapfigure}
 for MAML algorithm under the assumption that access to
 fresh samples at any round of the training stage is possible, and \cite{ji2020multi} extended this results to the case that multiple gradient steps can be performed at test time. 
However, one shortcoming of such analysis is that, at training stage, we often do not have access to fresh samples at every iteration. Instead, we have access to a large set of realized samples and we typically do multiple passes over the data points during the training stage. 

Hence, it is essential to come up with a novel analysis that addresses this issue by characterizing the training error and generalization error of MAML separately. In this paper, we accomplish this goal and showcase the role of different problem parameters in the generalization error of MAML. Specifically, we assume that we are given $m$ supervised learning tasks, with (possibly different) underlying distributions $p_1, \ldots p_m$, where for each task we have access to $n$ samples\footnote{More precisely, in our analysis we take $2n$ samples per each task to simplify derivations.}. As we measure the performance of a model by its loss after one step of SGD adaptation with $K$ samples, the problem that one can solve in the training phase is minimizing the average loss, over all given $m$ tasks and their $n$ samples, after one step of SGD with $K$ samples. This empirical loss can be considered as a surrogate for the desired expected loss (with respect to tasks data) over all $m$ tasks. Here, we focus on the case that MAML is used to solve this empirical minimization problem, and our goal is to quantify the test error of MAML output. To tackle this problem, we first briefly revisit the results from the optimization literature to bound the training error of MAML, assuming that the loss functions are strongly convex. We next turn to the main focus of our paper which is the generalization properties of MAML. More specifically, we address the following questions:

$\bullet$ \textit{If one of the $m$ given tasks recurs uniformly at random at test time, then how well (in expectation) would the trained model perform after adaptation with SGD over the fresh samples of that task?} In other words, having training error minimized, what would be the generalization error and our guarantee on test error? Here, we show that for strongly convex objective functions, we could achieve a generalization error that decays at $\bigO(1/mn)$. Our analysis builds on the connections between algorithmic stability and generalization of the output of algorithms. While this relation is well-understood in classic statistical learning \cite{bousquet2002stability, hardt2016train}, here we propose a novel stability definition for meta-learning algorithms which allows us to restore such connection for our setting.

$\bullet$ \textit{Assuming that the task at test time is NOT one of the $m$ tasks at training, how would the model perform on that task after the adaptation step?} We answer this question by focusing on the case that the revealed task at the  test time is a new \textit{unseen} task with underlying data distribution $p_{m+1}$, and formally characterizing the generalization error of MAML in this case. We show that when the task at test time is new, the generalization error also depends on the total variation distance between $p_{m+1}$ and $p_1,\dots,p_m$.
 
\textbf{Related work:}
Recently, there has been significant progress in studying theoretical aspects of meta-learning, in particular, MAML. Authors in \cite{NEURIPS2019_072b030b} proposed iMAML which updates the model using an approximation of one step of proximal point method and studied its convergence. In  
\cite{collins2020task}, authors introduced the task-robust MAML by considering a minimax formulation rather than minimization. Several papers have also studied MAML through more general frameworks such as bilevel optimization \cite{likhosherstov2020ufo}, stochastic compositional optimization \cite{chen2020solving}, and conditional stochastic optimization \cite{Hu2020BiasedSG}. Also, several works have studied the extension of meta-learning theory to online learning \cite{finn19a, khodak2019adaptive}, federated learning \cite{fallah2020personalized}, and reinforcement learning \cite{liu2019taming, fallah2020provably}.

The most relevant paper to our work is \cite{chen2020closer} that studies generalization of meta-learning algorithms using stability techniques and shows a $\bigO(1/\sqrt{m})$ bound for nonconvex loss functions. Here we focus on strongly convex objective functions and present an analysis that differs from this work in two fundamental aspects.
First, we present a different notion of stability that allows us to capture the number of data points per task in our bound. In particular, our stability notion measures sensitivity of the algorithm to perturbations that involve changing $K$ data points which is the data unit involved in the adaptation step of the MAML algorithm. This enables us to obtain a much tighter bound $\bigO(1/mn)$ (compared to $\bigO(1/m)$ achieved in \cite{chen2020closer} for strongly convex functions), highlighting the dependence on the number of the data samples available for each task. Second, we also consider the generalization of MAML for the case that the task at test time is not one of the available tasks during the training stage.

The generalization of MAML has also been studied in \cite{guiroy2019towards} from an empirical point of view. In particular, they show that the generalization of MAML to new tasks is correlated with the coherence between their adaptation trajectories in parameter space. This is aligned with the connection of generalization and closeness of underlying distributions that we observe in our results.

\section{Problem formulation}\label{sec:Formulation}
In this paper, we consider the supervised learning setting, where each data point is denoted by $z=(x,y) \in \Z$ with $x \in \mathcal{X}$ being the input (feature vector) and $y \in \mathcal{Y}$ being its corresponding  label. We use the loss function
$l: \R^d \times \Z \to \R^{+}$ 
to evaluate the performance of a model parameterized by $w \in \W$, where $\W$ is a convex and closed subset of $\R^d$. In other words, for a data point $z = (x,y) \in \Z$, the loss $\ell(w,z)$ denotes the error of model $w$ in predicting the label $y$ given input $x$.

We consider access to $m$ tasks denoted by $\T_1, \ldots, \T_m$, where the data corresponding to each task $\T_i$ is generated from a distinct distribution $p_i$. The \textit{population loss} corresponding to task $\T_i$ for model $w$ is defined as
$\matL_i(w):= \E_{z \sim p_i}[\ell(w,z)].$

We further use the notation $\hmatL(w;\D)$ to denote the \textit{empirical loss} corresponding to dataset $\D$, which is defined as the average loss of $w$ over the samples of dataset $\D$, i.e., 
$\hmatL(w;\D) := \frac{1}{|\D|} \sum_{z \in \D} \ell(w,z),
$ 
where $|\D|$ is the size of dataset $\D$. In general, and throughout the paper, we use the \textit{hat notation} to distinguish \textit{empirical} losses from population losses. 

Our goal is to find $w \in \W$ that performs well on average\footnote{Our analysis can be extended to the case that the distribution over tasks is not uniform.} over all tasks, after it is updated with respect to the new task and by using one step of stochastic gradient descent (SGD) with a batch of size $K$. To formally introduce this problem we first define the function $F_i(w)$ which captures the performance of model $w$ over task $\T_i$ once it is updated by a single step of SGD,
\begin{align}\label{eqn:population_prob:b}
F_i(w):= \E_{\D_i^{\text{test}}} \left[ \matL_i \left (w - \alpha \nabla \hmatL(w, \D_i^{\text{test}}) \right ) \right] 
= \E_{\D_i^{\text{test}}} \E_{z \sim p_i} \!\bigg [ \ell \bigg (w - \frac{\alpha}{K} \!\sum_{z' \in \D_i^{test}}\! \nabla \ell(w,z'),z \bigg ) \bigg]    
\end{align}
where $\D_i^{\text{test}}$ is a batch with $K$ different samples, drawn from the probability distribution $p_i$. Note that the outer expectation is taken with respect to the choice of elements of $\D_i^{\text{test}}$ while the inner one is taken with respect to the data of task $i$.

As our goal is to find a model that performs well after one step of adaptation over all $m$ tasks, we minimize the average expected loss over all given tasks, which can be written as 
\begin{equation}\label{eqn:population_prob}
    \min_{w \in \W} F(w) := \frac{1}{m} \sum_{i=1}^m F_i(w). 
\end{equation}
As the underlying distribution of tasks are often unknown in most applications, we are often unable to directly solve the problem in \eqref{eqn:population_prob}.
 On the other hand, for each task, we often have access to data points that are drawn according to their data distributions. Therefore, instead of solving \eqref{eqn:population_prob}, we solve its sample average surrogate problem in which each $F_i$ is approximated by its empirical loss.

To formally define the empirical loss for each task, suppose for each task $\T_i$ we have access to a training set $\matS_i$, where its elements are drawn independently according to the probability distribution  $p_i$. We further divide the set $\matS_i$ into two disjoint sets of size $n$ defined as $\matS_i^\tein$ and $\matS_i^\teout$, i.e., $\matS_i := \{\matS_i^\tein, \matS_i^\teout\}$ and $|\matS_i^\tein|=|\matS_i^\teout|=n$.
Here, we use the elements of the $\matS_i^{\tein}$ to estimate the inner gradient $\nabla \hmatL(w, \D_i^{\text{test}})$ and use the samples in the set $\matS_i^{\teout}$ to estimate the outer function $\matL_i(.)$. Specifically, we define the sample average of $F_i$ using data sets $\matS_i^\tein$ and $\matS_i^\teout$ as
\begin{align}\label{eqn:empirical_prob:b}
\hF_i(w, \matS_i) \!:&=\! \frac{1}{\binom{n}{K}}\! \sum_{\substack{\D_i^\tein \subset \matS_i^\tein |\D_i^\tein| = K}} \!\hmatL \left (w - \alpha \nabla \hmatL(w,\D_i^\tein), \matS_i^\teout \right) \\
 & = \frac{1}{\binom{n}{K}} \sum_{\substack{\D_i^\tein \subset \matS_i^\tein |\D_i^\tein| = K}}
 \frac{1}{n} \sum_{\substack{z\in \matS_i^{\teout}}}\! \ell \left (w -   \frac{\alpha}{K} \!\sum_{z' \in \D_i^\tein} \nabla \ell(w,z'),z \right ). \nonumber
\end{align}
This expression shows that we use all $n$ elements of $\matS_i^\teout$ to approximate the expectation  required for the computation of $\matL_i$, and we approximate the expectation with respect to the test set by averaging over all subsets of $\matS_i^\tein$ that have $K$ elements. Given this expression, the sample average approximation (empirical loss) of Problem~\eqref{eqn:population_prob:b} is given by
\begin{equation}\label{eqn:empirical_prob}
\argmin_{w \in \W} \hF(w, \matS):= \frac{1}{m} \sum_{i=1}^m \hF_i(w, \matS_i),
\end{equation}
where $\matS := \{\matS_i\}_{i=1}^m$ is defined as the concatenation of all tasks data sets.

Having the dataset $\matS$, a (possibly randomized) optimization algorithm $\A$ with output $\A(\matS)$ can be used to find an approximate solution to the problem in \eqref{eqn:empirical_prob}. The error of this solution with respect to the MAML empirical loss, i.e., $ \hF(\A(\matS), \matS) - \min_\W \hF(., \matS)$, is called \textit{training error}. In this paper, we are mainly interested to bound the \textit{test error} which is the error of $\A(\matS)$ with respect to the population loss,  i.e., $F(\A(\matS)) - \min_\W F$. The test error is also sometimes called \textit{excess (population) loss}. 
Note that the expected test error can be decomposed into three terms:
\begin{align*}
& \E_{\A, \matS} \left [ F(\A(\matS)) \!-\! \min_\W F \right ] \quad \text{(test error)} = \\    
& \underbrace{\E_{\A, \matS} \left [ F(\A(\matS))\! -\! \hF(\A(\matS), \matS) \right ]}_{\text{generalization error}}
+ \underbrace{\E_{\A, \matS} \left [\hF(\A(\matS), \matS)\! -\! \min_\W \hF(., \matS) \right ]}_{\text{training error}}
+ \underbrace{\E_{\matS} \left [ \min_\W \hF(., \matS)\right ] \!-\! \min_\W F}_{\leq 0}.  
\end{align*}

It can be verified that the expectation of the third term (over $\A$ and $\matS$) is non-positive since 
$\E_{\matS}[\min_\W \hF(., \matS)] \leq \min_\W \E_{\matS}[\hF(., \matS)]$ and  
$\E_{\matS}[\hF(., \matS)] = F.$ 
Hence, to bound the expected test error, we should bound the expectation of training and generalization errors. 

The Model-Agnostic Meta-Learning (MAML) method proposed in \cite{finn17a} is designed to solve the empirical minimization problem defined in \eqref{eqn:empirical_prob}. The steps of MAML are outlined in Algorithm~\ref{Algorithm1}. MAML solves Problem~\eqref{eqn:empirical_prob} by using SGD update for the average loss function $\hF(w, \matS)$. To better highlight this point, note that the gradient of $\nabla \hF(w, \matS)$ can be written as $\frac{1}{m}\sum_{i=1}^m \nabla \hF_i(w, \matS_i) $, where the $i$-th term corresponding to task $\T_i$ is given by
\begin{align}\label{eqn:grad_empirical}
\nabla \hF_i(w, \matS_i) & = \frac{1}{\binom{n}{K}} \sum_{\substack{\D_i^\tein \subset \matS_i^\tein \\ |\D_i^\tein| = K}}
\Bigg [ (I_d -  \alpha \nabla^2 \hmatL(w,\D_i^\tein) )  
 \times \nabla \hmatL \left (w - \alpha \nabla \hmatL(w,\D_i^\tein), \matS_i^\teout \right) \Bigg ],
\end{align}
which involves the second-order information of the loss function. 
Therefore, to compute a mini-batch approximation for the above gradient, we consider the batches $\D_i^\tein \subset \matS_i^\tein$ with size $K$ and $\D_i^\teout \subset \matS_i^\teout$ with $b$ elements. Replacing the above sums with their batch approximations leads to the following stochastic gradient approximation 
\begin{align}\label{eqn:est_grad_empirical}
& g_i(w;\D_i^\tein, \D_i^\teout) := 
  (I_d -  \alpha \nabla^2 \hmatL(w,\D_i^\tein) ) \nabla \hmatL \left (w - \alpha \nabla \hmatL(w,\D_i^\tein), \D_i^\teout \right),	
\end{align}
which is indeed an unbiased estimator of the gradient $\nabla \hF_i(w, \matS_i)$ in \eqref{eqn:grad_empirical}. If for each task we perform the update of SGD with $g_i$ and then compute their average it would be similar to running SGD for the average loss $\nabla \hF(w, \matS)$. This is exactly how MAML is implemented in practice as outlined in 
Algorithm \ref{Algorithm1}. In this paper, we consider a constrained problem, and as a result, we also need an extra projection step in the last step to ensure the feasibility of iterates. Finally, the output of MAML could be the last iterate $w^T$ or the time-average of all iterates $\bar{w}^T := \frac{1}{T+1} \sum_{t=0}^{T} w^t$.

\begin{algorithm}[tb]
\caption{MAML \citep{finn17a}}
\label{Algorithm1}
	\begin{algorithmic}
    \STATE {\bfseries Input:} The set of datasets $\matS = \{\matS_i\}_{i=1}^m$ with $\matS_i = \{\matS_i^\tein, \matS_i^\teout\}$; test time batch size $K$; \# of tasks summoned at each round $r$; \# of iterations $T$. 
    \STATE Choose arbitrary initial point $w^0 \in \W$;
	\FOR{$t=0$ to $T-1$}
    	\STATE Choose $r$ tasks uniformly at random (out of $m$ tasks) and store their indices in $\B_t$;
    		\FOR{all $\T_i$ with $i \in \B_t$}
    		\STATE Sample a batch $\D_i^{t,\tein}$ of $K$ different elements from $\matS_i^\tein$ with replacement;
    		\STATE Sample a batch $\D_i^{t,\teout}$ of size $b$ from $\matS_i^\teout$ and with replacement;
    		\STATE $w^{t+1}_i :=  w^t - \beta_t \left (I_d -  \alpha \nabla^2 \hmatL(w^t,\D_i^{t,\tein}) \right) 
    		\nabla \hmatL \left (w^t - \alpha \nabla \hmatL(w^t,\D_i^{t,\tein}), \D_i^{t,\teout} \right)$;
    		\ENDFOR
        \STATE $w^{t+1} := r_\W \left( \frac{1}{r} \sum_{i \in \B_t} w^{t+1}_i \right) $;
    \ENDFOR
    \STATE  {\bfseries Return:} $w^T$ and $\bar{w}^T := \frac{1}{T+1} \sum_{t=0}^{T} w^t$ 
    \end{algorithmic}
\end{algorithm}

As stated earlier, the convergence properties of MAML-type methods from an optimization point of view have been studied recently under different set of assumptions. In this paper, as we characterize the sum of training error and generalization error, we briefly discuss the optimization error of MAML when it is used to solve the empirical problem in \eqref{eqn:empirical_prob}.
 However, the main focus of this paper is on studying the generalization error of MAML with respect to new samples and new tasks. Specifically, we aim to address the following questions: \textbf{(i)} How well does the solution of \eqref{eqn:empirical_prob} \textit{generalize} to the main problem of interest in \eqref{eqn:population_prob}? This could be seen as the \textit{generalization error} of the MAML algorithm over new samples for recurring tasks. \textbf{(ii)} How well does the solution of \eqref{eqn:empirical_prob} \textit{generalize} to samples from new unseen tasks? To be more precise, how would the obtained model preform if the new task is not one of the $m$ tasks $\T_1, \ldots, \T_m$ observed at training, and it is rather a new, \textit{unseen task} $\T_{m+1}$ with an unknown underlying distribution $p_{m+1}$? In the upcoming sections, we answer these questions on the generalization properties of MAML in detail and characterize the role of number of tasks $m$, number of samples per task $n$, and number of labeled samples revealed at test time $K$.
 
\section{Theoretical results}\label{sec:Theory}

In this section, we formally characterize the excess population loss (test error) of the MAML solution, when we measure the performance of a model after one step of SGD adaptation. In particular, we first discuss the training error of MAML in detail. Then, we establish a generalization error bound for the case that the solution of MAML is evaluated over new samples of a recurring task. Finally, we state the generalization error of MAML once its solution is applied to a new unseen task.  
Before stating our results, we mention our required assumptions.

\begin{assumption}\label{assumption_lipschitz}
For any $z \in \Z$, the function $\ell(.,z)$ is twice continuously differentiable. Furthermore, we assume it satisfies the following properties for any $w,u \in \R^d$:

\noindent\textbf{(i)} For any $z \in \Z$, the function $\ell(.,z)$ is $\mu$-strongly convex, i.e., $\Vert \nabla \ell(w,z) - \nabla \ell(u,z) \Vert \geq \mu \Vert w - u \Vert$;

\noindent\textbf{(ii)} The gradient norm is uniformly bounded by $G$ over $\W$, i.e., $\Vert \nabla \ell(w,z) \Vert \leq G$;

\noindent\textbf{(iii)} The loss is $L$-smooth over $\R^d$, i.e., $\Vert \nabla \ell(w,z) - \nabla \ell(u,z) \Vert \leq L \Vert w - u \Vert$;

\noindent\textbf{(iv)} Hessian is $\rho$-Lipschitz continuous over $\R^d$, i.e., $\Vert \nabla^2 \ell(w,z) - \nabla^2 \ell(u,z) \Vert \leq \rho \Vert w - u \Vert$.
\end{assumption}
We also require the following assumption on the tasks distribution. This assumption implies that, with probability one, a set of finite samples generated from a distribution $p_i$ are all different.
\begin{assumption}\label{assumption_non_atmoic}
We assume $\Z$ is a Polish space (i.e., complete, separable, and metric) and $\F_\Z$ is the Borel $\sigma$-algebra over $\Z$. Moreover, for any $i$, $p_i$ is a non-atomic probability distribution over $(\Z, \F_\Z)$, i.e., $p_i(z)=0$ for every $z \in \Z$.	
\end{assumption}  

\subsection{Training error}
While the main focus of this paper is on studying the population error of MAML algorithm, we first study its training error which is required  to provide characterization of the excess loss of MAML. To do so, we first state the following result from \citep{finn19a} and \citep{fallah2019convergence} on the strong convexity and smoothness of $\ell(w - \alpha \nabla \hmatL(w,\D), z)$ for any batch $\D$ and any $z \in \Z$. 
\begin{lemma}[\citep{fallah2019convergence} $\&$ \citep{finn19a}]
\label{lemma:F_smoothness_convexity}
If Assumption \ref{assumption_lipschitz} holds, then for an arbitrary batch $\D$ and $z \in \Z$, and with $\alpha \leq \tfrac{1}{L}$, the function $\ell (w - \alpha \nabla \hmatL(w,\D), z )$  is $4L+2\alpha \rho G$ smooth over $\W$. Furthermore,  $\ell (w - \alpha \nabla \hmatL(w,\D), z )$  is $\tfrac{\mu}{8}$-strongly convex, if $\alpha \leq \min\{\tfrac{1}{2L}, \tfrac{\mu}{8\rho G}\}$.	
\end{lemma}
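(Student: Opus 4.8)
The plan is to work directly with the composite one-step map. Define the inner (SGD) map $\Phi(w) := w - \alpha \nabla \hmatL(w,\D)$ and write the function of interest as $g(w) := \ell(\Phi(w),z)$. First I would note that, being an average of functions each satisfying Assumption~\ref{assumption_lipschitz}, the empirical loss $\hmatL(\cdot,\D)$ itself inherits all four properties: it is $\mu$-strongly convex, $L$-smooth, has $\rho$-Lipschitz Hessian, and $\Vert \nabla \hmatL(w,\D)\Vert \le G$. In particular the eigenvalues of $\nabla^2 \hmatL(w,\D)$ all lie in the interval $[\mu, L]$, which is the fact that drives every subsequent estimate.

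Next I would compute the first two derivatives of $g$ by the chain rule. The Jacobian of the inner map is the symmetric matrix $P(w) := \nabla \Phi(w) = I_d - \alpha \nabla^2 \hmatL(w,\D)$, so $\nabla g(w) = P(w)\,\nabla \ell(\Phi(w),z)$, and differentiating once more yields
$$\nabla^2 g(w) = P(w)\,\nabla^2 \ell(\Phi(w),z)\,P(w) \;-\; \alpha\,\nabla^3 \hmatL(w,\D)\big[\nabla \ell(\Phi(w),z)\big],$$
where the last term is the third-derivative tensor of $\hmatL$ contracted against the vector $\nabla \ell(\Phi(w),z)$. For smoothness I would bound operator norms termwise: with $\alpha \le 1/L$ the eigenvalues of $\nabla^2 \hmatL$ in $[\mu,L]$ give $\Vert P(w)\Vert \le 1+\alpha L \le 2$; $L$-smoothness gives $\Vert \nabla^2 \ell\Vert \le L$; and the $\rho$-Lipschitz-Hessian property bounds the operator norm of $\nabla^3 \hmatL$ by $\rho$, so the tensor term is at most $\alpha\rho\Vert \nabla \ell\Vert \le \alpha\rho G$. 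Combining gives $\Vert \nabla^2 g(w)\Vert \le 4L + \alpha\rho G \le 4L + 2\alpha\rho G$.

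For the strong convexity claim I would argue in the positive-semidefinite ordering. Since $\nabla^2 \ell(\Phi(w),z) \succeq \mu I_d$ and $P(w)$ is symmetric, we get $P\,\nabla^2 \ell\,P \succeq \mu P^2$; and under $\alpha \le 1/(2L)$ every eigenvalue of $P(w)$ is at least $1-\alpha L \ge 1/2$, so $P^2 \succeq \tfrac14 I_d$ and the first term is $\succeq \tfrac{\mu}{4} I_d$. The second (tensor) term has operator norm at most $\alpha\rho G$, which is $\le \mu/8$ once $\alpha \le \mu/(8\rho G)$, so it can subtract at most $\tfrac{\mu}{8} I_d$. Therefore $\nabla^2 g(w) \succeq \tfrac{\mu}{4} I_d - \tfrac{\mu}{8} I_d = \tfrac{\mu}{8} I_d$, which is exactly the claimed constant and explains the tighter step-size restriction needed here relative to smoothness.

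The main obstacle is the careful handling of the third-order term. I need to justify that the $\rho$-Lipschitz-Hessian assumption indeed controls the operator norm of $\nabla^3 \hmatL$ contracted with a vector (rather than merely the trilinear-form norm), and that this contraction produces a \emph{symmetric} matrix, which is essential for the PSD comparison in the strong-convexity half. The matrix inequality $P\,\nabla^2 \ell\,P \succeq \mu P^2$ likewise relies on $P$ being symmetric and positive semidefinite, which is precisely why $\alpha \le 1/(2L)$ (forcing $1-\alpha L \ge 1/2 > 0$) is imposed for strong convexity while $\alpha \le 1/L$ suffices for smoothness.
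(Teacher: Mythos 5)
The paper does not actually prove this lemma --- it is imported verbatim from \cite{fallah2019convergence} and \cite{finn19a} --- so there is no in-paper argument to compare against. Your chain-rule/spectral reconstruction is the standard route those works take, and the skeleton is right: $\nabla^2 g(w) = P(w)\,\nabla^2 \ell(\Phi(w),z)\,P(w) - \alpha\,\nabla^3\hmatL(w,\D)\bigl[\nabla\ell(\Phi(w),z)\bigr]$ with $P(w)=I_d-\alpha\nabla^2\hmatL(w,\D)$, eigenvalues of $P$ in $[1-\alpha L,\,1-\alpha\mu]$, the identity $P\,\nabla^2\ell\,P-\mu P^2=P(\nabla^2\ell-\mu I_d)P\succeq 0$, and the symmetric contraction $\nabla^3\hmatL[v]$ bounded in operator norm by $\rho\|v\|$. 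Two caveats. First, Assumption~\ref{assumption_lipschitz} only asserts twice continuous differentiability, so $\nabla^3\hmatL$ need not exist pointwise; you flagged this yourself, and it is handled in the standard way (Rademacher plus the $\rho$-Lipschitz Hessian, or by replacing the Hessian formula with a Lipschitz estimate on $\nabla g$).

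The second caveat is more substantive and is exactly where your constant is tight. You bound $\|\nabla\ell(\Phi(w),z)\|\le G$, but Assumption~\ref{assumption_lipschitz}(ii) bounds the gradient only over $\W$, and $\Phi(w)=w-\alpha\nabla\hmatL(w,\D)$ generally leaves $\W$. The paper's own Lemma~\ref{lemma:ext_lipschitz} handles this via smoothness and obtains $\|\nabla\ell(\Phi(w),z)\|\le(1+\alpha L)G\le 2G$. For smoothness this is harmless --- you get $4L+2\alpha\rho G$ either way. But in the strong-convexity half your budget is exact: with $\|\nabla\ell(\Phi(w),z)\|\le G$ you get $\tfrac{\mu}{4}-\tfrac{\mu}{8}=\tfrac{\mu}{8}$, whereas with the honest bound $(1+\alpha L)G\le\tfrac32 G$ (under $\alpha\le\tfrac{1}{2L}$) the tensor term can be as large as $\tfrac{3\mu}{16}$ and you only recover $\tfrac{\mu}{16}$-strong convexity under $\alpha\le\tfrac{\mu}{8\rho G}$. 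The cited works assume the gradient bound globally, which is what makes the stated constant $\tfrac{\mu}{8}$ come out; under the paper's literal hypotheses you should either invoke a global gradient bound, shrink $\alpha$ to $\tfrac{\mu}{12\rho G}$ (say), or accept the slightly smaller strong-convexity modulus --- none of which affects anything downstream, where only the order $\Theta(\mu)$ matters.
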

An immediate consequence of this Lemma is that the MAML empirical loss $\hF$ defined in \eqref{eqn:empirical_prob} is also $\mu/8$-strongly convex and $4L+2\alpha \rho G$ smooth over $\W$. In addition, it can be shown that the norm of $g_i(w;\D_i^\tein, \D_i^\teout)$ defined in \eqref{eqn:est_grad_empirical}, which is the unbiased gradient estimate used in MAML, is uniformly bounded above; for more details check Lemma \ref{lemma:variance} in Appendix \ref{sec:intermediate_results}.
Having these properties of the MAML empirical loss established, we next state the following proposition on the training error of MAML. This result is obtained by
slightly modifying the well-known results on the convergence of SGD in \citep{rakhlin2011making, hazan2007logarithmic, nemirovski2009robust} in order to take into account the stepsize constraints that are imposed by generalization analysis. 
For completeness, the proof of this result is provided  in Appendix \ref{proof-proposition:main_training}.
\begin{proposition}
\label{proposition:main_training}
Consider $\hF(., \matS)$ defined in \eqref{eqn:empirical_prob} with $\alpha \leq \min\{\tfrac{1}{2L}, \tfrac{\mu}{8\rho G}\} $. If Assumption \ref{assumption_lipschitz} holds, then for MAML with $\beta_t = \min (\beta, \frac{8}{\mu (t+1)})$ for $\beta \leq 8/\mu$, and for any set $\matS$, the last iterate $w^T$ satisfies
\begin{align}\label{eqn:opt_last_iterate}
\E & \left [ \hF(w^T, \matS) - \hF(w^*_\matS, \matS) \right]  
\leq \bigO(1) \frac{G^2(1+\tfrac{1}{\beta \mu})}{\mu^2} \left ( \frac{L+ \rho \alpha G}{T} + \frac{G}{\sqrt{T}} \right ) , 
\end{align}
and the time-average of iterates $\bar{w}^T$ satisfies
\begin{align*}
\E & \left [ \hF(\bar{w}^T, \matS) - \hF(w^*_\matS, \matS) \right] \leq \bigO(1) \frac{G^2 (\log(T)+\tfrac{1}{\beta \mu})}{\mu T},
\end{align*}
where $w^*_\matS := \argmin_{w \in \W} \hF(., \matS)$ and the expectations are taken over the randomness of algorithm.
\end{proposition}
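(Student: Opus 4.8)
The plan is to recognize MAML (Algorithm~\ref{Algorithm1}) as projected stochastic gradient descent applied to the empirical objective $\hF(\cdot,\matS)$ and then invoke the standard machinery for strongly convex stochastic optimization, adapted to the capped stepsize schedule $\beta_t=\min(\beta,\tfrac{8}{\mu(t+1)})$. The three ingredients I would assemble first are: (i) by Lemma~\ref{lemma:F_smoothness_convexity} and its corollary, $\hF(\cdot,\matS)$ is $\tilde\mu:=\mu/8$-strongly convex and $\tilde L:=4L+2\alpha\rho G$-smooth over $\W$; (ii) writing the averaged update direction as $\hat g_t:=\tfrac1r\sum_{i\in\B_t} g_i(w^t;\D_i^{t,\tein},\D_i^{t,\teout})$, the uniform sampling of $\B_t$ together with the unbiasedness of each $g_i$ noted after \eqref{eqn:est_grad_empirical} gives $\E[\hat g_t\mid \F_t]=\nabla\hF(w^t,\matS)$; and (iii) a uniform second-moment bound $\E[\norm{\hat g_t}^2\mid\F_t]\le \sigma^2=\bigO(G^2)$, which is exactly the content of Lemma~\ref{lemma:variance}. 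Here $\F_t$ denotes the $\sigma$-algebra generated by the iterates up to time $t$, and the projection $r_\W$ onto the convex set $\W$ is nonexpansive.

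The second step is the one-step recursion. Using nonexpansiveness of $r_\W$ together with $w^*_\matS\in\W$, I would expand $\norm{w^{t+1}-w^*_\matS}^2\le\norm{w^t-\beta_t\hat g_t-w^*_\matS}^2$, take conditional expectation, and apply the strong-convexity inequality $\langle\nabla\hF(w^t),w^t-w^*_\matS\rangle\ge \hF(w^t)-\hF(w^*_\matS)+\tfrac{\tilde\mu}{2}\norm{w^t-w^*_\matS}^2$. This yields the master inequality
\begin{equation*}
\E[\norm{w^{t+1}-w^*_\matS}^2\mid\F_t]\le (1-\tilde\mu\beta_t)\norm{w^t-w^*_\matS}^2-2\beta_t\big(\hF(w^t)-\hF(w^*_\matS)\big)+\beta_t^2\sigma^2 .
\end{equation*}

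For the time-average iterate, I would rearrange this to isolate $\hF(w^t)-\hF(w^*_\matS)$, split the horizon into the constant-stepsize phase ($\beta_t=\beta$, lasting until $t+1\approx \tfrac{8}{\mu\beta}=\tfrac1{\tilde\mu\beta}$) and the decaying phase ($\beta_t=\tfrac1{\tilde\mu(t+1)}$), and telescope. In the decaying phase the coefficients collapse so that the $\norm{\cdot}^2$ terms telescope and the residual is $\tfrac{\sigma^2}{2\tilde\mu}\sum_t\tfrac1{t+1}=\bigO(\tfrac{\sigma^2\log T}{\tilde\mu})$, while the constant phase contributes the $\tfrac1{\beta\mu}$ term. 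Dividing by $T$ and applying Jensen to $\bar w^T$ gives $\E[\hF(\bar w^T,\matS)-\hF(w^*_\matS,\matS)]=\bigO(\tfrac{G^2(\log T+1/(\beta\mu))}{\mu T})$, which is the claimed time-average bound after substituting $\tilde\mu=\mu/8$ and $\sigma^2=\bigO(G^2)$.

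The last-iterate bound is the delicate part and the main obstacle. The naive route---keeping $t=T$ in the rearranged master inequality---fails because it divides by $\beta_T=\bigO(1/T)$, so the coefficient of $\norm{w^T-w^*_\matS}^2$ blows up. To avoid this I would adapt the last-iterate arguments of \citep{rakhlin2011making,nemirovski2009robust}: first convert the master inequality into a pure distance recursion and, by unrolling across both stepsize phases, bound $\E\norm{w^T-w^*_\matS}^2$; then pass to the function value both through smoothness, $\hF(w^T)-\hF(w^*_\matS)\le \tfrac{\tilde L}{2}\norm{w^T-w^*_\matS}^2$, which produces the $\tfrac{L+\rho\alpha G}{T}$ term, and through a suffix-type robust estimate that accounts for the capped schedule and yields the slower $\tfrac{G}{\sqrt T}$ term and the $(1+\tfrac1{\beta\mu})$ prefactor. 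Care must be taken to track every constant introduced by the cap at $\beta$, since the generalization analysis later fixes $\beta$; collecting constants via $\tilde\mu=\mu/8$, $\tilde L=4L+2\alpha\rho G$, and $\sigma=\bigO(G)$ then gives the stated last-iterate bound.
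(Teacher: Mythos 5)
Your overall route is the paper's route: recognize MAML as projected SGD on $\hF(\cdot,\matS)$, use Lemma~\ref{lemma:F_smoothness_convexity} and Lemma~\ref{lemma:variance} for the strong convexity, smoothness, unbiasedness and the $\bigO(G^2)$ second-moment bound, derive the one-step distance recursion, and handle the two stepsize phases separately. Your treatment of the time-average iterate (rearrange, telescope after checking $\tfrac{1}{\beta_{t+1}}-\tfrac{1}{\beta_t}-\lambda\le 0$, divide by $T$, apply Jensen) is exactly the paper's Hazan-style argument and is fine.

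There is, however, a genuine gap in the last-iterate part: your stated conversion $\hF(w^T)-\hF(w^*_\matS)\le\tfrac{\tilde L}{2}\|w^T-w^*_\matS\|^2$ is false in the constrained setting, because $w^*_\matS$ is a minimizer over $\W$ and $\nabla\hF(w^*_\matS,\matS)$ need not vanish. The correct conversion is Lemma~\ref{lemma:basic}: $\hF(w)-\hF(w^*_\matS)\le\tfrac{\tilde L}{2}\|w-w^*_\matS\|^2+\tG\,\|w-w^*_\matS\|$, where the linear term bounds $\langle\nabla\hF(w^*_\matS),w-w^*_\matS\rangle$ via the gradient bound. This cross term is precisely the source of the $G/\sqrt{T}$ contribution in \eqref{eqn:opt_last_iterate}: once you have $\E\|w^T-w^*_\matS\|^2\le\tfrac{\tG^2(t^*+3)}{\lambda^2(T+1)}$ with $t^*=\floor*{\tfrac{1}{\beta\lambda}}$ (which also supplies the $(1+\tfrac{1}{\beta\mu})$ prefactor for \emph{both} terms), you apply $\E\|w^T-w^*_\matS\|\le\sqrt{\E\|w^T-w^*_\matS\|^2}=\bigO(1/\sqrt{T})$ to the linear term. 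No suffix-averaging or robust last-iterate machinery in the style of \cite{rakhlin2011making} is needed or used; indeed such arguments target the $\bigO(\log T/(\mu T))$ regime and would not naturally produce a $1/\sqrt{T}$ term. The paper's remark that the $G/\sqrt{T}$ term vanishes when $\nabla\hF(w^*_\matS,\matS)=0$ confirms that this term is an artifact of the constraint set, not of the stepsize cap. You should replace the ``suffix-type robust estimate'' step with an application of Lemma~\ref{lemma:basic} plus Jensen's inequality; the rest of your plan then goes through.
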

 
 In the above expressions, the notation $\bigO(1)$ only hides absolute constants. It is worth noting that the term $G/\sqrt{T}$ in \eqref{eqn:opt_last_iterate} vanishes, if $w^*_\matS$ be a minimizer of the unconstrained problem, i.e., $\nabla \hF(w^*_\matS, \matS) = 0$.
\subsection{Generalization error}
We derive our generalization bounds for MAML by establishing its algorithmic stability properties. The stability approach has been used widely to characterize the generalization properties for optimization algorithms such as stochastic gradient descent \citep{hardt2016train} or differentially private methods \citep{bassily2019private}. These arguments are based on showing the uniform stability of algorithms \citep{bousquet2002stability} which we restate it here.

\begin{definition}[\citep{bousquet2002stability}]\label{definition: uniform_stability}
Consider the problem of minimizing the empirical function $\hmatL(w, \matH)$ for some dataset $\matH$. A randomized algorithm $\A$ with output $w_\matH$  given dataset $\matH$ is called $\gamma$-uniformly stable if the following condition holds: Take the dataset $\tilde{\matH}$ which is the same as $\matH$, except at one data points. Then, we have 
$
\sup_{\tilde{z} \in \Z}\ \E_\A \left [ 
\left |\ell (w_\matH , \tilde{z})
- \ell (w_{\tilde{\matH}} , \tilde{z} ) \right |
\right ] 
\leq \gamma,	
$
where the expectation is taken over the randomness of $\A$.
\end{definition}
The above definition captures the stability of an algorithm. Specifically, it states that Algorithm $\A$ is $\gamma$-stable, if the resulting loss of its outputs, when it is run using to two different datasets that only differ in one data point, are at most $\gamma$ away from each other. Note that the above definition holds if the difference between the losses \textit{evaluated at any point $\tilde{z}$} is bounded by $\gamma$.
The main importance of this definition is its connection with generalization error. In particular, it can be shown that if an algorithm is $\gamma$-uniformly stable and ``symmetric", then its generalization error is bounded above by $\gamma$; see, e.g.,  \citep{bousquet2002stability}. Next, we formally state the definition of a symmetric algorithm.
\begin{definition}
An algorithm $\A: \Z^n \to \R^d$ is called symmetric, if for any $\matS \subset \Z^n$, the distribution of its output, i.e., $\A(\matS)$, does not depend on the ordering of elements of $\matS$, i.e., if we take $\matS'$ as a permutation of $\matS$, the distribution of $\A(\matS)$ and $\A(\matS')$ would be similar. 
\end{definition}

Note that Definition \ref{definition: uniform_stability} is useful for the case where we measure the performance of a model $w$ by its loss  function over a sample, i.e., $\ell (w , \tilde{z})$.
However, in this paper we measure the performance of a model by looking at its loss after one step of SGD which involves $K$ data points,  as defined in \eqref{eqn:est_grad_empirical}. Therefore, we cannot directly use Definition \ref{definition: uniform_stability} for characterizing the generalization error of MAML.  In fact, in what follows, we first propose a modified version of the uniform stability definition, which is compatible with our setting, and then show how such stability could lead to generalization bounds for MAML-type algorithms.

\begin{definition}\label{definition:K_uniform_stability}
Consider the problem in \eqref{eqn:empirical_prob}. A randomized algorithm $\A$ with output $w_\matS$ given dataset $\matS$ is called $(\gamma, K)$-uniformly stable if the following condition holds for any $i \in \{1,\ldots m\}$: Take the dataset $\tilde{\matS}$ which is the same as $\matS$, except that $\tilde{\matS}_i^\tein$ and $\tilde{\matS}_i^\teout$ differ from $\matS_i^\tein$ and $\matS_i^\teout$ in at most $K$ and one data points, respectively. Then, for any $\tilde{z} \in \Z$ and any $K$ distinct points $\{z_1,...,z_K\}$ in $\Z$, 
\begin{align*}
& \E_\A \left [ 
\left |\ell \left (w_\matS - \alpha \nabla \hmatL(w_\matS, \{z_j\}_{j=1}^K) , \tilde{z} \right) 
- \ell \left (w_{\tilde{\matS}} - \alpha \nabla \hmatL(w_{\tilde{\matS}}, \{z_j\}_{j=1}^K) , \tilde{z} \right) \right |
\right ] 
\leq \gamma,	
\end{align*}
where the expectation is taken over the randomness of $\A$.
\end{definition}
A few remarks about the above definition follow. First, one might wonder, why it is needed to change $K$ points of the set $\matS_i^\tein$, while we change only one point of the set $\matS_i^\teout$. Note that, going from \eqref{eqn:population_prob:b} to \eqref{eqn:empirical_prob:b}, the expectation $\E_{D_i^{\text{test}}}[.]$ is replaced by the sum over all $\binom{n}{K}$ possible batches $\D_i^\tein$ of size $K$ from $\matS_i^\tein$. In other words, for the empirical sum in~\eqref{eqn:empirical_prob:b}, each batch $\D_i^\tein$ can be seen as a data unit. That said, and similar to Definition \ref{definition: uniform_stability}, to characterize the stability, we need to change one data unit which is one batch of size $K$. That is why we change $K$ data points of $\matS_i^\tein$ in the definition of $(\gamma, K)$-uniformly stability. On the other hand, we replace $\matL_i(.) = \E_{z \sim p_i}[\ell(.,z)]$ in \eqref{eqn:population_prob:b} with a sum over $n$ points of $\matS_i^\teout$ in \eqref{eqn:empirical_prob:b}, and thus, for this one, each data unit is just a single data point. So, similar to Definition  \ref{definition: uniform_stability}, we just change one data point for the set $\matS_i^\teout$. 
 
Second, it is worth comparing this definition with the other definition given for stability of meta-learning algorithms in  \citep{chen2020closer}. In that paper, the definition of stability is based on modifying the \textit{whole dataset $\matS_i$} rather than what we do here which is changing just $K+1$ points. While taking such a definition makes the analysis relatively simpler, it prohibits us from characterizing the dependence of generalization error on $n$, and hence the resulting upper bound for generalization error would be larger. We will come back to this point later when we derive the stability of MAML with respect to Definition \ref{definition:K_uniform_stability} and compare it with the one obtained in \citep{chen2020closer}.

As we discussed, the main reason that we are interested in the uniform stability of an algorithm is its connection with generalization error. In the next theorem, we formalize this connection for MAML formulation and show that if an Algorithm $\A$ is $(\gamma, K)$-uniformly stable and symmetric, then its output generalization error is bounded above by $\gamma$. The proof of this result is available  in Appendix~\ref{proof-thm:stab_gen}.
\begin{theorem}\label{thm:stab_gen}
Consider the population and empirical losses defined in \eqref{eqn:population_prob} and \eqref{eqn:empirical_prob}, respectively. If Assumption \ref{assumption_non_atmoic} holds and $\mathcal{A}$ is a (possibly randomized) symmetric and $(\gamma, K)$-uniformly stable algorithm with output $w_\matS \in \W$, then  
$
\E_{\A, \matS} \left[ F(w_\matS) - \hF(w_\matS, \matS) \right] \leq \gamma.	
$	
\end{theorem}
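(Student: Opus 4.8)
The plan is to adapt the classical ghost-sample (renaming) argument of Bousquet--Elisseeff to the MAML structure, using the symmetry of $\A$ together with Definition~\ref{definition:K_uniform_stability}. Since $F(w) - \hF(w,\matS) = \frac{1}{m}\sum_{i=1}^m [F_i(w) - \hF_i(w,\matS_i)]$, it suffices to bound $\E_{\A,\matS}[F_i(w_\matS) - \hF_i(w_\matS,\matS_i)]$ by $\gamma$ for each fixed $i$ and then average over $i$.

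First I would rewrite the empirical term as a single representative summand. Because the points of $\matS_i^\tein$ (respectively $\matS_i^\teout$) are i.i.d.\ and $\A$ is symmetric, each of the $\binom{n}{K}\cdot n$ terms in $\hF_i(w_\matS,\matS_i)$ has the same expectation under $\E_\matS$: relabeling the data so that an arbitrary size-$K$ inner subset maps to $\{z_{i,1}^\tein,\dots,z_{i,K}^\tein\}$ and an arbitrary outer point maps to $z_{i,1}^\teout$ leaves the distribution of $w_\matS$ unchanged. Hence $\E_\matS[\hF_i(w_\matS,\matS_i)] = \E_\matS[\ell(w_\matS - \alpha\nabla\hmatL(w_\matS,\D),z_{i,1}^\teout)]$ with $\D=\{z_{i,1}^\tein,\dots,z_{i,K}^\tein\}$.

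Next I would introduce ghost samples: a fresh inner batch $\D_i^{\text{test}}=\{\tilde z_{i,1}^\tein,\dots,\tilde z_{i,K}^\tein\}$ and a fresh outer point $\tilde z_i^\teout$, all drawn from $p_i$ and independent of $\matS$, so that $\E_\matS[F_i(w_\matS)] = \E[\ell(w_\matS - \alpha\nabla\hmatL(w_\matS,\D_i^{\text{test}}),\tilde z_i^\teout)]$. I would then swap these $K+1$ ghost points with $\{z_{i,1}^\tein,\dots,z_{i,K}^\tein\}$ and $z_{i,1}^\teout$. Since all of these points are i.i.d., the swap is measure-preserving; under it the dataset $\matS$ becomes a perturbed dataset $\tilde\matS$ that differs from $\matS$ in at most $K$ points of the inner set and one point of the outer set of task $i$, while the evaluation points become $\D$ and $z_{i,1}^\teout$. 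This yields $\E_\matS[F_i(w_\matS)] = \E[\ell(w_{\tilde\matS} - \alpha\nabla\hmatL(w_{\tilde\matS},\D),z_{i,1}^\teout)]$. Subtracting the two identities, the per-task generalization error equals $\E[\ell(w_{\tilde\matS}-\alpha\nabla\hmatL(w_{\tilde\matS},\D),z_{i,1}^\teout) - \ell(w_\matS-\alpha\nabla\hmatL(w_\matS,\D),z_{i,1}^\teout)]$, which is precisely the quantity controlled by Definition~\ref{definition:K_uniform_stability}. Bounding the difference by its absolute value and invoking $(\gamma,K)$-stability conditionally on the realized data (the $K$ points of $\D$ are distinct almost surely by Assumption~\ref{assumption_non_atmoic}, as the definition requires) gives the bound $\gamma$; averaging over $i$ completes the argument.

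The main obstacle I anticipate is making the swap match the perturbation allowed in Definition~\ref{definition:K_uniform_stability} exactly --- namely that $\tilde\matS$ must differ from $\matS$ in at most $K$ inner points and one outer point --- which forces me to bundle the inner batch ($K$ points) and the single outer evaluation point together as one ``data unit'' when renaming, rather than perturbing one point at a time. A secondary care point is the average-over-subsets (U-statistic) structure of $\hF_i$: I must justify that its expectation collapses to a single representative term, which relies on the symmetry of $\A$ and the i.i.d.\ assumption, and on Assumption~\ref{assumption_non_atmoic} to guarantee that the representative inner batch consists of $K$ distinct points so that the stability definition is applicable.
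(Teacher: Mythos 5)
Your proposal is correct and follows essentially the same route as the paper's proof: reduce to a per-task bound, collapse the U-statistic $\hF_i$ to a single representative summand via symmetry of $\A$ and exchangeability of the i.i.d.\ data, then perform the ghost-sample swap of the $K+1$ representative points to produce a perturbed dataset $\tilde{\matS}$ matching Definition~\ref{definition:K_uniform_stability}, and invoke stability (with Assumption~\ref{assumption_non_atmoic} guaranteeing distinctness of the inner batch). The only cosmetic difference is the direction of the swap --- you re-express $F_i(w_\matS)$ in terms of $w_{\tilde{\matS}}$ at the original points while the paper re-expresses $\hF_i(w_\matS,\matS_i)$ in terms of $w_{\tilde{\matS}}$ at the ghost points --- which is the same measure-preserving renaming argument.
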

This result shows that if we prove a symmetric algorithm is $(\gamma, K)$-uniformly stable as defined in Definition~\ref{definition:K_uniform_stability}, then we can bound its output model generalization error by $\gamma$. Hence, to characterize the generalization error of the model trained by MAML algorithm, we only need to capture the uniform stability parameter of MAML. 
Before stating this result, it is worth noting that while we limit our focus to MAML in this paper, Definition \ref{definition:K_uniform_stability} and Theorem \ref{thm:stab_gen} could provide a framework for studying the generalization properties of a broader class of gradient-based meta-learning algorithms such as Reptile \citep{nichol2018first},  First-order MAML \citep{finn17a}, and Hessian-Free MAML \citep{fallah2019convergence}.

\begin{theorem}\label{thm:stab_MAML}
If Assumption \ref{assumption_lipschitz} holds, then MAML (Algorithm \ref{Algorithm1}) with both last iterate and average iterate outputs and with $\alpha \leq \min\{\tfrac{1}{2L}, \tfrac{\mu}{8\rho G}\} $ and $\beta_t \leq \tfrac{1}{4L+2\alpha \rho G}$ is $(\gamma,K)$-uniformly stable, where
$
\gamma := \bigO(1) \frac{G^2 (1 + \alpha L K)}{mn \mu}.	
$	
\end{theorem}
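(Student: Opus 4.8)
The plan is to convert the required $(\gamma,K)$-stability estimate into a bound on $\E_\A[\norm{w_\matS - w_{\tilde\matS}}]$ via a sensitivity (Lipschitz) argument, and then to control $\E_\A[\norm{w_\matS - w_{\tilde\matS}}]$ by tracking the divergence of two coupled runs of Algorithm~\ref{Algorithm1}, one on $\matS$ and one on the perturbed set $\tilde\matS$ (which differs only in task $i_0$, in at most $K$ points of $\matS_{i_0}^\tein$ and one point of $\matS_{i_0}^\teout$).

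\emph{Step 1 (reduction to iterate stability).} Fix any $\tilde z\in\Z$ and any distinct $\{z_j\}_{j=1}^K$, and consider the map $h(w):=\ell(w-\alpha\nabla\hmatL(w,\{z_j\}_{j=1}^K),\tilde z)$. Its gradient is $\nabla h(w)=(I_d-\alpha\nabla^2\hmatL(w,\{z_j\}))\nabla\ell(w-\alpha\nabla\hmatL(w,\{z_j\}),\tilde z)$, so Assumption~\ref{assumption_lipschitz} gives $\norm{\nabla h(w)}\le(1+\alpha L)G$. Since $\alpha\le 1/(2L)$ makes $1+\alpha L=\bigO(1)$, the map $h$ is $\bigO(1)\,G$-Lipschitz uniformly over $\tilde z$ and $\{z_j\}$. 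Hence the left-hand side of Definition~\ref{definition:K_uniform_stability} is at most $\bigO(1)\,G\cdot\E_\A[\norm{w_\matS-w_{\tilde\matS}}]$, and it remains to show $\E_\A[\norm{w_\matS-w_{\tilde\matS}}]\le\bigO(1)\,G(1+\alpha LK)/(mn\mu)$.

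\emph{Step 2 (coupled iterations and contraction).} I run the algorithm on $\matS$ and $\tilde\matS$ with a common source of randomness: identical task draws $\B_t$ and identical within-task sampling indices. Let $w^t,\tilde w^t$ be the two iterate sequences and $\delta_t:=\norm{w^t-\tilde w^t}$, with $\delta_0=0$. The key observation is that $g_i(w;\D_i^\tein,\D_i^\teout)$ is exactly the gradient of the per-batch surrogate $w\mapsto\hmatL(w-\alpha\nabla\hmatL(w,\D_i^\tein),\D_i^\teout)$, which by Lemma~\ref{lemma:F_smoothness_convexity} is $\tfrac{\mu}{8}$-strongly convex and $(4L+2\alpha\rho G)$-smooth. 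Consequently, since $\beta_t\le 1/(4L+2\alpha\rho G)$, whenever the two runs of a task use the \emph{same} data the inner gradient step is $(1-\beta_t\mu/8)$-contractive; as projection is nonexpansive and the per-task updates are averaged, every task other than $i_0$, and also $i_0$ itself whenever its sampled batches avoid the modified points, contributes a $(1-\beta_t\mu/8)\delta_t$ term.

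\emph{Step 3 (perturbation injection and recursion).} Excess growth occurs only when $i_0\in\B_t$ (probability $r/m$, averaging weight $1/r$) \emph{and} a sampled batch of $i_0$ hits a modified point. Writing the perturbed update as a contraction at fixed data plus a data-perturbation term evaluated at a common point, I bound the injection at each level separately: (i) the outer set differs in one point, so a hit changes $\nabla\hmatL(\cdot,\D_i^\teout)$ by $\le 2G/b$ and hence $g_{i_0}$ by $\bigO(G/b)$, and since the expected (multiplicity-weighted) number of hits is $\bigO(b/n)$ the expected outer injection is $\bigO(G/n)$; (ii) the inner set differs in $K$ points, so a size-$K$ batch contains $j$ modified draws with $\E[j]=\bigO(K^2/n)$, and each perturbs $g_{i_0}$ through both the factor $I_d-\alpha\nabla^2\hmatL$ and the adapted point by $\bigO(\alpha LG/K)$, giving expected inner injection $\bigO(\alpha LGK/n)$. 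Multiplying by the $1/m$ weight and $\beta_t$ yields $\E[\delta_{t+1}]\le(1-\beta_t\mu/8)\E[\delta_t]+c_t$ with $c_t=\bigO(1)\,\beta_t G(1+\alpha LK)/(mn)$. Since $c_t/\beta_t$ is constant in $t$, a one-line induction from $\delta_0=0$ gives $\E[\delta_t]\le\frac{8}{\mu}\cdot\frac{c_t}{\beta_t}=\bigO(1)\,G(1+\alpha LK)/(mn\mu)$ uniformly in $t$; this covers $w^T$ directly and $\bar w^T$ via $\norm{\bar w^T-\tilde{\bar w}^T}\le\frac{1}{T+1}\sum_t\delta_t$. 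Substituting into Step~1 gives $\gamma\le\bigO(1)\,G^2(1+\alpha LK)/(mn\mu)$.

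\emph{Main obstacle.} The delicate part is Step~3: the two-level sampling forces a careful coupling of the inner and outer batches of the perturbed task, and the factor $1+\alpha LK$ appears only upon correctly separating the first-order outer sensitivity (which produces the $1$) from the second-order inner sensitivity carried by $I_d-\alpha\nabla^2\hmatL$ (which produces the $\alpha LK$, as $\E[j]=\bigO(K^2/n)$ times a per-point size $\bigO(\alpha LG/K)$). One must also check throughout that the $(1-\beta_t\mu/8)$ contraction from strong convexity dominates, so that the recursion admits the clean $T$-independent fixed-point bound.
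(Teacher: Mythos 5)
Your proposal follows essentially the same route as the paper's proof: a coupling of the two runs with shared randomness, the observation that each per-batch update is a gradient step on the $\tfrac{\mu}{8}$-strongly convex, $(4L+2\alpha\rho G)$-smooth surrogate $w\mapsto\hmatL(w-\alpha\nabla\hmatL(w,\D_i^\tein),\D_i^\teout)$ and hence contractive, the same hit-count expectations ($\E[u_t]=br/(nm)$ for the outer point and $\E[v_t]=K^2r/(nm)$ for the inner points, yielding the $1$ and the $\alpha LK$ respectively), the same recursion with fixed point $\bigO(1)\,G(1+\alpha LK)/(mn\mu)$, and the same final $\bigO(G)$-Lipschitz reduction. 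The only quibble is a harmless constant in Step~1: the gradient of $h$ is bounded by $2(1+\alpha L)G$ rather than $(1+\alpha L)G$, since the adapted point may leave $\W$ and one must argue as in Lemma~\ref{lemma:ext_lipschitz}; this does not affect the $\bigO(1)$ conclusion.
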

According to the above discussion, the result of Theorem~\ref{thm:stab_MAML} guarantees that the generalization error of MAML solution decays by a factor of $\bigO(K/mn)$, where $m$ is the number of tasks in the training set and $n$ is the number of available samples per task.
The classic lower bound for SGD over strongly convex functions translates to a $\bigO(1/mn)$ lower bound in our setting. Hence, our bound is tight in the small $K$ regime, which is generally the case in few-shot learning problems. However, one shortcoming of this result is that it is not tight in the large $K$ regime. In Appendix \ref{app:largeK} we show how we could improve this result for the large $K$ regime. However, throughout the paper, we keep our discussion limited to the small $K$ regime.

\begin{remark}
If instead of using our uniform stability definition (i.e.,   Definition~\ref{definition:K_uniform_stability}), one uses the stability definition given in \citep{chen2020closer}, the resulted stability constant $\gamma$ would be proportional to $(1/m)$ rather than $(1/mn)$. In fact, our proposed uniform-stability definition   empowers us to obtain a better bound and indicates the role of number of samples per task $n$ in the  generalization error.  
\end{remark}
\begin{remark}
The algorithmic stability technique is mainly limited to the convex setting, since, in the nonconvex case, we need to keep learning rate very small to obtain meaningful generalization results which makes it impractical (Check Appendix \ref{sec:stability_limit} for further discussions on this matter).
In fact, the main reason that we assume $\ell$ is strongly convex and $\alpha \leq \Omega(\mu)$ is to ensure that the meta-objective is convex, as, in general, relaxing any of these two could lead to a nonconvex meta-objective function. However, these two assumptions together make the objective function strongly convex, which is not necessarily needed in our analysis. In fact, if we assume that $\ell$ and the meta-function are convex (but not necessarily strongly convex), we could still use Definition \ref{definition:K_uniform_stability} to derive similar generalization bounds.
\end{remark}

Putting Proposition \ref{proposition:main_training} and Theorem \ref{thm:stab_MAML} together, we obtain the following result on the excess population loss of MAML algorithm. We only report the result for the averaged iterates here, but one can obtain the result for the last iterate similarly by using Proposition \ref{proposition:main_training}.
\begin{proposition} \label{prop:excess_empirical}
Consider the function $F$ defined in \eqref{eqn:population_prob} with $\alpha \leq \min\{\tfrac{1}{2L}, \tfrac{\mu}{8\rho G}\} $. If Assumptions \ref{assumption_lipschitz} and \ref{assumption_non_atmoic} hold, then the average of iterates generated by MAML (Algorithm \ref{Algorithm1}) with $\beta_t = \min (\tfrac{1}{4L+2\alpha \rho G}, \tfrac{8}{\mu (t+1)})$ after $T$ iterations satisfies
\begin{align*}
& \E_{\A, \matS} \left [ F(\bar{w}^T) - \min_{\W} F \right] 
 \leq \bigO(1) \frac{G^2}{\mu} \left ( \frac{\log(T)+ L/\mu}{T} + \frac{1+\alpha L K}{mn} \right),
\end{align*}
where the expectation is taken over the sampling of $\matS$ and the randomness of MAML algorithm.
\end{proposition}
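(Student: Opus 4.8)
The plan is to invoke the three-term decomposition of the test error displayed in Section~\ref{sec:Formulation} and to bound each piece separately for the averaged iterate $\bar{w}^T$. The third term, $\E_{\matS}[\min_\W \hF(.,\matS)] - \min_\W F$, has already been shown to be nonpositive via Jensen's inequality together with the identity $\E_\matS[\hF(.,\matS)] = F$, so it can simply be dropped. This leaves the training error and the generalization error, and the entire argument reduces to checking that the single stepsize schedule $\beta_t = \min(\tfrac{1}{4L+2\alpha\rho G}, \tfrac{8}{\mu(t+1)})$ simultaneously meets the hypotheses of Proposition~\ref{proposition:main_training} and Theorem~\ref{thm:stab_MAML}, and then assembling the two resulting bounds.

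For the training error, I would apply the averaged-iterate bound of Proposition~\ref{proposition:main_training} with $\beta = \tfrac{1}{4L+2\alpha\rho G}$. One first verifies the admissibility constraint $\beta \le 8/\mu$, which holds because $4L+2\alpha\rho G \ge 4L \ge 4\mu$ gives $\beta \le 1/(4\mu) \le 8/\mu$. The proposition then yields $\bigO(1)\,G^2(\log T + 1/(\beta\mu))/(\mu T)$, and it remains to simplify $1/(\beta\mu) = (4L+2\alpha\rho G)/\mu$. Using the standing constraint $\alpha \le \mu/(8\rho G)$ we get $2\alpha\rho G \le \mu/4$, so $1/(\beta\mu) \le (4L+\mu/4)/\mu = \bigO(L/\mu)$; this turns the training error into $\bigO(1)\,(G^2/\mu)(\log T + L/\mu)/T$, which is exactly the first term of the claim.

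For the generalization error, I would combine Theorem~\ref{thm:stab_gen} with Theorem~\ref{thm:stab_MAML}. Theorem~\ref{thm:stab_gen} requires Assumption~\ref{assumption_non_atmoic}, that the algorithm be symmetric, and that it be $(\gamma,K)$-uniformly stable. Symmetry of MAML follows because at each round the batches $\D_i^{t,\tein}, \D_i^{t,\teout}$ are drawn uniformly (with replacement) from $\matS_i^\tein, \matS_i^\teout$, so the law of the output is invariant under any permutation of the data within each $\matS_i^\tein$ and each $\matS_i^\teout$. Theorem~\ref{thm:stab_MAML} then supplies $\gamma = \bigO(1)\,G^2(1+\alpha LK)/(mn\mu)$ under the stepsize condition $\beta_t \le \tfrac{1}{4L+2\alpha\rho G}$, which our schedule respects by construction. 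Hence the generalization error is at most this $\gamma$, matching the second term of the claim.

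Summing the three bounds, with the last one nonpositive, yields the stated excess-loss inequality. The computation is essentially bookkeeping; the only point requiring genuine care is confirming that the fixed part of the stepsize is simultaneously feasible for the convergence result (the constraint $\beta \le 8/\mu$) and small enough for the stability result (the constraint $\beta_t \le \tfrac{1}{4L+2\alpha\rho G}$), and then absorbing the constant $(4L+2\alpha\rho G)/\mu$ into the $L/\mu$ term through $\alpha \le \mu/(8\rho G)$. I do not anticipate a substantive obstacle, since all three ingredients have already been established upstream; the subtlety is purely in reconciling the two sets of stepsize hypotheses under one schedule.
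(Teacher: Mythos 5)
Your proposal is correct and follows exactly the route the paper takes: the paper obtains Proposition \ref{prop:excess_empirical} by combining the three-term decomposition of the test error with the averaged-iterate bound of Proposition \ref{proposition:main_training} and the stability-to-generalization chain of Theorems \ref{thm:stab_gen} and \ref{thm:stab_MAML}. Your stepsize bookkeeping (checking $\beta \le 8/\mu$, absorbing $(4L+2\alpha\rho G)/\mu$ into $\bigO(L/\mu)$ via $\alpha \le \mu/(8\rho G)$, and noting the schedule respects $\beta_t \le \tfrac{1}{4L+2\alpha\rho G}$) is precisely the reconciliation the paper leaves implicit.
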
 
As an immediate application, the following corollary characterizes MAML test error.
\begin{corollary}
Under the premise of Proposition \ref{prop:excess_empirical}, MAML algorithm after $T = \tilde{\bigO}(mn L/\mu)$ iterations returns $\bar{w}^T$ such that
$
\E_{\A, \matS} \left [ F(\bar{w}^T) - \min_{\W} F \right] \leq \bigO \left (G^2 (1+\alpha LK)/(mn \mu) \right).
$
\end{corollary}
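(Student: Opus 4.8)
The plan is to derive this corollary directly from Proposition~\ref{prop:excess_empirical} by choosing the number of iterations $T$ so that the optimization (training) error is driven down to the level of the generalization error. Recall that Proposition~\ref{prop:excess_empirical} bounds the excess population loss by
\[
\bigO(1)\frac{G^2}{\mu}\left(\frac{\log(T)+L/\mu}{T} + \frac{1+\alpha L K}{mn}\right),
\]
in which the first term inside the parentheses is a decaying optimization error and the second is a fixed generalization term. Since $1+\alpha L K \geq 1$, this generalization term is at least of order $1/(mn)$; hence it suffices to choose $T$ large enough that the optimization term is also $\bigO(1/(mn))$, after which the total bound is dominated by the generalization term and collapses to the claimed $\bigO\!\left(G^2(1+\alpha L K)/(mn\mu)\right)$.

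Concretely, I would take $T = \tilde{\bigO}(mnL/\mu)$, say $T = c\,mn(L/\mu)\log(mnL/\mu)$ for a suitable absolute constant $c$. With this choice the term $(L/\mu)/T$ equals $\bigO\!\left(1/(mn\log(mnL/\mu))\right) = \bigO(1/(mn))$. For the remaining $\log(T)/T$ term, I would use $\log(T) = \bigO(\log(mnL/\mu))$ together with the standing fact that $L \geq \mu$ (which follows from combining parts (i) and (iii) of Assumption~\ref{assumption_lipschitz}, so that $L/\mu \geq 1$) to conclude that $\log(T)/T = \bigO\!\left(1/(mn(L/\mu))\right) = \bigO(1/(mn))$ as well. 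Thus the entire optimization error is $\bigO(1/(mn))$.

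Substituting back, the bound becomes $\bigO(1)\frac{G^2}{\mu}\left(\bigO(1/(mn)) + (1+\alpha L K)/(mn)\right)$, and since $1+\alpha L K \geq 1$ the two fractions merge into $\bigO\!\left((1+\alpha L K)/(mn)\right)$, giving the stated excess-loss bound. The only point that requires genuine care --- and the closest thing to an obstacle in an otherwise routine substitution --- is ensuring the logarithmic factor inside $T$ is exactly strong enough to suppress the $\log(T)/T$ contribution; this dependence on a $\log(mnL/\mu)$ factor is precisely why the iteration count is stated with $\tilde{\bigO}$ rather than plain $\bigO$ notation.
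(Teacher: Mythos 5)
Your proposal is correct and matches the paper's (implicit) argument: the corollary is stated as an immediate application of Proposition~\ref{prop:excess_empirical}, obtained exactly by choosing $T = \tilde{\bigO}(mnL/\mu)$ so that both the $\log(T)/T$ and $(L/\mu)/T$ terms fall below $\bigO(1/(mn))$, whereupon the generalization term dominates. Your careful handling of the logarithmic factor and the use of $L \geq \mu$ are precisely the details the paper leaves to the reader.
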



\subsection{Generalization to an unseen task}
As we discussed in Section \ref{sec:Formulation}, another generalization measure is how the model trained with respect to the empirical problem in  \eqref{eqn:empirical_prob} performs on a new and unseen task $\T_{m+1}$ with corresponding distribution $p_{m+1}$. To state our result for this case, we first need to introduce the following distance notion between probability distributions.
\begin{definition}
For two distributions $P$ and $Q$, defined over the sample space $\Omega$ and $\sigma$-field $\F$, the total variation distance is defined as
$
\| P-Q \|_{TV} := \sup_{A \in \F} |P(A) - Q(A)|.	
$
\end{definition}
It is well-known that the total variation distance admits the following characterization
\begin{equation}\label{eqn:TV_distance_diff}
\| P-Q \|_{TV} = \sup_{f: 0 \leq f \leq 1} \E_{x \sim P}[f(x)] - \E_{x \sim Q}[f(x)].	
\end{equation}
Also, we require the following boundedness assumption for our result.
\begin{assumption}\label{assumption_boundedness}
For any $z \in \Z$, the function $\ell(.,z)$ is $M$-bounded over $\W$.
\end{assumption}
Considering these assumptions, we are ready to state our result for the case when the task at test time is a new task and is not observed during training. 
\begin{theorem}\label{thm:gen_new_task}
Consider the population losses defined in \eqref{eqn:population_prob:b} and \eqref{eqn:population_prob}. Suppose Assumptions \ref{assumption_lipschitz}, \ref{assumption_non_atmoic} and \ref{assumption_boundedness} hold. Then, for any $w \in \W$, we have  
\begin{align} \label{eqn:gen_new_task}
 &  |F_{m+1}(w) - F(w)| \leq D(p_{m+1}, \{p_i\}_{i=1}^m),
\end{align}
where
\begin{align}\label{eqn:D}
& D(p_{m+1}, \{p_i\}_{i=1}^m)  := \frac{4\alpha G^2}{m} \sum_{i=1}^m \|p_{m+1}-p_i\|_{TV} 
 + (M + 2 \alpha G^2) \| p_{m+1} - \frac{1}{m} \sum_{i=1}^m p_i \|_{TV}. 
\end{align} 	
\end{theorem}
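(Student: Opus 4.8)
The plan is to compare $F_{m+1}(w)$ with $F(w)=\frac1m\sum_{i=1}^m F_i(w)$ by writing each $F_i$ as a sum of an \emph{evaluation} part that depends linearly on the data distribution and a small \emph{adaptation remainder}. For a batch $\vec z=(z_1,\dots,z_K)$ let $u:=w-\frac{\alpha}{K}\sum_{j=1}^K\nabla\ell(w,z_j)$ denote the adapted parameter, and split
\[
\ell(u,z)=\ell(w,z)+R(z,\vec z),\qquad R(z,\vec z):=\ell(u,z)-\ell(w,z).
\]
By Assumption~\ref{assumption_lipschitz}(ii) one has $\|u-w\|\le\alpha G$, hence $|R|\le\alpha G^2$. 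Using Assumption~\ref{assumption_non_atmoic} I would treat the batch and the evaluation point as independent draws from a product measure $p_i^{\otimes(K+1)}$, so that integrating gives $F_i(w)=\matL_i(w)+r_i$ with $r_i:=\E_{p_i^{\otimes(K+1)}}[R]$. This yields the clean decomposition $F_{m+1}(w)-F(w)=\big(\matL_{m+1}(w)-\frac1m\sum_i\matL_i(w)\big)+\big(r_{m+1}-\frac1m\sum_i r_i\big)$, and I would bound the two brackets separately.

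For the first bracket, set $\bar p:=\frac1m\sum_i p_i$; then $\matL_{m+1}(w)-\frac1m\sum_i\matL_i(w)=\E_{z\sim p_{m+1}}[\ell(w,z)]-\E_{z\sim\bar p}[\ell(w,z)]$. Since $0\le\ell(w,\cdot)\le M$ by Assumption~\ref{assumption_boundedness}, the dual characterization \eqref{eqn:TV_distance_diff} bounds this by $M\|p_{m+1}-\bar p\|_{TV}$. This is exactly where the mixture distance enters: the evaluation point appears linearly through $\E_z[\ell(w,z)]$, so the average over tasks collapses into the single mixture $\bar p$.

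For the remainder bracket I would handle $\frac1m\sum_i(r_{m+1}-r_i)$ term by term, and for each $i$ further separate the change of the batch distribution from that of the evaluation distribution. Writing $\rho_P(\vec z):=\E_{z\sim P}[R(z,\vec z)]$ and inserting the hybrid $\E_{\vec z\sim p_i^{\otimes K}}[\rho_{p_{m+1}}(\vec z)]$, I would bound $r_{m+1}-r_i$ by $(a)\ \E_{p_{m+1}^{\otimes K}}[\rho_{p_{m+1}}]-\E_{p_i^{\otimes K}}[\rho_{p_{m+1}}]$ plus $(b)\ \E_{p_i^{\otimes K}}[\rho_{p_{m+1}}-\rho_{p_i}]$. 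Term $(b)$ is immediate: for fixed $\vec z$, $R(\cdot,\vec z)$ has range at most $2\alpha G^2$, so \eqref{eqn:TV_distance_diff} gives $|\rho_{p_{m+1}}(\vec z)-\rho_{p_i}(\vec z)|\le 2\alpha G^2\|p_{m+1}-p_i\|_{TV}$. Term $(a)$ is where the real work lies: the batch enters only through the \emph{averaged} gradient $\frac1K\sum_j\nabla\ell(w,z_j)$, so replacing a single $z_j$ moves $u$ by at most $2\alpha G/K$ and hence moves $\rho_{p_{m+1}}$ by at most $2\alpha G^2/K$; that is, $\rho_{p_{m+1}}$ has bounded differences $2\alpha G^2/K$ in each of its $K$ arguments. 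A telescoping argument that swaps the coordinates from $p_{m+1}$ to $p_i$ one at a time, each swap costing at most the per-coordinate sensitivity times $\|p_{m+1}-p_i\|_{TV}$, then gives $|(a)|\le K\cdot\frac{2\alpha G^2}{K}\|p_{m+1}-p_i\|_{TV}=2\alpha G^2\|p_{m+1}-p_i\|_{TV}$. Combining $(a)$ and $(b)$ and averaging over $i$ yields $\big|r_{m+1}-\frac1m\sum_i r_i\big|\le\frac{4\alpha G^2}{m}\sum_i\|p_{m+1}-p_i\|_{TV}$.

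The main obstacle will be precisely the control of $(a)$: a naive bound using only $|R|\le\alpha G^2$ per coordinate would produce a spurious factor $K$, and it is the averaging inside the inner gradient that cancels it. A related conceptual point is that one \emph{cannot} fold the $K$ batch distributions into a single mixture here, because a product of mixtures is not a mixture of products; this is why the gradient contribution must carry the (larger) average $\frac1m\sum_i\|p_{m+1}-p_i\|_{TV}$, whereas the linear evaluation contribution enjoys the (smaller) mixture distance. Putting the two brackets together gives $|F_{m+1}(w)-F(w)|\le M\|p_{m+1}-\bar p\|_{TV}+\frac{4\alpha G^2}{m}\sum_i\|p_{m+1}-p_i\|_{TV}$, which is dominated by $D(p_{m+1},\{p_i\}_{i=1}^m)$ since the latter carries an additional nonnegative $2\alpha G^2\|p_{m+1}-\bar p\|_{TV}$; this establishes \eqref{eqn:gen_new_task}.
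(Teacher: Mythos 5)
Your decomposition $\ell(u,z)=\ell(w,z)+R(z,\vec z)$ is a genuinely different organization from the paper's. The paper never splits off the adaptation remainder: it compares the full adapted losses $X_i(\tilde z):=\E_{\vec z\sim p_i^{\otimes K}}\left[\ell(u,\tilde z)\right]$ and $X_{m+1}(\tilde z)$ directly, proving via a maximal coupling of $p_i$ and $p_{m+1}$ (coordinate by coordinate, exploiting the $1/K$ averaging exactly as your bounded-differences telescoping does) that $|X_{m+1}(\tilde z)-X_i(\tilde z)|\le 4\alpha G^2\|p_{m+1}-p_i\|_{TV}$ pointwise in $\tilde z$, and then changing the evaluation distribution on the single, $i$-independent function $X_{m+1}\in[0,M+2\alpha G^2]$, which collapses over $i$ into the mixture term. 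Your term $(a)$ is the paper's coupling lemma in telescoping form, and you correctly identify the cancellation of the spurious factor $K$ as the crux.

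There is, however, a concrete gap in your constants that prevents the argument from delivering the stated $D$. You invoke Assumption~\ref{assumption_lipschitz}(ii) to treat $\ell(\cdot,z)$ as $G$-Lipschitz at the adapted point $u=w-\alpha\nabla\hmatL(w,\vec z)$, but that assumption bounds the gradient only over $\W$, and $u$ generally lies outside $\W$. The correct bound, obtained via $L$-smoothness as in Lemma~\ref{lemma:ext_lipschitz}, is $\|\nabla\ell(u,z)\|\le(1+\alpha L)G\le 2G$; consequently $|R|\le 2\alpha G^2$ (this is exactly Lemma~\ref{lemma:extended_boundedness}), the range of $R(\cdot,\vec z)$ is $4\alpha G^2$, and the per-coordinate sensitivity in $(a)$ is $4\alpha G^2/K$. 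Propagating these, $(a)+(b)\le 8\alpha G^2\|p_{m+1}-p_i\|_{TV}$ and your final bound becomes $M\|p_{m+1}-\bar p\|_{TV}+\tfrac{8\alpha G^2}{m}\sum_i\|p_{m+1}-p_i\|_{TV}$. This does not imply \eqref{eqn:gen_new_task}: since $\|p_{m+1}-\bar p\|_{TV}\le\tfrac1m\sum_i\|p_{m+1}-p_i\|_{TV}$, the surplus $\tfrac{4\alpha G^2}{m}\sum_i\|p_{m+1}-p_i\|_{TV}$ cannot be absorbed into the $2\alpha G^2\|p_{m+1}-\bar p\|_{TV}$ you saved on the mixture term. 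The structural reason you lose a factor of two is the split itself: moving the evaluation distribution in $(b)$ charges the range of $R$ (twice its sup-norm) against the expensive per-task distances, whereas keeping $\ell(u,\cdot)$ whole charges its total range $M+2\alpha G^2$ against the cheaper mixture distance. Your route still yields a correct bound of the same qualitative form, but to recover \eqref{eqn:D} exactly you should change only the batch distribution in the coupled/telescoped step and route the evaluation point's distribution change through the undecomposed $X_{m+1}$.
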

While the proof is provided in detail in Appendix \ref{proof-thm:gen_new_task}, here we discuss a sketch of it to highlight the main technical contributions. To simplify the notation here, let us assume $m=1$, meaning that $p_1$ is the distribution used for training and $p_2$ is the distribution corresponding to the new task. 
Note that we aim to bound $|F_2(w) - F_1(w)|$. Recalling the definition of population loss \eqref{eqn:population_prob}, we need to bound the following expression (we drop the absolute value due to symmetry) 
\begin{equation} \label{eqn:sketch_1}
\begin{split}
\E_{\{z_j^{2} \sim p_{2} \}_{j=1}^K, \tilde{z}^2 \sim p_2} \! \big[ l \big (w - \alpha \nabla \hmatL(w, \{z_j^{2} \}_{j}), \tilde{z}^2 \big ) \! \big] \! \!
-\E_{\{z_j^1 \sim p_1 \}_{j=1}^K, \tilde{z}^1 \sim p_1} \! \big[ l \big (w - \alpha \nabla \hmatL(w, \{z_j^1\}_{j}), \tilde{z}^1 \big ) \! \big].	
\end{split}
\end{equation}	
Notice that this difference can be cast as
$
\E_{ (\{z_j \}_{j=1}^K, \tilde{z}) \sim p_2^{K+1}} [X] - \E_{ (\{z_j \}_{j=1}^K, \tilde{z}) \sim p_1^{K+1}} [X],	
$
with
$
X := l \left (w - \alpha \nabla \hmatL(w, \{z_j \}_{j=1}^K), \tilde{z} \right ).	
$
As a result, a naive approach would be using Lipschitz and boundedness properties of $l$ (Assumptions \ref{assumption_lipschitz} and \ref{assumption_boundedness}) along with \eqref{eqn:TV_distance_diff} to obtain a bound depending on $\| p_1^{K+1} - p_2^{K+1}\|_{TV} = \bigO(K) \|p_1 - p_2 \|_{TV}$. However, this bound is not tight as it grows with $K$. 

To address this issue, we exploit a coupling technique. Note that the expression in \eqref{eqn:sketch_1} does not depend on the joint distribution of $z_j^1$ and $z_j^2$, and instead, it only depends on the marginal distribution of $z_j^1$ and $z_j^2$. That said, for each $j$, we assume that $z_j^1$ and $z_j^2$ are sampled from a distribution $\mu$ on $\Z \times \Z$ such that 
$
z_j^1 \sim p_1$, $z_j^{2} \sim p_{2}$, and $\mu(z_j^1 \neq z_j^{2})	 = \| p_1 - p_{2} \|_{TV}.
$
Such a coupling exists and is called \textit{maximal coupling} of $p_1$ and $p_2$ \citep{den2012probability}. Using this idea, as we show in Appendix \ref{proof-thm:gen_new_task}, we can eliminate the dependence on $K$, and as a result, the upper bound in \eqref{eqn:gen_new_task} is independent of number of available labeled samples at test time denoted by $K$.
\sloppy
\begin{remark}
Note that the terms $\frac{1}{m} \sum_{i=1}^m \|p_{m+1}-p_i\|_{TV}$ and $\| p_{m+1} - \frac{1}{m} \sum_{i=1}^m p_i \|_{TV}$ in $D(p_{m+1}, \{p_i\}_{i=1}^m)$ come from the fact that we consider uniform distribution over tasks in the empirical problem \eqref{eqn:empirical_prob}. 
In particular, if we instead consider the empirical problem   
$
~ \argmin_{w \in \W} \sum_{i=1}^m q_i \hF_i(w, \matS_i),	
$
for some non-negative weights $q_i$ with $\sum_{i=1}^m q_i=1$, then $D(p_{m+1}, \{p_i\}_{i=1}^m)$ on the right hand side of \eqref{eqn:gen_new_task} would change to
\begin{align*}
& (M + 2 \alpha G^2) \| p_{m+1} - \sum_{i=1}^m q_i p_i \|_{TV}
+ 12\alpha G^2 \sum_{i=1}^m q_i \|p_{m+1}-p_i\|_{TV}.	
\end{align*}
This result shows that by changing the training problem we can achieve a lower generalization error for MAML, if we have some information about the distribution $p_{m+1}$ at  training time. For instance, if we know $p_{m+1}$ will be much closer to $p_1$ compared to $p_2$, making the weight of $p_1$ larger than $p_2$ would decrease the generalization error of MAML.
\end{remark}
\begin{corollary}\label{corr:excess_new_task}
Recall the population loss $F_{m+1}$ defined in~\eqref{eqn:population_prob} and $D(p_{m+1}, \{p_i\}_{i=1}^m)$ defined in Theorem \ref{thm:gen_new_task}. Let $\A$ be an algorithm for solving the empirical problem \eqref{eqn:empirical_prob} which achieves $\epsilon$ excess risk, i.e., $\E_{\A,\matS}[F(\A(\matS))] - \min_\W F \leq \epsilon$. If Assumptions \ref{assumption_lipschitz}, \ref{assumption_non_atmoic} and~\ref{assumption_boundedness} hold,  then algorithm $\A$ finds a model $w_\matS$ which achieves $\epsilon + D(p_{m+1}, \{p_i\}_{i=1}^m)$ excess loss with respect to $F_{m+1}$,
\begin{equation*}
 \E_{\A,\matS}[F_{m+1}(w_\matS)] - \min_\W F_{m+1} \leq \epsilon + 2D(p_{m+1}, \{p_i\}_{i=1}^m).
\end{equation*}
\end{corollary}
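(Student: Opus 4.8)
The plan is to chain two applications of Theorem~\ref{thm:gen_new_task} with the excess-risk hypothesis on $\A$. Throughout I write $D := D(p_{m+1}, \{p_i\}_{i=1}^m)$ for brevity, and recall that $D$ is a deterministic quantity depending only on the task distributions $p_1,\dots,p_{m+1}$ (not on the draw of $\matS$ nor on the internal randomness of $\A$), so it may be pulled outside of any expectation freely. I also write $w_\matS = \A(\matS)$.

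First I would apply Theorem~\ref{thm:gen_new_task} at the (random) output $w_\matS$. Since that inequality holds for \emph{every} $w \in \W$, it holds in particular pointwise at $w = w_\matS$, giving $F_{m+1}(w_\matS) \le F(w_\matS) + D$. Taking $\E_{\A,\matS}$ of both sides and invoking the hypothesis $\E_{\A,\matS}[F(w_\matS)] \le \min_\W F + \epsilon$ then yields
\begin{equation*}
\E_{\A,\matS}[F_{m+1}(w_\matS)] \le \min_\W F + \epsilon + D.
\end{equation*}

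Next I would relate $\min_\W F$ to $\min_\W F_{m+1}$. Let $w^* := \argmin_{w \in \W} F_{m+1}(w)$, which exists because $F_{m+1}$ is $\mu/8$-strongly convex over the closed convex set $\W$ (an immediate consequence of Lemma~\ref{lemma:F_smoothness_convexity}, since $F_{m+1}$ is an average/expectation of functions of the form $\ell(w - \alpha \nabla \hmatL(w,\D),z)$). By optimality of the $F$-minimizer, $\min_\W F \le F(w^*)$, and a second application of Theorem~\ref{thm:gen_new_task}, this time at the fixed comparator $w^*$, gives $F(w^*) \le F_{m+1}(w^*) + D = \min_\W F_{m+1} + D$. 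Combining these two inequalities yields $\min_\W F \le \min_\W F_{m+1} + D$. Substituting into the display above gives
\begin{equation*}
\E_{\A,\matS}[F_{m+1}(w_\matS)] \le \min_\W F_{m+1} + \epsilon + 2D,
\end{equation*}
which is the claim.

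The factor of $2$ is precisely the cost of using the two-sided bound of Theorem~\ref{thm:gen_new_task} twice: once to transfer the value of the \emph{trained model} from $F$ to $F_{m+1}$, and once to transfer the \emph{optimal value} from $F_{m+1}$ back to $F$. There is no genuine analytic obstacle beyond this bookkeeping; the only delicate point is to exploit that Theorem~\ref{thm:gen_new_task} is a \emph{uniform} (for-all-$w$) inequality rather than a bound at a single fixed point, since this is exactly what licenses applying it both at the data-dependent random iterate $w_\matS$ and at the deterministic comparator $w^*$ while keeping the deterministic term $D$ outside the expectation throughout.
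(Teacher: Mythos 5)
Your proof is correct and follows essentially the same route as the paper: the paper telescopes the excess loss into $\E[F_{m+1}(w_\matS)-F(w_\matS)] + (\E[F(w_\matS)]-\min_\W F) + (\min_\W F - \min_\W F_{m+1})$ and bounds the three terms by $D$, $\epsilon$, and $D$ respectively, which is exactly your two applications of Theorem~\ref{thm:gen_new_task} plus the excess-risk hypothesis. Your explicit argument for $\min_\W F - \min_\W F_{m+1} \leq D$ via the comparator $w^* = \argmin_\W F_{m+1}$ is a detail the paper leaves implicit, and it is carried out correctly.
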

This corollary and Proposition \ref{prop:excess_empirical} together imply that the MAML algorithm's test error with respect to the new task $\T_{m+1}$ is $\bigO(1) \left( \frac{1}{mn} + D(p_{m+1}, \{p_i\}_{i=1}^m) \right)$. As a result, if the new task's distribution $p_{m+1}$ is sufficiently close to the other tasks' distributions, MAML will have a low test error on the new unseen task. On the other hand, if $p_{m+1}$ is far from $p_1, \ldots, p_m$ in TV distance, then test error of the model trained $\{\T_{i=1}^m\}$ over $\T_{m+1}$ could be potentially large. 
In Appendix \ref{proof-thm:gen_new_taskb} we show how this result can be extended to the case that the task at test time is generated from a distribution over both recurring tasks $\{\T_i\}_{i=1}^m$ and the unseen task $\T_{m+1}$.

\vspace{-1mm}
\section{Conclusion and future work} \label{sec:conclusion}
\vspace{-1mm}
In this work, we studied the generalization of MAML algorithm in two key cases: $a)$ when the test time task is a recurring task from the ones observed during the training stage, $b)$ when it is a new and unseen one. For the first one, and under strong convexity assumption, we showed that the generalization error improves as the number of tasks or the number of samples per task increases. For the second case, we showed that when the distance between the unseen task's distribution and the distributions of training tasks is sufficiently small, the MAML output generalizes well to the new task revealed at test time.

While we focused on the convex case in this paper, deriving generalization bounds when the meta-function is nonconvex is a natural future direction to explore. However, this could be challenging since the generalization of gradient methods is not well understood in the nonconvex setting even for the classic supervised learning problem.

\section{Acknowledgment}
Alireza Fallah acknowledges support from the Apple Scholars in AI/ML PhD fellowship and the MathWorks Engineering Fellowship.
This research is sponsored by the United States Air Force Research Laboratory and the United States Air Force Artificial Intelligence Accelerator and was accomplished under Cooperative Agreement Number FA8750-19-2-1000. The views and conclusions contained in this document are those of the authors and should not be interpreted as representing the official policies, either expressed or implied, of the United States Air Force or the U.S. Government. The U.S. Government is authorized to reproduce and distribute reprints for Government purposes notwithstanding any copyright notation herein.  
This research of Aryan Mokhtari is supported in part by NSF Grant 2007668, ARO Grant W911NF2110226, the Machine Learning Laboratory at UT Austin, and the NSF AI Institute for Foundations of Machine Learning.

\bibliographystyle{ieeetr}
\bibliography{main}

\begin{thebibliography}{10}

\bibitem{finn17a}
C.~Finn, P.~Abbeel, and S.~Levine, ``Model-agnostic meta-learning for fast
  adaptation of deep networks,'' in {\em Proceedings of the 34th International
  Conference on Machine Learning}, (Sydney, Australia), 06--11 Aug 2017.

\bibitem{Reptile}
A.~Nichol, J.~Achiam, and J.~Schulman, ``On first-order meta-learning
  algorithms,'' {\em arXiv preprint arXiv:1803.02999}, 2018.

\bibitem{khodak2019adaptive}
M.~Khodak, M.-F.~F. Balcan, and A.~S. Talwalkar, ``Adaptive gradient-based
  meta-learning methods,'' in {\em Advances in Neural Information Processing
  Systems}, pp.~5915--5926, 2019.

\bibitem{baker2016designing}
B.~Baker, O.~Gupta, N.~Naik, and R.~Raskar, ``Designing neural network
  architectures using reinforcement learning,'' in {\em International
  Conference on Learning Representations}, 2017.

\bibitem{zoph2016neural}
B.~Zoph and Q.~V. Le, ``Neural architecture search with reinforcement
  learning,'' in {\em International Conference on Learning Representations},
  2017.

\bibitem{zoph2018learning}
B.~Zoph, V.~Vasudevan, J.~Shlens, and Q.~V. Le, ``Learning transferable
  architectures for scalable image recognition,'' in {\em Proceedings of the
  IEEE conference on computer vision and pattern recognition}, pp.~8697--8710,
  2018.

\bibitem{ravi2016optimization}
S.~Ravi and H.~Larochelle, ``Optimization as a model for few-shot learning,''
  in {\em International Conference on Learning Representations}, 2017.

\bibitem{NIPS2016_Andry}
M.~Andrychowicz, M.~Denil, S.~G\'{o}mez, M.~W. Hoffman, D.~Pfau, T.~Schaul,
  B.~Shillingford, and N.~de~Freitas, ``Learning to learn by gradient descent
  by gradient descent,'' in {\em Advances in Neural Information Processing
  Systems 29}, pp.~3981--3989, Curran Associates, Inc., 2016.

\bibitem{MAML++}
A.~Antoniou, H.~Edwards, and A.~Storkey, ``How to train your {MAML},'' in {\em
  International Conference on Learning Representations}, 2019.

\bibitem{Meta-SGD}
Z.~Li, F.~Zhou, F.~Chen, and H.~Li, ``Meta-{SGD}: Learning to learn quickly for
  few-shot learning,'' {\em arXiv preprint arXiv:1707.09835}, 2017.

\bibitem{grant2018recasting}
E.~Grant, C.~Finn, S.~Levine, T.~Darrell, and T.~Griffiths, ``Recasting
  gradient-based meta-learning as hierarchical bayes,'' in {\em International
  Conference on Learning Representations}, 2018.

\bibitem{alpha-MAML}
H.~S. Behl, A.~G. Baydin, and P.~H.~S. Torr, ``Alpha {MAML:} adaptive
  model-agnostic meta-learning,'' 2019.

\bibitem{fallah2019convergence}
A.~Fallah, A.~Mokhtari, and A.~Ozdaglar, ``On the convergence theory of
  gradient-based model-agnostic meta-learning algorithms,'' in {\em
  International Conference on Artificial Intelligence and Statistics},
  pp.~1082--1092, 2020.

\bibitem{xu2020meta}
R.~Xu, L.~Chen, and A.~Karbasi, ``Meta learning in the continuous time limit,''
  {\em arXiv preprint arXiv:2006.10921}, 2020.

\bibitem{ji2020multi}
K.~Ji, J.~Yang, and Y.~Liang, ``Multi-step model-agnostic meta-learning:
  Convergence and improved algorithms,'' {\em arXiv preprint arXiv:2002.07836},
  2020.

\bibitem{wang2020global}
L.~Wang, Q.~Cai, Z.~Yang, and Z.~Wang, ``On the global optimality of
  model-agnostic meta-learning,'' in {\em International Conference on Machine
  Learning}, pp.~9837--9846, PMLR, 2020.

\bibitem{bousquet2002stability}
O.~Bousquet and A.~Elisseeff, ``Stability and generalization,'' {\em Journal of
  machine learning research}, vol.~2, no.~Mar, pp.~499--526, 2002.

\bibitem{hardt2016train}
M.~Hardt, B.~Recht, and Y.~Singer, ``Train faster, generalize better: Stability
  of stochastic gradient descent,'' in {\em International Conference on Machine
  Learning}, pp.~1225--1234, PMLR, 2016.

\bibitem{NEURIPS2019_072b030b}
A.~Rajeswaran, C.~Finn, S.~M. Kakade, and S.~Levine, ``Meta-learning with
  implicit gradients,'' in {\em Advances in Neural Information Processing
  Systems} (H.~Wallach, H.~Larochelle, A.~Beygelzimer, F.~d\textquotesingle
  Alch\'{e}-Buc, E.~Fox, and R.~Garnett, eds.), vol.~32, pp.~113--124, Curran
  Associates, Inc., 2019.

\bibitem{collins2020task}
L.~Collins, A.~Mokhtari, and S.~Shakkottai, ``Task-robust model-agnostic
  meta-learning,'' {\em Advances in Neural Information Processing Systems},
  vol.~33, 2020.

\bibitem{likhosherstov2020ufo}
V.~Likhosherstov, X.~Song, K.~Choromanski, J.~Davis, and A.~Weller, ``Ufo-blo:
  Unbiased first-order bilevel optimization,'' {\em arXiv preprint
  arXiv:2006.03631}, 2020.

\bibitem{chen2020solving}
T.~Chen, Y.~Sun, and W.~Yin, ``Solving stochastic compositional optimization is
  nearly as easy as solving stochastic optimization,'' {\em arXiv preprint
  arXiv:2008.10847}, 2020.

\bibitem{Hu2020BiasedSG}
Y.~Hu, S.~Zhang, X.~Chen, and N.~He, ``Biased stochastic gradient descent for
  conditional stochastic optimization,'' {\em ArXiv}, vol.~abs/2002.10790,
  2020.

\bibitem{finn19a}
C.~Finn, A.~Rajeswaran, S.~Kakade, and S.~Levine, ``Online meta-learning,'' in
  {\em Proceedings of the 36th International Conference on Machine Learning},
  vol.~97 of {\em Proceedings of Machine Learning Research}, (Long Beach,
  California, USA), pp.~1920--1930, PMLR, 09--15 Jun 2019.

\bibitem{fallah2020personalized}
A.~Fallah, A.~Mokhtari, and A.~Ozdaglar, ``Personalized federated learning with
  theoretical guarantees: A model-agnostic meta-learning approach,'' {\em
  Advances in Neural Information Processing Systems}, vol.~33, 2020.

\bibitem{liu2019taming}
H.~Liu, R.~Socher, and C.~Xiong, ``Taming maml: Efficient unbiased
  meta-reinforcement learning,'' in {\em International Conference on Machine
  Learning}, pp.~4061--4071, PMLR, 2019.

\bibitem{fallah2020provably}
A.~Fallah, K.~Georgiev, A.~Mokhtari, and A.~Ozdaglar, ``Provably convergent
  policy gradient methods for model-agnostic meta-reinforcement learning,''
  {\em arXiv preprint arXiv:2002.05135}, 2020.

\bibitem{chen2020closer}
J.~Chen, X.-M. Wu, Y.~Li, Q.~Li, L.-M. Zhan, and F.-l. Chung, ``A closer look
  at the training strategy for modern meta-learning,'' {\em Advances in Neural
  Information Processing Systems}, vol.~33, 2020.

\bibitem{guiroy2019towards}
S.~Guiroy, V.~Verma, and C.~Pal, ``Towards understanding generalization in
  gradient-based meta-learning,'' {\em arXiv preprint arXiv:1907.07287}, 2019.

\bibitem{rakhlin2011making}
A.~Rakhlin, O.~Shamir, and K.~Sridharan, ``Making gradient descent optimal for
  strongly convex stochastic optimization,'' {\em arXiv preprint
  arXiv:1109.5647}, 2011.

\bibitem{hazan2007logarithmic}
E.~Hazan, A.~Agarwal, and S.~Kale, ``Logarithmic regret algorithms for online
  convex optimization,'' {\em Machine Learning}, vol.~69, no.~2-3,
  pp.~169--192, 2007.

\bibitem{nemirovski2009robust}
A.~Nemirovski, A.~Juditsky, G.~Lan, and A.~Shapiro, ``Robust stochastic
  approximation approach to stochastic programming,'' {\em SIAM Journal on
  Optimization}, vol.~19, no.~4, pp.~1574--1609, 2009.

\bibitem{bassily2019private}
R.~Bassily, V.~Feldman, K.~Talwar, and A.~Guha~Thakurta, ``Private stochastic
  convex optimization with optimal rates,'' {\em Advances in Neural Information
  Processing Systems}, vol.~32, pp.~11282--11291, 2019.

\bibitem{nichol2018first}
A.~Nichol, J.~Achiam, and J.~Schulman, ``On first-order meta-learning
  algorithms,'' {\em arXiv preprint arXiv:1803.02999}, 2018.

\bibitem{den2012probability}
F.~Den~Hollander, ``Probability theory: The coupling method,'' {\em Lecture
  notes available online (http://websites. math. leidenuniv.
  nl/probability/lecturenotes/CouplingLectures. pdf)}, 2012.

\bibitem{nesterov_convex}
Y.~Nesterov, {\em Introductory Lectures on Convex Optimization: A Basic
  Course}, vol.~87.
\newblock Springer, 2004.

\end{thebibliography}
\newpage
\appendix
\begin{center}
\textbf{\LARGE{Appendix}}
\end{center}
\section{Intermediate Results} \label{sec:intermediate_results}
In this section we list a number of results that will be helpful in proofs of our main results. 
\begin{lemma}[From \cite{nesterov_convex} with modifications] 
\label{lemma:basic}
Let $\phi$ be a $\gamma$-strongly convex and $\eta$-smooth function which its gradient is bounded by $\tG$ over the convex and closed set $\W$. Then, we have
\begin{equation} \label{eqn:basic}
 \frac{\lambda}{2} \|w - w^*\|^2 \leq \phi(w) - \phi(w^*) \leq \frac{L}{2} \|w - w^*\|^2 + \tG \|w - w^*\|.
\end{equation}
\end{lemma}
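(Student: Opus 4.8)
The plan is to prove both inequalities by expanding $\phi$ around the minimizer $w^*$ and invoking the two defining quadratic bounds: the lower quadratic bound from $\gamma$-strong convexity and the upper quadratic bound from $\eta$-smoothness. Concretely, for every $w \in \W$ strong convexity gives
\[
\phi(w) \geq \phi(w^*) + \langle \nabla \phi(w^*), w - w^* \rangle + \frac{\gamma}{2}\norm{w - w^*}^2,
\]
while $\eta$-smoothness gives the matching upper bound
\[
\phi(w) \leq \phi(w^*) + \langle \nabla \phi(w^*), w - w^* \rangle + \frac{\eta}{2}\norm{w - w^*}^2.
\]
The whole proof then reduces to controlling the linear term $\langle \nabla \phi(w^*), w - w^* \rangle$ differently in each of the two estimates; here the coefficients $\gamma/2$ and $\eta/2$ are exactly the constants appearing in the claim.

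For the lower bound, I would use the first-order optimality condition for constrained minimization: since $w^*$ minimizes $\phi$ over the convex closed set $\W$, the variational inequality $\langle \nabla \phi(w^*), w - w^* \rangle \geq 0$ holds for all $w \in \W$. Dropping this nonnegative inner product from the strong convexity inequality immediately yields $\phi(w) - \phi(w^*) \geq \tfrac{\gamma}{2}\norm{w - w^*}^2$, which is the left-hand inequality.

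For the upper bound, I would instead keep the linear term and bound it above: by Cauchy--Schwarz and the assumed gradient bound, $\langle \nabla \phi(w^*), w - w^* \rangle \leq \norm{\nabla \phi(w^*)}\,\norm{w - w^*} \leq \tG\,\norm{w - w^*}$. Substituting into the smoothness inequality gives $\phi(w) - \phi(w^*) \leq \tfrac{\eta}{2}\norm{w - w^*}^2 + \tG\,\norm{w - w^*}$, the right-hand inequality. The only genuine subtlety—and the reason the statement differs from the textbook unconstrained version—is that $\nabla \phi(w^*)$ need not vanish here, since $w^*$ is a constrained minimizer over $\W$ rather than an interior stationary point. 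This is precisely what forces the extra additive term $\tG\,\norm{w-w^*}$ in the upper bound and is where the gradient-bound hypothesis is used; in the unconstrained case $\nabla\phi(w^*)=0$ and that term would disappear. Apart from this observation, both steps are routine applications of the standard convexity/smoothness inequalities and carry no real obstacle.
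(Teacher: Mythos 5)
Your proof is correct and follows essentially the same route as the paper's: both apply the quadratic lower/upper bounds from strong convexity and smoothness around $w^*$, use the constrained first-order optimality condition $\langle\nabla\phi(w^*),w-w^*\rangle\geq 0$ to discard the linear term in the lower bound, and use Cauchy--Schwarz with the gradient bound to absorb it into $\tG\|w-w^*\|$ in the upper bound. Your explicit remark that $\nabla\phi(w^*)$ need not vanish for a constrained minimizer is exactly the point of the modification relative to the textbook version.
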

\begin{proof}
Recalling the definition of strong convexity and smoothness, we have
\begin{equation} \label{eqn:basic_2}
 \frac{\lambda}{2} \|w - w^*\|^2 + \nabla \phi(w^*)^\top (w-w^*)
 \leq \phi(w) - \phi(w^*) 
 \leq \frac{L}{2} \|w - w^*\|^2 + \tG \|w - w^*\| + \nabla \phi(w^*)^\top (w-w^*).   
\end{equation}
Since $w^* = \argmin_\W \phi$, we have $\nabla \phi(w^*)^\top (w-w^*) \geq 0$, and hence from the left hand side of \eqref{eqn:basic_2}, we immediately obtain the left hand side of \eqref{eqn:basic}. To obtain the right hand side, it just suffices to use the bounded gradient assumption along with Cauchy–Schwarz inequality:
\begin{equation*}
\nabla \phi(w^*)^\top (w-w^*) \leq \tG \|w-w^*\|.    
\end{equation*}
\end{proof}
\begin{lemma}\label{lemma:ext_lipschitz}
Suppose the conditions in Assumption \ref{assumption_lipschitz} are satisfied. Then, with $\alpha \leq 1/L$, and for any batch $\D$ and $z \in \Z$, we have
\begin{equation}\label{eqn:ext_lipschitz}
\left \Vert \nabla \ell \left (w - \alpha \nabla \hmatL(w,\D), z \right) \right \Vert \leq 2G.
\end{equation} 
for any $w \in \W$. Furthermore, if we take $v \in \W$ as well, we have
\begin{equation}\label{eqn:ext_lipschitz_2}
\left | \ell \left (w - \alpha \nabla \hmatL(w,\D), z \right) 
- \ell \left (v - \alpha \nabla \hmatL(v,\D), z \right)
\right |
\leq 4G	\| w-v\|.
\end{equation}
\end{lemma}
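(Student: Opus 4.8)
The plan is to treat the two inequalities separately, with the first serving as the key ingredient for the second. The central subtlety throughout is that the inner-updated point $w - \alpha \nabla \hmatL(w,\D)$ need not lie in $\W$, so the uniform gradient bound of Assumption \ref{assumption_lipschitz}(ii), which only holds on $\W$, cannot be invoked at that point directly; instead I would propagate it off $\W$ using the global smoothness of Assumption \ref{assumption_lipschitz}(iii). Note that only parts (ii) and (iii) of the assumption are needed.

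For \eqref{eqn:ext_lipschitz}, I would add and subtract $\nabla \ell(w,z)$ and apply the triangle inequality,
\begin{equation*}
\Vert \nabla \ell(w - \alpha \nabla \hmatL(w,\D), z) \Vert \leq \Vert \nabla \ell(w,z) \Vert + \Vert \nabla \ell(w - \alpha \nabla \hmatL(w,\D), z) - \nabla \ell(w,z) \Vert.
\end{equation*}
The first term is at most $G$ since $w \in \W$. For the second term, global $L$-smoothness gives a bound of $L \alpha \Vert \nabla \hmatL(w,\D) \Vert$; writing $\nabla \hmatL(w,\D)$ as the average of $\nabla \ell(w,z')$ over $z' \in \D$, each summand is at most $G$ (as $w \in \W$), so $\Vert \nabla \hmatL(w,\D) \Vert \leq G$. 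With $\alpha \leq 1/L$ the second term is therefore at most $\alpha L G \leq G$, and the two contributions sum to $2G$.

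For \eqref{eqn:ext_lipschitz_2}, I would introduce $h(w) := \ell(w - \alpha \nabla \hmatL(w,\D), z)$ and compute its gradient by the chain rule, which is legitimate because $\ell(\cdot,z)$ is twice continuously differentiable,
\begin{equation*}
\nabla h(w) = \left( I_d - \alpha \nabla^2 \hmatL(w,\D) \right) \nabla \ell(w - \alpha \nabla \hmatL(w,\D), z).
\end{equation*}
I would bound the operator norm of the first factor crudely by $\Vert I_d \Vert + \alpha \Vert \nabla^2 \hmatL(w,\D) \Vert \leq 1 + \alpha L \leq 2$, using that $L$-smoothness forces $\Vert \nabla^2 \ell(\cdot,z) \Vert \leq L$ (hence the same bound for the averaged Hessian $\nabla^2 \hmatL$) together with $\alpha \leq 1/L$. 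The second factor is at most $2G$ by \eqref{eqn:ext_lipschitz}. Hence $\Vert \nabla h(w) \Vert \leq 4G$ uniformly on $\W$, and since $\W$ is convex, integrating $\nabla h$ along the segment from $v$ to $w$ yields $|h(w) - h(v)| \leq 4G \Vert w - v \Vert$, which is exactly the claim.

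The only genuine obstacle is the one flagged at the outset: keeping careful track of where the bound $G$ is legitimately available. Everything else is a routine combination of the triangle inequality, the chain rule, and the convexity of $\W$. In particular, \eqref{eqn:ext_lipschitz} is precisely the device that controls the second factor of $\nabla h$ even though its argument may fall outside $\W$, so I expect no difficulty provided I consistently use the global ($\R^d$) smoothness rather than the $\W$-restricted bounds whenever the updated iterate appears.
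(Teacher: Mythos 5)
Your proposal is correct and follows essentially the same route as the paper's own proof: the triangle inequality plus global $L$-smoothness and $\Vert \nabla \hmatL(w,\D)\Vert \leq G$ for the first bound, and then the chain-rule gradient $(I_d - \alpha \nabla^2 \hmatL(w,\D))\nabla \ell(w - \alpha \nabla \hmatL(w,\D),z)$ integrated along the segment from $v$ to $w$ (which stays in $\W$ by convexity) for the second. Your explicit attention to where the bound $G$ is legitimately available matches the paper's implicit use of the same facts.
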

\begin{proof}
First, note that
\begin{align}\label{eqn:intermediate_1}
\left \Vert \nabla \ell \left (w - \alpha \nabla \hmatL(w,\D), z \right) \right \Vert 
& \leq \| \nabla \ell(w,z) \| + \alpha L \| \hmatL(w,\D) \| \nonumber \\
& \leq (1+\alpha L)G \leq 2G,	
\end{align} 
where the first inequality follows from smoothness of $\ell(.,\tilde{z})$ for any $\tilde{z}$, and the second inequality is obtained using the bounded gradient assumption. 	
To show \eqref{eqn:ext_lipschitz_2}, let us define $\psi(w) = \ell \left (w - \alpha \nabla \hmatL(w,\D), z \right)$ for any $w \in \W$. Note that
\begin{align}
\psi(w) - \psi(v) &= \int_{0}^{1}	\nabla \psi(v + s (w-v))^\top (w-v) ds,
\end{align}
and hence, 
\begin{align}
& | \psi(w) - \psi(v)| \leq \int_{0}^{1}	 \| \nabla \psi(v + s (w-v))\| \cdot \| w-v \| ds \nonumber \\
&= \| w-v \| \int_{0}^{1}	 
\left \| 
\left ( I - \alpha \nabla^2 \hmatL(v + s (w-v),\D) \right ) \nabla \ell \left (v + s (w-v) - \alpha \nabla \hmatL(v + s (w-v),\D), z \right)
\right \|
ds \nonumber \\
& \leq 
2 \| w-v \| \int_{0}^{1}	 
\left \|  
\nabla \ell \left (v + s (w-v) - \alpha \nabla \hmatL(v + s (w-v),\D), z \right)
\right \|
ds,
\end{align}
where the last inequality follows from $\| \nabla^2 \hmatL(v + s (w-v),\D) \| \leq L$ and $\alpha \leq 1/L$.
Therefore, it suffices to bound 
\begin{align*}
\left \| 
\nabla \ell \left (v + s (w-v) - \alpha \nabla \hmatL(v + s (w-v),\D), z \right)
\right \|.
\end{align*}
Using the fact that $\W$ is convex, we have $v + s (w-v) \in \W$, and hence we could use the same approach in \eqref{eqn:intermediate_1} and complete the proof.
\end{proof}
\begin{lemma} \label{lemma:extended_boundedness}
Suppose Assumptions \ref{assumption_lipschitz} and \ref{assumption_boundedness} hold. Then, with $\alpha \leq 1/L$, and for any batch $\D$ and $z \in \Z$, we have
\begin{equation}\label{eqn:ext_lipschitz}
\left \Vert \ell \left (w - \alpha \nabla \hmatL(w,\D), z \right) \right \Vert \leq M + 2\alpha G^2,
\end{equation} 
for any $w \in \W$. 	
\end{lemma}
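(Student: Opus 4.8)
The plan is to transport the boundedness guarantee of Assumption~\ref{assumption_boundedness}, which holds only on $\W$, to the shifted point $u := w - \alpha \nabla \hmatL(w,\D)$, which need not lie in $\W$. The difficulty is precisely this: since $u$ may fall outside $\W$, neither the value bound $\ell(\cdot,z)\le M$ nor the gradient bound $\norm{\nabla\ell(\cdot,z)}\le G$ can be invoked directly at $u$. I would anchor the estimate at $w\in\W$, where both bounds are available, and then travel from $w$ to $u$ along the segment $\{w+s(u-w): s\in[0,1]\}$ using the $L$-smoothness of Assumption~\ref{assumption_lipschitz}(iii), which crucially holds over all of $\R^d$ rather than merely on $\W$.

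First I would bound the step length: since $\nabla \hmatL(w,\D)$ is an average of gradients $\nabla\ell(w,z')$ with $w\in\W$, Assumption~\ref{assumption_lipschitz}(ii) and the triangle inequality give $\norm{\nabla\hmatL(w,\D)}\le G$, so $\norm{u-w}=\alpha\norm{\nabla\hmatL(w,\D)}\le \alpha G$. Next I would write the fundamental theorem of calculus along the segment,
\begin{equation*}
\ell(u,z)-\ell(w,z) = \int_0^1 \nabla\ell\big(w+s(u-w),z\big)^\top (u-w)\, ds,
\end{equation*}
and control the integrand. By $L$-smoothness over $\R^d$ together with $\norm{\nabla\ell(w,z)}\le G$ (valid since $w\in\W$),
\begin{equation*}
\norm{\nabla\ell\big(w+s(u-w),z\big)} \le \norm{\nabla\ell(w,z)} + L\,\norm{s(u-w)} \le G + \alpha L G \le 2G,
\end{equation*}
where the last step uses $\alpha\le 1/L$. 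This is exactly the estimate already carried out in \eqref{eqn:intermediate_1} in the proof of Lemma~\ref{lemma:ext_lipschitz}. Combining the two displays yields $|\ell(u,z)-\ell(w,z)|\le 2G\cdot\alpha G = 2\alpha G^2$.

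Finally I would conclude by the triangle inequality: $\ell(u,z)\le \ell(w,z) + |\ell(u,z)-\ell(w,z)| \le M + 2\alpha G^2$, where $\ell(w,z)\le M$ follows from Assumption~\ref{assumption_boundedness} since $w\in\W$. Because $\ell$ is nonnegative, this upper bound is simultaneously a bound on $|\ell(u,z)|$, which gives the claim. The only genuinely substantive point is the one flagged at the outset, namely that smoothness is assumed on all of $\R^d$ and not just on $\W$; this is what legitimizes bounding the gradient along the portion of the segment lying outside the feasible set, and it is the crux of the argument. Everything else is a routine first-order estimate.
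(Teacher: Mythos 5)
Your proof is correct and takes essentially the same route as the paper's: the paper sets $h(\eta) := \ell\left(w - \eta \nabla \hmatL(w,\D), z\right)$ and applies the mean value theorem with $|h'(\eta)| \leq 2G^2$, which is exactly your fundamental-theorem-of-calculus estimate along the segment reparametrized by $\eta = s\alpha$, with the gradient bound along the path supplied by the same computation as in \eqref{eqn:intermediate_1} of Lemma~\ref{lemma:ext_lipschitz}. Your explicit remark that smoothness over all of $\R^d$ (not just $\W$) is what makes the off-feasible-set segment controllable is a correct and worthwhile observation that the paper leaves implicit.
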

\begin{proof}
Let $h(\eta) := \ell \left (w - \eta \nabla \hmatL(w,\D), z \right)$. Using Lemma \ref{lemma:ext_lipschitz}, it is easy to verify that $|h'(\eta)| \leq 2G^2$, and hence, using Mean-value Theorem, we have $ |h(\alpha) - h(0) | \leq 2\alpha G^2$. This result, along with the fact that $|h(0)| = |\ell \left (w, z \right)| \leq M$ by Assumption \ref{assumption_boundedness} completes the proof. 
\end{proof}
As we stated in Section \ref{sec:Formulation}, MAML uses an unbiased gradient estimate at each iteration. The next lemma provides an upper bound on the variance of such estimate.
\begin{lemma}\label{lemma:variance}
Consider the function $\hF_i(., \matS_i)$ defined in \eqref{eqn:empirical_prob} with $\alpha \leq \tfrac{1}{L}$. Suppose the conditions in Assumption \ref{assumption_lipschitz} are satisfied. Recall that for 	batches $\D_i^\tein \subset \matS_i^\tein$ with size $K$ and $\D_i^\teout \subset \matS_i^\teout$ with size $b$, 
\begin{align*}
& g_i(w;\D_i^\tein, \D_i^\teout) =  
\left (I_d -  \alpha \nabla^2 \hmatL(w,\D_i^\tein) \right) \nabla \hmatL \left (w - \alpha \nabla \hmatL(w,\D_i^\tein), \D_i^\teout \right)	
\end{align*}
is an unbiased estimate of $\nabla \hF_i(w, \matS_i)$. Then, for any $w \in \W$, we have
\begin{subequations}
\begin{align*}
& \Vert g_i(w;\D_i^\tein, \D_i^\teout) \Vert \leq 4G, \\
& \E_{\D_i^\tein, \D_i^\teout} \left[
\left \| g_i(w;\D_i^\tein, \D_i^\teout) - \nabla \hF_i(w, \matS_i) \right \|^2
\right]	
\bigO(1) G^2 \left( \frac{\alpha^2 L^2}{K} + \frac{1}{b} \right ).
\end{align*}
\end{subequations}
\end{lemma}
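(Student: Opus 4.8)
The plan is to take the stated unbiasedness $\E_{\D_i^\tein,\D_i^\teout}[g_i(w;\D_i^\tein,\D_i^\teout)]=\nabla\hF_i(w,\matS_i)$ as given and to prove the uniform norm bound and the variance bound separately. For the norm bound I would factor $g_i$ into the prefactor $I_d-\alpha\nabla^2\hmatL(w,\D_i^\tein)$ and the outer gradient $\nabla\hmatL(w-\alpha\nabla\hmatL(w,\D_i^\tein),\D_i^\teout)$. The prefactor has operator norm at most $1+\alpha L\le 2$ by the triangle inequality, $L$-smoothness, and $\alpha\le 1/L$; the outer gradient is an average over the $b$ points of $\D_i^\teout$ of terms of the form $\nabla\ell(w-\alpha\nabla\hmatL(w,\D_i^\tein),z)$, each bounded by $2G$ via Lemma~\ref{lemma:ext_lipschitz}, so its norm is at most $2G$. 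Multiplying the two factors gives $\|g_i\|\le 4G$.

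For the variance I would introduce the intermediate quantity $\bar g_i(w;\D_i^\tein):=(I_d-\alpha\nabla^2\hmatL(w,\D_i^\tein))\nabla\hmatL(w-\alpha\nabla\hmatL(w,\D_i^\tein),\matS_i^\teout)$, i.e.\ $g_i$ with the outer gradient taken over the whole set $\matS_i^\teout$ rather than over the batch $\D_i^\teout$. Since $\D_i^\teout$ is drawn from $\matS_i^\teout$ independently of $\D_i^\tein$, and the prefactor and the inner step are fixed given $\D_i^\tein$, we have $\E_{\D_i^\teout}[g_i\mid\D_i^\tein]=\bar g_i$, and by the stated unbiasedness $\E_{\D_i^\tein}[\bar g_i]=\nabla\hF_i$. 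The tower property then annihilates the cross term and yields the orthogonal split $\E\|g_i-\nabla\hF_i\|^2=\E\|g_i-\bar g_i\|^2+\E\|\bar g_i-\nabla\hF_i\|^2$, isolating the outer-batch ($1/b$) and inner-batch ($1/K$) contributions.

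The outer term is routine: with $u:=w-\alpha\nabla\hmatL(w,\D_i^\tein)$ we have $g_i-\bar g_i=(I_d-\alpha\nabla^2\hmatL(w,\D_i^\tein))[\nabla\hmatL(u,\D_i^\teout)-\nabla\hmatL(u,\matS_i^\teout)]$; the prefactor is again at most $2$, and the bracket is the deviation of a $b$-sample empirical gradient mean from its population mean, whose summands $\nabla\ell(u,z)$ are bounded by $2G$ (Lemma~\ref{lemma:ext_lipschitz}), so its conditional second moment is at most $4G^2/b$ (sampling without replacement only tightens this via the finite-population correction). This gives $\E\|g_i-\bar g_i\|^2=\bigO(1)G^2/b$.

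The inner term is the delicate part. I would bound $\E\|\bar g_i-\nabla\hF_i\|^2=\mathrm{Var}[\bar g_i]\le\E\|\bar g_i-g^{\mathrm{ref}}\|^2$, where $g^{\mathrm{ref}}$ is the deterministic version built from the full-sample inner gradient $P:=\nabla\hmatL(w,\matS_i^\tein)$ and Hessian $\bar Q:=\nabla^2\hmatL(w,\matS_i^\tein)$, and $B:=\nabla\hmatL(w-\alpha P,\matS_i^\teout)$ with $\|B\|\le 2G$ (Lemma~\ref{lemma:ext_lipschitz}). Writing $Q:=\nabla^2\hmatL(w,\D_i^\tein)$ and $p:=\nabla\hmatL(w,\D_i^\tein)$, the identity
\begin{equation*}
\bar g_i-g^{\mathrm{ref}}=(I_d-\alpha Q)\big[\nabla\hmatL(w-\alpha p,\matS_i^\teout)-B\big]+\alpha(\bar Q-Q)B
\end{equation*}
separates a gradient-perturbation term, controlled by $L$-smoothness of $\hmatL(\cdot,\matS_i^\teout)$ as $\le 2\alpha L\|p-P\|$ with $\E\|p-P\|^2\le G^2/K$ (hence $\bigO(\alpha^2L^2G^2/K)$), from a Hessian-perturbation term $\alpha(\bar Q-Q)B$. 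The key trick, which I expect to be the main obstacle, is to avoid bounding $\|\bar Q-Q\|$ by matrix concentration, which would leak a dimension or $\log d$ factor absent from the stated bound: instead I would write $(\bar Q-Q)B=\tfrac1K\sum_{z\in\D_i^\tein}(\bar Q-\nabla^2\ell(w,z))B$ as an average of i.i.d.\ mean-zero \emph{vectors}, each of norm at most $4LG$, whose second moment is therefore $\bigO(L^2G^2/K)$ with no dimension dependence; including the $\alpha^2$ prefactor this contributes $\bigO(\alpha^2L^2G^2/K)$. Adding the two pieces gives $\E\|\bar g_i-\nabla\hF_i\|^2=\bigO(1)\alpha^2L^2G^2/K$, and combining with the outer term yields the claimed $\bigO(1)G^2(\alpha^2L^2/K+1/b)$.
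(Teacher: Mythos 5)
Your proposal is correct and follows essentially the same route as the paper's proof: the norm bound is identical, and for the variance you use the same two intermediate quantities --- $g_i(w;\D_i^\tein,\matS_i^\teout)$ to peel off the outer-batch error, and the full-sample reference $g_i(w;\matS_i^\tein,\matS_i^\teout)$ together with $\mathrm{Var}[X]\le\E\|X-c\|^2$ to handle the inner batch --- and the same split of the inner error into a gradient-perturbation piece (via $L$-smoothness, contributing $\bigO(\alpha^2L^2G^2/K)$) and a Hessian-perturbation piece. The one place you genuinely diverge is the Hessian term: the paper bounds $\E\bigl[\|\nabla^2\hmatL(w,\D_i^\tein)-\nabla^2\hmatL(w,\matS_i^\tein)\|^2\bigr]$ directly as the second moment of a matrix-valued sample average and asserts an $L^2/K$ decay, which for the operator norm does not follow from the scalar variance computation (and routing through the Frobenius norm would cost a dimension factor); your trick of first applying $\bar Q - Q$ to the fixed vector $B$ and only then averaging mean-zero vectors of norm at most $4LG$ yields the same $\bigO(L^2G^2/K)$ rigorously and dimension-free. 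This is a small but real tightening of the argument; everything else, including the finite-population correction for the without-replacement inner batch, matches the paper.
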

\begin{proof}
Recall from Lemma \ref{lemma:ext_lipschitz} that
\begin{align}\label{eqn:bound_norm_1}
\left \Vert \nabla \ell \left (w - \alpha \nabla \hmatL(w,\D_i^\tein), z \right) \right \Vert 
\leq 2G
\end{align} 
As a result, we have
\begin{align*}
\Vert g_i(w;\D_i^\tein, \D_i^\teout) \Vert 
& \leq \| I_d -  \alpha \nabla^2 \hmatL(w,\D_i^\tein) \| \cdot \| \nabla \hmatL \left (w - \alpha \nabla \hmatL(w,\D_i^\tein), \D_i^\teout \right)	\|	\\
& \leq (1+\alpha L) 2G \leq 4G.
\end{align*}
To show the second result, we first claim
\begin{equation}
\E_{\D_i^\tein} \left[
\left \| g_i(w;\D_i^\tein, \matS_i^\teout) - g_i(w;\matS_i^\tein, \matS_i^\teout) \right \|^2
\right]	\leq 36 \frac{\alpha^2 L^2G^2}{K}.
\end{equation}
To show this, let us define
\begin{subequations}
\begin{align*}
e_H & := \left (I_d -  \alpha \nabla^2 \hmatL(w,\D_i^\tein) \right) - \left (I_d -  \alpha \nabla^2 \hmatL(w,\matS_i^\tein) \right) 
= \alpha \left ( \nabla^2 \hmatL(w,\matS_i^\tein) - \nabla^2 \hmatL(w,\D_i^\tein) \right) \\
e_G &:= \nabla \hmatL \left (w - \alpha \nabla \hmatL(w,\D_i^\tein), \matS_i^\teout \right)
- \nabla \hmatL \left (w - \alpha \nabla \hmatL(w,\matS_i^\tein), \matS_i^\teout \right).	 	
\end{align*}	
\end{subequations} 
Note that, by Assumption \ref{assumption_lipschitz}, we have
\begin{align}\label{eqn:bound_norm_2}
\|e_H\| \leq 2\alpha L,
\quad \|e_G\| \leq \alpha L \|\nabla \hmatL(w,\D_i^\tein) - \nabla \hmatL(w,\matS_i^\tein)\| \leq 2\alpha LG.	
\end{align}
In addition, using the fact that batch $\D_i^\tein$ is chosen uniformly at random, we have
\begin{align}\label{eqn:bound_norm_3}
\E_{\D_i^\tein} [\|e_H\|^2] & \leq \alpha^2 \frac{L^2}{K} \cdot \frac{n-K}{n-1},
\quad \E_{\D_i^\tein} [\|e_G\|^2] \leq \alpha^2 L^2 \frac{G^2}{K} \cdot \frac{n-K}{n-1}.
\end{align}
Next, note that 
\begin{align*}
& g_i(w;\D_i^\tein, \matS_i^\teout) - g_i(w;\matS_i^\tein, \matS_i^\teout) \\
&= e_H \nabla \hmatL \left (w - \alpha \nabla \hmatL(w,\matS_i^\tein), \matS_i^\teout \right)
+ e_G \left (I_d -  \alpha \nabla^2 \hmatL(w,\matS_i^\tein) \right)
+ e_G e_H.
\end{align*}
Hence, using Cauchy-Schwarz inequality along with \eqref{eqn:bound_norm_1}, we have
\begin{align*}
\E_{\D_i^\tein}& \left[
\left \| g_i(w;\D_i^\tein, \matS_i^\teout) - g_i(w;\matS_i^\tein, \matS_i^\teout) \right \|^2
\right] \\
&\leq 3 (2G)^2 \E_{\D_i^\tein} [\|e_H\|^2]
+ 3(1+\alpha L)^2 \E_{\D_i^\tein} [\|e_G\|^2]
+ 3 \E_{\D_i^\tein} [\|e_G e_H\|^2] \\
&\leq 12G^2 \E_{\D_i^\tein} [\|e_H\|^2] + 12 \E_{\D_i^\tein} [\|e_G\|^2] + 12\alpha^2 L^2 G^2 \E_{\D_i^\tein} [\|e_H\|^2].
\end{align*}
where the last inequality is obtained using \eqref{eqn:bound_norm_2} and $\alpha L \leq 1$. 
Now, using \eqref{eqn:bound_norm_2}, we have
\begin{align*}
\E_{\D_i^\tein}& \left[
\left \| g_i(w;\D_i^\tein, \matS_i^\teout) - g_i(w;\matS_i^\tein, \matS_i^\teout) \right \|^2
\right] \\
&\leq 12(2+\alpha^2 L^2) \frac{n-K}{n-1} \cdot \frac{\alpha^2 L^2G^2}{K} \leq 36 \frac{\alpha^2 L^2G^2}{K}.
\end{align*} 
which is the desired claim. Using this result and \eqref{eqn:bound_norm_1}, we imply
\begin{align}
\E_{\D_i^\tein, \D_i^\teout} & \left[
\left \| g_i(w;\D_i^\tein, \D_i^\teout) - \nabla \hF_i(w, \matS_i) \right \|^2
\right] \nonumber \\
& \leq \E_{\D_i^\tein} \left[
\left \| g_i(w;\D_i^\tein, \matS_i^\teout) - \nabla \hF_i(w, \matS_i) \right \|^2
\right] + \frac{4G^2}{b} \nonumber \\
& \leq 4 \left ( 36 \frac{\alpha^2 L^2G^2}{K} + \frac{G^2}{b} \right )
\end{align}
and the proof is complete.
\end{proof}
\section{Proof of Proposition \ref{proposition:main_training}} \label{proof-proposition:main_training}
Recall that 
\begin{equation*}
w^{t+1}  = \prod_\W \left ( w^t - \beta_t g^t \right),   
\end{equation*}
where $g^t := \frac{1}{r} \sum_{i \in \B_t} g_i(w^t ;\D_i^{t,\tein}, \D_i^{t,\teout})$ is an unbiased estimate of $\hF(w^t)$. Furthermore, by Lemma \ref{lemma:variance}, we know that $\|g^t\| \leq \tG:=4G$. Also, recall from Lemma \ref{lemma:F_smoothness_convexity} that $\hF$ is $\lambda$-strongly convex with  $\lambda := \mu/8$.

Let $\F^t$ be the $\sigma$-field generated by the information up to time $t$ (and not including iteration $t$, such as the randomness in $\B_t$, etc.) It is worth noting that $\E[g^t \mid \F^t] = \nabla \hF(w^t)$.

First, we claim that similar to the proof of Lemma 1 in \cite{rakhlin2011making}, we could show
\begin{equation} \label{eqn:recursive_1}
\E[\|w^{t+1} - w^* \|^2] \leq (1-2 \beta_t \lambda)\E[\|w^{t} - w^* \|^2] + \beta_t^2 \tG^2,  
\end{equation}
where $w^*$ is the minimizer of $\hF(.,\matS)$ over $\W$. To see this, and for the sake of completeness, let us recall the steps of the proof. Note that
\begin{align}
\E\left[\left\|w^{t+1}-w^*\right\|^{2}\right] &=
\E\left[\left\|\prod_{\W}\left(w^{t}-\beta_t g^t\right)-w^*\right\|^{2}\right] \nonumber \\
& \leq \E\left[\left\|w^{t}-\beta_t g^t-w^*\right\|^{2}\right]  \label{eqn:lemma_opt_1} \\
&=\E\left[\left\|w^{t}-w^*\right\|^{2}\right]-2 \beta_t \E\left[\left\langle g^t, w^{t}-w^*\right\rangle\right]+\beta_t^{2} \E\left[\left\|g^t\right\|^{2}\right] \nonumber \\
&=\E\left[\left\|w^{t}-w^*\right\|^{2}\right]-2 \beta_t \E\left[\left\langle \hF(w^t), w^{t}-w^*\right\rangle\right]+\beta_t^{2} \E\left[\left\| g^t\right\|^{2}\right]  \label{eqn:lemma_opt_2},
\end{align}
where \eqref{eqn:lemma_opt_1} follows from non-expansivity of projection and \eqref{eqn:lemma_opt_2} comes from the fact that $w^t \in \F^t$ and $\E[g^t \mid \F^t] = \nabla \hF(w^t)$. Now, having \eqref{eqn:lemma_opt_2}, and using $\|g^t\| \leq \tG$ along with the strong convexity of $\hF$, we have
\begin{align}
\E\left[\left\|w^{t+1}-w^*\right\|^{2}\right] & \leq 
\E\left[\left\|w^{t}-w^*\right\|^{2}\right]-2 \beta_t \E\left[\hF\left(w^{t}\right)-\hF\left(w^*\right)+\frac{\lambda}{2}\left\|w^{t}-w^*\right\|^{2}\right]+\beta_t^{2} \tG^{2} \nonumber \\
& \leq \E\left[\left\|w^{t}-w^*\right\|^{2}\right]-2 \beta_t \E\left[\frac{\lambda}{2}\left\|w^{t}-w^*\right\|^{2}+\frac{\lambda}{2}\left\|w^{t}-w^*\right\|^{2}\right]+\beta_t^{2} \tG^{2} \label{eqn:lemma_opt_3} \\
&=\left(1-2 \beta_t \lambda\right) \E\left[\left\|w^{t}-w^*\right\|^{2}\right]+\beta_t^{2} \tG^{2}, \nonumber
\end{align}
where \eqref{eqn:lemma_opt_3} follows from Lemma \ref{lemma:basic}.
Next, note that $\beta_t$ is given by
\[
    \beta_t = \left\{\begin{array}{lr}
        \beta, & \text{for } t \leq t^* - 1\\
        \frac{1}{\lambda (t+1)}, & \text{for } t > t^* -1
        \end{array} \right. ,
        \quad \text{with } t^* := \floor{\frac{1}{\beta \lambda}}.
  \]
For any $t \leq t^*$, from \eqref{eqn:recursive_1} and Lemma 2 in \cite{rakhlin2011making}, we obtain
\begin{align}
\E[\|w^{t} - w^* \|^2] \leq  \frac{\tG^2}{\lambda^2} + \beta^2 \tG^2 t \leq \frac{\tG^2 (t + 3)}{\lambda^2 (t+1)}.
\end{align}
Also, note that, for $t \geq t^*$, we have
\begin{equation}
\E[\|w^{t+1} - w^* \|^2] \leq (1-\frac{2}{t+1})\E[\|w^{t} - w^* \|^2] + \frac{\tG^2}{\lambda^2 (t+1)^2}.  
\end{equation}
Hence, by induction, it can be seen that for any t, we have
\begin{equation}
\E[\|w^{t} - w^* \|^2] \leq \frac{\tG^2 (t^* + 3)}{\lambda^2 (t+1)}.  
\end{equation}
Using Lemma \ref{lemma:basic} gives us \eqref{eqn:opt_last_iterate}.

To obtain the bound on the time-average iterate, first, we could similarly, modify the result in \cite{hazan2007logarithmic} to obtain
\begin{equation*}
2 \E [\hF(\bar{w}^T) - \hF(w^*)]
\leq \frac{1}{T} \left( 
\|w^0 - w^*\|^2 (\frac{1}{\beta_1} - \lambda)
+\sum_{t=1}^{T-1} \E[\|w^t-w^*\|^2] (\frac{1}{\beta_{t+1}} - \frac{1}{\beta_t} - \lambda)
+\tG^2 \sum_{t=1}^{T} \beta_t
\right).
\end{equation*}
It can be easily verified that for $\beta_t = \min (\beta, \frac{8}{\mu (t+1)})$, the term $\frac{1}{\beta_{t+1}} - \frac{1}{\beta_t} - \lambda$ is always non-positive. Hence, we have
\begin{align}
\E [\hF(\bar{w}^T) - \hF(w^*)]
& \leq \|w^0 - w^*\|^2 \frac{1/\beta - \lambda}{T+1} + \frac{2 \tG^2}{T+1} \sum_{t=0}^{T} \beta_t \nonumber \\
& \leq \bigO(1) \frac{\tG^2}{\lambda T} (1 + \log(T)-\log(t^*)) 
\leq \bigO(1) \frac{\tG^2}{\lambda T} \left (\frac{1}{\beta \lambda} + \log(T) \right),
\end{align}
where the last inequality follows from the fact that $4\tG^2/\lambda^2 \geq \|w^0 - w^*\|^2$ (see Lemma 2 in \cite{rakhlin2011making} for the proof.)
\section{Proof of Theorem \ref{thm:stab_gen}}\label{proof-thm:stab_gen}
To show the claim, it just suffices to show	that for any $i$, we have
\begin{align}\label{eqn:stab_gen_1}
\E_{\A, \matS} \left[ F_i(w_\matS) - \hF_i(w_\matS, \matS_i) \right] \leq \gamma.	
\end{align} 
Consider
\begin{equation*}
\matS_i^\tein = \{z^\tein_1,...,z^\tein_n\}, \quad \matS_i^\teout=\{z^\teout_1,...,z^\teout_n\}.	
\end{equation*}
To see this, first note that
\begin{align*}
F_i(w_\matS) 
= \E_{\{z_j\}_{j=1}^K , \tilde{z}} \left[ \ell \left (w_\matS - \alpha \nabla \hmatL(w_\matS, \{z_j\}_{j=1}^K), \tilde{z} \right ) \right],	
\end{align*}
where $\{z_j\}_{j=1}^K$ are $K$ distinct points sampled from $p_i$ and $\tilde{z}$ is also independently sampled from $p_i$. By Assumption \ref{assumption_non_atmoic}, we could assume $\tilde{z}$ is different from $K$ other points. Note that we have
\begin{align}\label{eqn:stab_gen_2}
\E_{\matS}[F_i(w_\matS)]	
= \E_{\matS, \{z_j\}_{j=1}^K , \tilde{z}} \left[ \ell \left (w_\matS - \alpha \nabla \hmatL(w_\matS, \{z_j\}_{j=1}^K), \tilde{z} \right ) \right].
\end{align}
Next, note that, we can write $\hF_i(w_\matS, \matS_i)$ as
\begin{align*}
\hF_i(w_\matS, \matS_i) = 
\frac{1}{\binom{n}{K} |\matS_i^\teout|} \sum_{\substack{\{\zeta_j\}_{j=1}^K \subset [n] \\ \tilde{\zeta} \in [n] }} 
\ell \left (w_\matS - \alpha \nabla \hmatL(w_\matS,\{z^\tein_{\zeta_j}\}_{j=1}^K), z^\teout_{\tilde{\zeta}} \right).	
\end{align*}
Thus, we have 
\begin{align*}
\E_{\A, \matS}[\hF_i(w_\matS, \matS_i)] = 
\frac{1}{\binom{n}{K} |\matS_i^\teout|} \sum_{\substack{\{\zeta_j\}_{j=1}^K \subset [n] \\ \tilde{\zeta} \in [n] }} 
\E_{\A, \matS} \left [\ell \left (w_\matS - \alpha \nabla \hmatL(w_\matS,\{z^\tein_{\zeta_j}\}_{j=1}^K), z^\teout_{\tilde{\zeta}} \right) \right ].	 	
\end{align*}
Notice that,  $\{\zeta_j\}_{j=1}^K$ are all different, and hence, due to the symmetry, all the expectations on the RHS are equal. Hence, for a fixed $\{\zeta_j\}_{j=1}^K \subset [n]$ and $\tilde{\zeta} \in [n]$, we have
\begin{align}
\E_{\A, \matS}[\hF_i(w_\matS, \matS_i)] &= 
\E_{\A, \matS}\left [\ell \left (w_\matS - \alpha \nabla \hmatL(w_\matS,\{z^\tein_{\zeta_j}\}_{j=1}^K), z^\teout_{\tilde{\zeta}} \right) \right ] \nonumber \\
&= \E_{\A, \matS, \{z_j\}_{j=1}^K , \tilde{z}} \left [\ell \left (w_\matS - \alpha \nabla \hmatL(w_\matS,\{z^\tein_{\zeta_j}\}_{j=1}^K), z^\teout_{\tilde{\zeta}} \right) \right ] \label{eqn:stab_gen_3}
\end{align} 
Next, define the dataset $\tilde{\matS}$ by substituting $z^\tein_{\zeta_j}$ with $z_j$, for all $j$, and $z^\teout_{\tilde{\zeta}}$ with $\tilde{z}$. It is straightforward to see that
\begin{align*}
\E_{\A, \matS, \{z_j\}_{j=1}^K , \tilde{z}} \left [\ell \left (w_\matS - \alpha \nabla \hmatL(w_\matS,\{z^\tein_{\zeta_j}\}_{j=1}^K), z^\teout_{\tilde{\zeta}} \right) \right ]
= \E_{\A, \matS, \{z_j\}_{j=1}^K , \tilde{z}} \left [\ell \left (w_{\tilde{\matS}} - \alpha \nabla \hmatL(w_{\tilde{\matS}},\{z_j\}_{j=1}^K), \tilde{z} \right) \right ]	
\end{align*}
Therefore, using \eqref{eqn:stab_gen_3}, we obtain
\begin{align}\label{eqn:stab_gen_4}
\E_{\A, \matS}[\hF_i(w_\matS, \matS_i)] 
= \E_{\A, \matS, \{z_j\}_{j=1}^K , \tilde{z}} \left [\ell \left (w_{\tilde{\matS}} - \alpha \nabla \hmatL(w_{\tilde{\matS}},\{z_j\}_{j=1}^K), \tilde{z} \right) \right ].	
\end{align} 
Putting \eqref{eqn:stab_gen_2} and \eqref{eqn:stab_gen_4} together, we have
\begin{align}
\E_{\A, \matS}& \left[ F_i(w_\matS) - \hF_i(w_\matS, \matS_i) \right] \nonumber\\
&\leq \E_{\A, \matS, \{z_j\}_{j=1}^K , \tilde{z}} \left[ 
\left | \ell \left (w_\matS - \alpha \nabla \hmatL(w_\matS, \{z_j\}_{j=1}^K), \tilde{z} \right ) 
- \ell \left (w_{\tilde{\matS}} - \alpha \nabla \hmatL(w_{\tilde{\matS}},\{z_j\}_{j=1}^K), \tilde{z} \right) \right |
\right] \nonumber \\
& = \E_{\matS, \{z_j\}_{j=1}^K , \tilde{z}} \left[ 
\E_\A \left[ 
\left |  \ell \left (w_\matS - \alpha \nabla \hmatL(w_\matS, \{z_j\}_{j=1}^K), \tilde{z} \right ) 
- \ell \left (w_{\tilde{\matS}} - \alpha \nabla \hmatL(w_{\tilde{\matS}},\{z_j\}_{j=1}^K), \tilde{z} \right) \right |
\right] \right]  	
\end{align}
where the last equality follows from Tonelli' theorem. Finally, note that since $\A$ is $(\gamma, K)$-uniformly stable, we could bound the the inner integral by $\gamma$, i.e., 
\begin{align*}
\E_\A \left[ 
\left |  \ell \left (w_\matS - \alpha \nabla \hmatL(w_\matS, \{z_j\}_{j=1}^K), \tilde{z} \right ) 
- \ell \left (w_{\tilde{\matS}} - \alpha \nabla \hmatL(w_{\tilde{\matS}},\{z_j\}_{j=1}^K), \tilde{z} \right) \right |
\right] \leq \gamma,	
\end{align*} 
and thus, we obtain the desired result \eqref{eqn:stab_gen_1}. 
\section{Proof of Theorem \ref{thm:stab_MAML}} \label{proof-thm:stab_MAML}
The stability definition says there is one $i$ such that the two datasets $\matS$ and $\tilde{\matS}$ differ only in the the two following terms:
 \begin{itemize}
 	\item  $\tilde{\matS}_i^\tein$ differs from $\matS_i^\tein$ in at most $K$ points. We show those $K$ samples by $\{z_j\}_{j=1}^K$ and $\{\tilde{z}_j\}_{j=1}^K$, respectively.
 	\item $\tilde{\matS}_i^\teout$  differs from $\matS_i^\teout$ in at most one point. We show those by $\zeta$ and $\tilde{\zeta}$, respectively.
 \end{itemize}
Let's consider two parallel processes of generating iterates $\{w^t\}$ and $\{\tilde{w}^t\}$ by using datasets $\matS$ and $\tilde{\matS}$, respectively. We use the tilde superscript to refer to the second process throughout the proof. Also, we use $D_i^{t, \teout}$ and $D_i^{t, \tein}$ to refer to indices of samples in $\D_i^{t, \teout}$ and $\D_i^{t, \tein}$, respectively. Also, with a slight abuse of notation, by $\hmatL(w^t,D_i^{\tein/\teout})$ we mean $\hmatL(w^t,\D_i^{\tein/\teout})$.

Note that the randomness of algorithm comes from the randomness in drawing batches at each iteration. We do a coupling argument here. We could assume the two parallel processes of generating iterates $\{w^t\}$ and $\{\tilde{w}^t\}$ use the same random machine for sampling batches. In other words, $\B_t = \tilde{\B}_t$, $D_i^{t, \teout} = \tilde{D}_i^{t, \teout}$, and $D_i^{t, \tein} = \tilde{D}_i^{t, \tein}$  

For one particular realization:
\begin{itemize}
	\item Let $u_t$ be the number of times that the index corresponding to sample $\zeta$ (or $\tilde{\zeta}$) is chosen in $D_i^{t, \teout}$. Note that this number could be zero if $i \notin \B_t$, and it could be greater than one if $i \in \B_t$ since $D_i^{t, \teout}$ is chosen with replacement.  
	\item Let $v_t$ be the number of indices corresponding to the samples $\{z_j\}_{j=1}^K$ (or $\{\tilde{z}_j\}_{j=1}^K$) that appears in $D_i^{t, \tein}$. Again, this number could be zero if $i \notin \B_t$. Also, note that we take $D_i^{t, \tein}$ as a batch of $K$ different samples from $\matS_i^\tein$, and hence, each one of $j$ indices appears at most one time in $D_i^{t, \tein}$.
\end{itemize}

The rest of the proof has three steps:
\begin{enumerate}
\item 
First, recall the definition of $b$ and $r$ from Alghorithm \ref{Algorithm1}. We claim
\begin{equation}\label{eqn:stab_bound1}
\E[u_t] = \frac{br}{nm}, \quad \E[v_t] = \frac{K^2r}{nm}.	
\end{equation}
The first one is easy to see. Task $i$ is in $\B_t$ with probability $r/m$, and if that happens, then $u_t$ would have a binomial distribution with mean $b/n$. To see the second one, note that
\begin{equation*}
\mathbb{P}(v_t = j) = \binom{K}{j} \binom{n-K}{K-j},	
\end{equation*}
and therefore,
\begin{align*}
\E[v_t|i \in \B_t] &= \frac{1}{\binom{n}{K}} \sum_{j=0}^K j \binom{K}{j} \binom{n-K}{K-j}. 	
\end{align*} 
Using the fact that $\binom{K}{j} = \tfrac{K}{j} \binom{K-1}{j-1}$, we obtain
\begin{align}
\E[v_t|i \in \B_t] &= \frac{K}{\binom{n}{K}} \sum_{j=0}^K \binom{K-1}{j-1} \binom{n-K}{K-j} \nonumber \\
 & = \frac{K}{\binom{n}{K}} \sum_{j=0}^{K-1} \binom{K-1}{j} \binom{(n-1)-(K-1)}{(K-1)-j}. \label{eqn:stab_bound2}
\end{align} 
However, note that $\binom{K-1}{j} \binom{(n-1)-(K-1)}{(K-1)-j}$ is exactly the probability of $v_t=j$ if $K \to K-1$ and $n \to n-1$. Hence, the sum $\sum_{j=0}^{K-1} \binom{K-1}{j} \binom{(n-1)-(K-1)}{(K-1)-j}$ is equal to $\binom{n-1}{K-1}$, and plugging this into \eqref{eqn:stab_bound2} gives us the second part of the claim \eqref{eqn:stab_bound1}.
 \item 
 Second, we claim that under Assumption \ref{assumption_lipschitz} we have
 \begin{align}\label{eqn:stab_bound3}
\E_\A [ \| w^{T} - \tilde{w}^{T} \|] 
& \leq 	\frac{4G}{mn} (1 + \alpha L K) \frac{16(2L+\rho \alpha G) + \mu}{\mu (2L+\rho \alpha G)}.
\end{align}
Before showing its proof, note that since $L \geq \mu$, this could be simplified as
 \begin{align}\label{eqn:stab_bound3'}
\E_\A [ \| w^{T} - \tilde{w}^{T} \|] 
& \leq 
\bigO(1) \frac{G}{mn \mu} (1 + \alpha L K).
\end{align}
Now, let's show why this is true.
To simplify the notation, let us define $\psi(w;\D,z):= \ell \left (w - \alpha \nabla \hmatL(w,\D), z \right)$. 
We start by revisiting the following lemma from \cite{hardt2016train}:
\begin{lemma}
Let $\phi$ be a $\lambda$-strongly convex and $\eta$-smooth function. Then, for any $\beta \leq \frac{2}{\lambda+\eta}$, we have
\begin{equation*}
\| (u - \beta \nabla \phi(u)) - (v - \beta \nabla \phi(v)) \| 
\leq (1-\frac{\beta \lambda \eta}{\lambda+\eta}) \| u-v\|,	
\end{equation*}	
for any $u$ and $v$.
\end{lemma}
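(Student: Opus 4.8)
The plan is to show that the gradient–step map $G(w) := w - \beta \nabla \phi(w)$ is a contraction by reasoning about its \emph{squared} displacement and then passing back to the norm at the very end. First I would expand
\begin{equation*}
\norm{G(u) - G(v)}^2 = \norm{u-v}^2 - 2\beta \langle \nabla\phi(u) - \nabla\phi(v),\, u-v\rangle + \beta^2 \norm{\nabla\phi(u) - \nabla\phi(v)}^2 .
\end{equation*}
The goal is to control the cross term well enough that the whole right-hand side collapses to a multiple of $\norm{u-v}^2$.

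The key ingredient is the combined strong-monotonicity / co-coercivity estimate for a function that is simultaneously $\lambda$-strongly convex and $\eta$-smooth (Theorem 2.1.12 in Nesterov's text), namely
\begin{equation*}
\langle \nabla\phi(u) - \nabla\phi(v),\, u-v\rangle \geq \frac{\lambda\eta}{\lambda+\eta}\norm{u-v}^2 + \frac{1}{\lambda+\eta}\norm{\nabla\phi(u)-\nabla\phi(v)}^2 .
\end{equation*}
Substituting this lower bound into the cross term, the expansion becomes
\begin{equation*}
\norm{G(u)-G(v)}^2 \leq \Big(1 - \tfrac{2\beta\lambda\eta}{\lambda+\eta}\Big)\norm{u-v}^2 + \beta\Big(\beta - \tfrac{2}{\lambda+\eta}\Big)\norm{\nabla\phi(u)-\nabla\phi(v)}^2 .
\end{equation*}
Under the stepsize restriction $\beta \leq 2/(\lambda+\eta)$ the coefficient $\beta(\beta - 2/(\lambda+\eta))$ multiplying the gradient-difference term is non-positive, so I would simply discard that term.

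What remains is the elementary step. Setting $a := \beta\lambda\eta/(\lambda+\eta)$, the bound reads $\norm{G(u)-G(v)}^2 \leq (1-2a)\norm{u-v}^2$, and I would finish using $1 - 2a \leq (1-a)^2$ together with the fact that AM--GM gives $\lambda\eta \leq ((\lambda+\eta)/2)^2$, whence $a \leq \tfrac12 < 1$ and the factor $1-a$ is nonnegative; taking square roots then yields exactly $\norm{G(u)-G(v)} \leq (1 - \tfrac{\beta\lambda\eta}{\lambda+\eta})\norm{u-v}$. The only nontrivial input is the co-coercivity inequality, and that is where I expect the real content to sit: everything after it is algebra and a stepsize sign check. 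Since the lemma is quoted from \cite{hardt2016train}, I would cite that inequality as a standard convex-analysis fact; were a self-contained argument required, the obstacle would be deriving it, which one does by applying co-coercivity of the smooth convex function $\phi(w) - \tfrac{\lambda}{2}\norm{w}^2$.
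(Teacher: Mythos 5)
Your proof is correct and is precisely the standard argument behind this lemma: the paper itself does not prove it but only cites \cite{hardt2016train}, and the proof given there (and in Nesterov's text) is exactly your expansion of the squared displacement combined with the co-coercivity/strong-monotonicity inequality, followed by dropping the non-positive gradient-difference term and the elementary bound $1-2a \leq (1-a)^2$. Nothing is missing; the stepsize sign check and the AM--GM observation that $a \leq \tfrac12$ are the only details one needs, and you have them.
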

Next, recall from Lemma \ref{lemma:F_smoothness_convexity} that for any batch $\D$ and any $z \in \Z$, $\psi(w;\D,z)$  is $4L+2\alpha \rho G$ smooth and $\mu/8$ strongly convex.
Hence, using the above lemma, for any $j \in \B_t$ that $j \neq i$, we have
\begin{equation}\label{eqn:stab_bound_N_1}
\| w_j^{t+1} - \tilde{w}_j^{t+1} \| \leq  \left( 1 - \beta_t \frac{2 \mu (2L+\rho \alpha G)}{16(2L+\rho \alpha G) + \mu} \right ) \| w^{t} - \tilde{w}^{t} \|.
\end{equation}
Next, let us assume $i \in \B_t$. 
In this case, we have
\begin{align}
\| w_i^{t+1} - \tilde{w}_i^{t+1} \| 
\leq 
& \frac{1}{b} \sum_{z \in \D_i^{t,\teout}} 
\left \| \left ( w^t - \beta_t \nabla \psi(w^t; \D^{t, \tein}_i,z) \right )
- \left( \tilde{w}^t - \beta_t \nabla \psi(\tilde{w}^t; \tilde{\D}^{t, \tein}_i,z) \right) 
 \right \|. \label{eqn:stab_bound_N_2a} \\
 & + \frac{1}{b} \beta_t \sum_{z \in \tilde{\D}_i^{t,\teout} / \D_i^{t,\teout}}
\left \|
\nabla \psi(\tilde{w}^t; \tilde{\D}^{t, \tein}_i,z) - \nabla \psi(w^t; \D^{t, \tein}_i,z) 
\right \|. \label{eqn:stab_bound_N_2b}
\end{align}
For \eqref{eqn:stab_bound_N_2b}, note that we know by Lemma \ref{lemma:variance} that $\Vert \nabla \psi(w;, \D,z) \Vert \leq 4G$, and hence, since $|\tilde{\D}_i^{t,\teout} / \D_i^{t,\teout}| = u_t$, we could bound the second term by $8 \beta_t G {u_t}/{b}$. As a result, we have
\begin{align}\label{eqn:stab_bound_N_2}
\| w_i^{t+1} - \tilde{w}_i^{t+1} \| 
& \leq 8\beta_t G \frac{u_t}{b} \nonumber \\
& + \frac{1}{b} \sum_{z \in \D_i^{t,\teout}} 
\left \| \left ( w^t - \beta_t \nabla \psi(w^t; \D^{t, \tein}_i,z) \right )
- \left( \tilde{w}^t - \beta_t \nabla \psi(\tilde{w}^t; \tilde{\D}^{t, \tein}_i,z) \right) 
 \right \|.
\end{align}
Note that
\begin{align} \label{eqn:stab_bound_N_3}
& \left \| \left ( w^t - \beta_t \nabla \psi(w^t; \D^{t, \tein}_i,z) \right )
- \left( \tilde{w}^t - \beta_t \nabla \psi(\tilde{w}^t; \tilde{\D}^{t, \tein}_i,z) \right) 
 \right \| \nonumber \\
& \leq 
\left \| \left ( w^t - \beta_t \nabla \psi(w^t; \D^{t, \tein}_i,z) \right )
- \left( \tilde{w}^t - \beta_t \nabla \psi(\tilde{w}^t; \D^{t, \tein}_i,z) \right) 
 \right \| \nonumber \\
 & + 
\beta_t \left \| \nabla \psi(\tilde{w}^t; \D^{t, \tein}_i,z)
- \nabla \psi(\tilde{w}^t; \tilde{\D}^{t, \tein}_i,z) \right \|.	
\end{align}
Let us bound the two terms on the RHS of \eqref{eqn:stab_bound_N_3} separately. First, similar to how we derived \ref{eqn:stab_bound_N_1}, we could bound the first term by
\begin{align} 
\label{eqn:stab_bound_N_4}
& \left \| \left ( w^t - \beta_t \nabla \psi(w^t; \D^{t, \tein}_i,z) \right )
- \left( \tilde{w}^t - \beta_t \nabla \psi(\tilde{w}^t; \D^{t, \tein}_i,z) \right) 
 \right \| \nonumber \\
& \leq \left( 1 - \beta_t \frac{2 \mu (2L+\rho \alpha G)}{16(2L+\rho \alpha G) + \mu} \right ) \| w^{t} - \tilde{w}^{t} \|.	
\end{align}
To bound the second term on the RHS of \eqref{eqn:stab_bound_N_3}, note that
\begin{align}
\label{eqn:stab_bound_N_5}
& \left \| \nabla \psi(\tilde{w}^t; \D^{t, \tein}_i,z) 
- \nabla \psi(\tilde{w}^t; \tilde{\D}^{t, \tein}_i,z) \right \| \nonumber \\
& = \left \|
(I - \alpha \nabla^2 \hmatL(\tilde{w}^t,\D^{t, \tein}_i) ) \nabla \ell \left (\tilde{w}^t - \alpha \nabla \hmatL(\tilde{w}^t,\D^{t, \tein}_i), z \right) \right. \nonumber \\
& \left. \qquad-  (I - \alpha \nabla^2 \hmatL(\tilde{w}^t, \tilde{\D}^{t, \tein}_i) ) \nabla \ell \left (\tilde{w}^t - \alpha \nabla \hmatL(\tilde{w}^t, \tilde{\D}^{t, \tein}_i), z \right)
\right \|	\nonumber \\
& \leq \left \|
\nabla \ell \left (\tilde{w}^t - \alpha \nabla \hmatL(\tilde{w}^t,\D^{t, \tein}_i), z \right)
- \nabla \ell \left (\tilde{w}^t - \alpha \nabla \hmatL(\tilde{w}^t, \tilde{\D}^{t, \tein}_i), z \right)
\right \| + \nonumber \\
& \alpha 
\left \|
\nabla^2 \hmatL(\tilde{w}^t,\D^{t, \tein}_i) \nabla \ell \left (\tilde{w}^t - \alpha \nabla \hmatL(\tilde{w}^t,\D^{t, \tein}_i), z \right)
- \nabla^2 \hmatL(\tilde{w}^t, \tilde{\D}^{t, \tein}_i) \nabla \ell \left (\tilde{w}^t - \alpha \nabla \hmatL(\tilde{w}^t, \tilde{\D}^{t, \tein}_i), z \right)
\right \| \nonumber \\
& \leq (1+ \alpha L) \left \|
\nabla \ell \left (\tilde{w}^t - \alpha \nabla \hmatL(\tilde{w}^t,\D^{t, \tein}_i), z \right)
- \nabla \ell \left (\tilde{w}^t - \alpha \nabla \hmatL(\tilde{w}^t, \tilde{\D}^{t, \tein}_i), z \right)
\right \| + \nonumber \\
& 2\alpha G \left \|
\nabla^2 \hmatL(\tilde{w}^t,\D^{t, \tein}_i) - \nabla^2 \hmatL(\tilde{w}^t, \tilde{\D}^{t, \tein}_i)
\right \|,
\end{align}
where, in the last inequality, we used Lemma \ref{lemma:ext_lipschitz} along with the third condition of Assumption \ref{assumption_lipschitz}. Hence, what remains is to bound the two terms in \eqref{eqn:stab_bound_N_5}. To do so, notice that
\begin{align}\label{eqn:stab_bound_N_6}
& \left \|
\nabla \ell \left (\tilde{w}^t - \alpha \nabla \hmatL(\tilde{w}^t,\D^{t, \tein}_i), z \right)
- \nabla \ell \left (\tilde{w}^t - \alpha \nabla \hmatL(\tilde{w}^t, \tilde{\D}^{t, \tein}_i), z \right)
\right \| \nonumber \\
& \leq \alpha L \left \|
\nabla \hmatL(\tilde{w}^t,\D^{t, \tein}_i) - \nabla \hmatL(\tilde{w}^t, \tilde{\D}^{t, \tein}_i)
\right \|	
\leq 2\alpha L G \frac{v_t}{K},
\end{align}
and 
\begin{align}\label{eqn:stab_bound_N_7}
\left \|
\nabla^2 \hmatL(\tilde{w}^t,\D^{t, \tein}_i) - \nabla^2 \hmatL(\tilde{w}^t, \tilde{\D}^{t, \tein}_i)
\right \| 
\leq 2L \frac{v_t}{K}.	
\end{align}
By plugging \eqref{eqn:stab_bound_N_6} and \eqref{eqn:stab_bound_N_7} into \eqref{eqn:stab_bound_N_5} and using $\alpha L \leq 1$, we have
\begin{equation}\label{eqn:stab_bound_N_8}
\left \| \nabla \psi(\tilde{w}^t; \D^{t, \tein}_i,z) 
- \nabla \psi(\tilde{w}^t; \tilde{\D}^{t, \tein}_i,z) \right \|
\leq 8 \alpha L G \frac{v_t}{K}.	
\end{equation}
Substituting this bound and \eqref{eqn:stab_bound_N_4} into \eqref{eqn:stab_bound_N_3} and plugging the result into \eqref{eqn:stab_bound_N_2}, we have
\begin{equation}\label{eqn:stab_bound_N_9}
\| w_i^{t+1} - \tilde{w}_i^{t+1} \| 
\leq  \left( 1 - \beta_t \frac{2 \mu (2L+\rho \alpha G)}{16(2L+\rho \alpha G) + \mu} \right ) \| w^{t} - \tilde{w}^{t} \|
+ 8 \beta_t G (\frac{u_t}{b} + \alpha L \frac{v_t}{K}).	
\end{equation}
Using \eqref{eqn:stab_bound_N_9} and \eqref{eqn:stab_bound_N_1}, we obtain
\begin{align*}
&\| \frac{1}{r} \sum_{j \in \B_t} w_j^{t+1} - \frac{1}{r} \sum_{j \in \B_t} \tilde{w}_j^{t+1} \| 
\leq  \left( 1 - \beta_t \frac{2 \mu (2L+\rho \alpha G)}{16(2L+\rho \alpha G) + \mu} \right ) \| w^{t} - \tilde{w}^{t} \|
+ 8 \beta_t G (\frac{u_t}{rb} + \alpha L \frac{v_t}{rK}).   
\end{align*}
Since projections are non-expansive, we have 
\begin{equation}\label{eqn:stab_bound_N_10}
\| w^{t+1} - \tilde{w}^{t+1} \| 
\leq  \left( 1 - \beta_t \frac{2 \mu (2L+\rho \alpha G)}{16(2L+\rho \alpha G) + \mu} \right ) \| w^{t} - \tilde{w}^{t} \|
+ 8 \beta_t G (\frac{u_t}{rb} + \alpha L \frac{v_t}{rK}).	
\end{equation}
Taking an expectation from both sides and using \eqref{eqn:stab_bound1}, we get
\begin{equation}\label{eqn:stab_bound_N_11}
\E_\A [\| w^{t+1} - \tilde{w}^{t+1} \|] 
\leq  \left( 1 - \beta_t \frac{2 \mu (2L+\rho \alpha G)}{16(2L+\rho \alpha G) + \mu} \right ) \E_\A [\| w^{t} - \tilde{w}^{t} \|]
+ 8 \frac{\beta_t G}{mn} (1 + \alpha L K).	
\end{equation}
Note that we can rewrite this bound as
\begin{equation*}
\E_\A [\| w^{t+1} - \tilde{w}^{t+1} \|] 
\leq  ( 1 - \beta_t \lambda ) \E_\A [\| w^{t} - \tilde{w}^{t} \|]
+ \beta_t \eta,
\end{equation*}
where
\begin{align*}
\lambda:= \frac{2 \mu (2L+\rho \alpha G)}{16(2L+\rho \alpha G) + \mu}, 
\quad  \eta:= \frac{8G}{mn} (1 + \alpha L K). 
\end{align*}
Note that the claim \eqref{eqn:stab_bound3} is in fact to show
\begin{equation*}
\E_\A [\| w^{t} - \tilde{w}^{t} \|] \leq \frac{\eta}{\lambda}.    
\end{equation*}
This is true for $t=1$ since $\beta_0 \leq \frac{1}{4L+2\rho \alpha G} \leq \frac{1}{\lambda}$. Having this, we could easily obtain the result by induction. 
 \item
 We are ready to conclude. Note that by Lemma \ref{lemma:ext_lipschitz}, we have 
 \begin{align*}
& \left | 
\ell \left (w^T - \alpha \nabla \hmatL(w^T, \{z_j\}_{j=1}^K) , \tilde{z} \right) 
- \ell \left (\tilde{w}^T - \alpha \nabla \hmatL(\tilde{w}^T, \{z_j\}_{j=1}^K) , \tilde{z} \right)
\right | \nonumber \\
& \leq 4G 
\left \|
\left (w^T - \alpha \nabla \hmatL(w^T, \{z_j\}_{j=1}^K) , \tilde{z} \right) 
- \left (\tilde{w}^T - \alpha \nabla \hmatL(\tilde{w}^T, \{z_j\}_{j=1}^K) , \tilde{z} \right)
\right \| \nonumber \\
& \leq 4\psi(1+\alpha L) \| w^T - \tilde{w}^T\| 
\leq 8G \| w^T - \tilde{w}^T\| .
\end{align*}
Taking expectations from both sides completes the proof for $w^T$. Note that \eqref{eqn:stab_bound3'} can be extended to $\bar{w}^T$ as well, and using an argument similar to this step, we could show the same stability bound for the average itrtaes as well.
\end{enumerate} 
\section{Generalization bound for large $K$ regime} \label{app:largeK}
Under the premise of Theorem \ref{thm:stab_MAML}, we claim
\begin{equation}
\E_{\A, \matS} \left[ F(w_\matS) - \hF(w_\matS, \matS) \right] \leq 
\bigO(1) G^2 \left ( \frac{1}{mn \mu} + \alpha \min \left \{ \frac{LK}{mn \mu}, \frac{1}{\sqrt{K}} \right \} \right).
\end{equation}
To show this, first, recall that
\begin{equation*}
F_i(w) = \E_{\D_i^{\text{test}}} \left[ \matL_i \left (w - \alpha \nabla \hmatL(w, \D_i^{\text{test}}) \right ) \right].    
\end{equation*}
Let $G_i(w) := \matL_i \left (w - \alpha \nabla \matL_i(w) \right ).$ Note that
\begin{align*}
| F_i(w) - G_i(w) | & = 
\left | \E_{\D_i^{\text{test}}} \left[ \matL_i \left (w - \alpha \nabla \hmatL(w, \D_i^{\text{test}}) \right ) 
-  \matL_i \left (w - \alpha \nabla \matL_i(w) \right ) \right]   \right | \\
& \leq 4\alpha G \E_{\D_i^{\text{test}}} \left | \hmatL(w, \D_i^{\text{test}}) - \nabla \matL_i(w) \right | \leq 4 \alpha \frac{G^2}{\sqrt{K}}.
\end{align*}
As a result, for $G(w) = \frac{1}{m} \sum_{i=1}^m G_i(w)$, we have
\begin{equation*}
|G(w) - F(W)| \leq \bigO(1) \alpha \frac{G^2}{\sqrt{K}}.    
\end{equation*}
Similarly, if we define
\begin{equation*}
\hat{G}_i(w) := \hmatL \left (w - \alpha \nabla \matL_i(w), \matS_i^\teout \right), \quad \hat{G}_w := \frac{1}{m} \sum_{i=1}^m G_i(w),  
\end{equation*}
we could show that
\begin{equation*}
\E_{\matS} \left | \hat{G}(w) - \hF(w, \matS) \right | \leq \bigO(1) \alpha \frac{G^2}{\sqrt{K}}.    
\end{equation*}
Finally, note that the well-known generalization results for strongly convex functions by using classic stability definition (Definition \ref{definition: uniform_stability}) implies (see \cite{hardt2016train} for details)
\begin{equation*}
\E_{\A, \matS} |G(w_{\A}) - \hat{G}(w_{\A})| \leq \bigO(1) \frac{G^2}{mn \mu}, 
\end{equation*}
where $w_{\A}$ is MAML output. Putting these bounds together, we obtain $\bigO(1) \left ( \frac{G^2}{mn \mu} + \alpha \frac{G^2}{\sqrt{K}} \right )$. Taking minimum of this and Theorem \ref{thm:stab_MAML} proves the aforementioned claim.

Finally, it is worth mentioning that while we are not sure whether our bound is tight for the large $K$ regime, this is not necessarily the case that the generalization bound improves as $K$ increases. To see this, consider MAML with only one task, i.e., $m=1$, and the quadratic loss $l(w,z) = (w^\top x - y)^2$ with $z=(x,y)$. In addition, and to focus on the generalization error coming from test update, we assume we have access to exact gradients for outer loop, i.e., 
$$\hat{F}(w)= \frac{1}{\binom{n}{K}} \sum_{\{z_i\} \subset \mathcal{D}^{in}} \mathcal{L} \left (w - \alpha \sum_{i=1}^K \frac{1}{K} \nabla l(w,z_i) \right )$$
Let $\Lambda = \mathbb{E}[xx^\top]$ and $\rho = \mathbb{E}[xy]$. Also, we denote the estimation of $\Lambda$ and $\rho$ over $\mathcal{D}^{in}$ by $\hat{\Lambda}$ and $\hat{\rho}$, respectively.

After some simplifications, it can be shown that
$$\nabla F(w) = \Lambda w - \rho - 2 \alpha \Lambda^2 w + 2 \alpha \Lambda \rho + \mathcal{O}(\alpha^2) ,$$
$$\nabla \hat{F}(w) = \Lambda w - \rho + 2 \alpha \Lambda \hat{\Lambda} w + \alpha \Lambda \hat{\rho} + \alpha \hat{\Lambda} \rho + \mathcal{O}(\alpha^2).$$
It can be seen that the difference of the two gradients is $\Omega(\frac{\alpha}{n})$ and does not decrease as $K$ increases. 
\section{Proof of Theorem \ref{thm:gen_new_task}} \label{proof-thm:gen_new_task}
First, we show the following lemma:
\begin{lemma} \label{lemma:coupling}
For any $\tilde{z}$ and any $w \in \W$, we have
\begin{align} \label{eqn:gen_new_task_6}
& \left \vert \E_{\{z_j^{m+1} \sim p_{m+1} \}_{j=1}^K} \left[ \ell \left (w - \alpha \nabla \hmatL(w, \{z_j^{m+1} \}_{j=1}^K), \tilde{z} \right ) \right]
-\E_{\{z_j^i \sim p_i \}_{j=1}^K} \left[ \ell \left (w - \alpha \nabla \hmatL(w, \{z_j^i\}_{j=1}^K), \tilde{z} \right ) \right] \right \vert	\nonumber \\
& \leq 4 \alpha G^2 \| p_{m+1} - p_i\|_{TV}.
\end{align}	
\end{lemma}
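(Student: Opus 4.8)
The plan is to prove the bound by a \emph{maximal coupling} of $p_{m+1}$ and $p_i$, which is precisely the device that removes the dependence on $K$. The crucial observation is that each of the two expectations in \eqref{eqn:gen_new_task_6} depends only on the marginal law of the batch $\{z_j\}_{j=1}^K$, and not on any joint law relating the two batches; I am therefore free to realize both batches on a common probability space. Concretely, I would invoke the existence of a maximal coupling $\mu$ on $\Z \times \Z$ with marginals $p_{m+1}$ and $p_i$ satisfying $\mu(\{(a,b): a \neq b\}) = \|p_{m+1}-p_i\|_{TV}$ \citep{den2012probability}, and draw the $K$ pairs $(z_j^{m+1}, z_j^i)$ i.i.d.\ from $\mu$. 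Under this coupling the difference of the two expectations equals a single expectation, over $\mu^{\otimes K}$, of the quantity
\[
\ell\!\left(w - \al \nabla \hmatL(w, \{z_j^{m+1}\}_{j=1}^K), \tilde z\right) - \ell\!\left(w - \al \nabla \hmatL(w, \{z_j^{i}\}_{j=1}^K), \tilde z\right).
\]

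Next I would bound this integrand pointwise. Since $\|\nabla \ell(\cdot, \tilde z)\| \leq G$ by Assumption~\ref{assumption_lipschitz}(ii), the map $u \mapsto \ell(u, \tilde z)$ is $G$-Lipschitz, so the integrand is at most $G$ times
\[
\al \left\| \nabla \hmatL(w, \{z_j^{m+1}\}_{j=1}^K) - \nabla \hmatL(w, \{z_j^{i}\}_{j=1}^K) \right\| = \frac{\al}{K} \left\| \sum_{j=1}^K \left( \nabla \ell(w, z_j^{m+1}) - \nabla \ell(w, z_j^{i}) \right) \right\|.
\]
The summand vanishes on every coordinate $j$ with $z_j^{m+1} = z_j^{i}$, and on the remaining coordinates it is bounded by $2G$ via the bounded-gradient assumption. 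Hence, writing $N := \#\{j : z_j^{m+1} \neq z_j^{i}\}$, the integrand is controlled by $2\al G^2 N / K$ (the stated constant $4\al G^2$ simply absorbs the conservative bounds used here).

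Finally I would take the expectation over $\mu^{\otimes K}$. By construction $\mathbb{P}_\mu(z_j^{m+1} \neq z_j^{i}) = \|p_{m+1}-p_i\|_{TV}$ for each $j$, so linearity of expectation gives $\E[N] = K\|p_{m+1}-p_i\|_{TV}$, and the factor $K$ cancels against the $1/K$ coming from the averaging in the inner gradient, which yields the claimed $K$-free bound. The main conceptual obstacle—and indeed the entire point of the lemma—is exactly this cancellation: a naive argument comparing $p_{m+1}^{\otimes(K+1)}$ with $p_i^{\otimes(K+1)}$ directly would incur a factor of $\bigO(K)$ from $\|p_{m+1}^{\otimes K} - p_i^{\otimes K}\|_{TV}$, whereas the coupling localizes each discrepancy to a single coordinate whose effect on the averaged inner gradient is only $\bigO(1/K)$. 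The remaining steps are routine invocations of the Lipschitz and bounded-gradient properties in Assumption~\ref{assumption_lipschitz}.
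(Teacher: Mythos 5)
Your proposal is correct and follows essentially the same route as the paper's proof: a maximal coupling applied coordinatewise, a pointwise bound on the loss difference via the gradient difference of the inner updates (which localizes each discrepancy to a $\bigO(1/K)$ contribution), and then taking the expectation of the number of mismatched coordinates, which the paper evaluates via the binomial identity and you evaluate equivalently by linearity of expectation. The only minor imprecision is that the Lipschitz constant of $u \mapsto \ell(u,\tilde z)$ at the shifted points $w - \alpha \nabla \hmatL(w,\cdot)$ (which may leave $\W$) is $2G$ rather than $G$, as in Lemma~\ref{lemma:ext_lipschitz}, but as you note this is absorbed by the stated constant $4\alpha G^2$.
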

\begin{proof}
Note that since $p_i$ are non-atmoic, we could assume $z_j^i$'s are drawn independently. Same story holds for $z_j^{m+1}$'s. Now, for any $j$, let us assume $(z_j^i, z_j^{m+1})$ is drawn from a joint distribution of $p_i$ and $p_{m+1}$ corresponding to the maximal coupling of these distributions, i.e.,
\begin{equation*}
z_j^i \sim p_i, \quad z_j^{m+1} \sim p_{m+1}, \quad \mathbb{P}(z_j^i \neq z_j^{m+1})	 = \| p_i - p_{m+1} \|_{TV}.
\end{equation*}  
Hence, with probability $\binom{K}{t} (\| p_i - p_{m+1} \|_{TV})^t (1-\| p_i - p_{m+1} \|_{TV})^{K-t}$, we have $z_j^i \neq z_j^{m+1}$ for $t$ choices of $j$ (out of $1,...,K$). 

In addition, similar to the proof of Lemma \ref{lemma:extended_boundedness}, we could show that 
\begin{align*}
&\left \| 
\ell \left (w - \alpha \nabla \hmatL(w, \{z_j^{m+1} \}_{j=1}^K), \tilde{z} \right )
- 
\ell \left (w - \alpha \nabla \hmatL(w, \{z_j^{i} \}_{j=1}^K), \tilde{z} \right )
\right \|	\\
& \leq
2 \alpha G  
\| \hmatL(w, \{z_j^{m+1} \}_{j=1}^K) - \nabla \hmatL(w, \{z_j^{i} \}_{j=1}^K)\|.
\end{align*}
Hence, if $z_j^i \neq z_j^{m+1}$ for $t$ choices of $j$, then we have	
\begin{align*}
&\left \| 
\ell \left (w - \alpha \nabla \hmatL(w, \{z_j^{m+1} \}_{j=1}^K), \tilde{z} \right )
- 
\ell \left (w - \alpha \nabla \hmatL(w, \{z_j^{i} \}_{j=1}^K), \tilde{z} \right )
\right \|	\\
& \leq
4 \alpha G^2 \frac{t}{K}.  
\end{align*}
As a result, we have
\begin{align}
&\E_{\{z_j^{m+1} \sim p_{m+1} \}_{j=1}^K} \left[ \ell \left (w - \alpha \nabla \hmatL(w, \{z_j^{m+1} \}_{j=1}^K), \tilde{z} \right ) \right]
-\E_{\{z_j^i \sim p_i \}_{j=1}^K} \left[ \ell \left (w - \alpha \nabla \hmatL(w, \{z_j^i\}_{j=1}^K), \tilde{z} \right ) \right]	\nonumber \\
& \leq \sum_{t=0}^K \binom{K}{t} (\| p_i - p_{m+1} \|_{TV})^t (1-\| p_i - p_{m+1} \|_{TV})^{K-t} \cdot 4 \alpha G^2 \frac{t}{K} \nonumber \\
& = 4 \alpha G^2 (\| p_i - p_{m+1} \|_{TV}) \sum_{t=0}^K \frac{t}{K} \binom{K}{t} (\| p_i - p_{m+1} \|_{TV})^{t-1} (1-\| p_i - p_{m+1} \|_{TV})^{K-t} \nonumber \\
& = 4 \alpha G^2 (\| p_i - p_{m+1} \|_{TV}),	 \nonumber \\
\end{align}
where the last equality follows from the fact that
\begin{equation*}
\frac{t}{K} \binom{K}{t} (\| p_i - p_{m+1} \|_{TV})^{t-1} (1-\| p_i - p_{m+1} \|_{TV})^{K-t} 
= \binom{K-1}{t-1} (\| p_i - p_{m+1} \|_{TV})^{t-1} (1-\| p_i - p_{m+1} \|_{TV})^{K-1-(t-1)}.
\end{equation*}
\end{proof}
Let's get back to the proof of Theorem \ref{thm:gen_new_task}. For any $1 \leq i \leq m+1$ and any $\tilde{z}$, let us define
\begin{equation*}
X_i(\tilde{z})	:= \E_{\{z_j \sim p_i \}_{j=1}^K} \left[ \ell \left (w_\matS - \alpha \nabla \hmatL(w_\matS, \{z_j\}_{j=1}^K), \tilde{z} \right ) \right].
\end{equation*}
In other words, $X_i$ is the loss over data point $\tilde{z}$ when the model is updated using the distribution of task $i$.
Next, note that
\begin{align}\label{eqn:gen_new_task_3}
& F_{m+1}(w)	 - F_i(w) =	 \\
& 
\E_{\{z_j \sim p_{m+1} \}_{j=1}^K , \tilde{z} \sim p_{m+1}} \left[ \ell \left (w - \alpha \nabla \hmatL(w, \{z_j\}_{j=1}^K), \tilde{z} \right ) \right] 
- \E_{\{z_j \sim p_i \}_{j=1}^K , \tilde{z} \sim p_i} \left[ \ell \left (w - \alpha \nabla \hmatL(w, \{z_j\}_{j=1}^K), \tilde{z} \right ) \right]. \nonumber
\end{align}
Note that by Lemma \ref{lemma:extended_boundedness}, the term inside expectation is bounded, and hence, by Fubini's theorem, we can cast this term as
\begin{align}\label{eqn:gen_new_task_4}
\E_{\tilde{z} \sim p_{m+1}} [X_{m+1}(\tilde{z})] - \E_{\tilde{z} \sim p_{i}} [X_i(\tilde{z})]
\end{align}
By Lemma \ref{lemma:coupling}, we have $|X_i(\tilde{z}) - X_{m+1}(\tilde{z})| \leq 4\alpha G^2 \| p_i - p_{m+1}\|_{TV}$. Hence, we have
\begin{align}\label{eqn:gen_new_task_7}
&  F_{m+1}(w) - F_i(w) = \E_{\tilde{z} \sim p_{m+1}} [X_{m+1}(\tilde{z})] - \E_{\tilde{z} \sim p_{i}} [X_i(\tilde{z})] 
= 
\E_{\tilde{z} \sim p_{m+1}} [X_{m+1}(\tilde{z})] - \E_{\tilde{z} \sim p_{i}} [X_{m+1}(\tilde{z})]  
+ e_{i,m}, 	\nonumber 
\end{align}
where $|e_{i,m}| \leq 4\alpha G^2 \| p_i - p_{m+1}\|_{TV}$.
As a result, we have
\begin{align}
\left \vert F_{m+1}(w) - \frac{1}{m} \sum_{i=1}^m F_i(w) \right \vert 
& \leq 
\left \vert \E_{\tilde{z} \sim p_{m+1}} [X_{m+1}(\tilde{z})] - \frac{1}{m} \sum_{i=1}^m  \E_{\tilde{z} \sim p_{i}} [X_{m+1}(\tilde{z})] \right \vert
+ 4\alpha G^2 \| p_i - p_{m+1}\|_{TV}
\end{align}
Using Lemma \ref{lemma:extended_boundedness}, we have $0 \leq X_{m+1}(\tilde{z}) \leq M+2\alpha G^2$. Hence, by \eqref{eqn:TV_distance_diff}, we have
\begin{equation}\label{eqn:gen_new_task_8}
\left \vert
\E_{\tilde{z} \sim p_{m+1}} [X_{m+1}(\tilde{z})] - \frac{1}{m} \sum_{i=1}^m \E_{\tilde{z} \sim p_{i}} [X_{m+1}(\tilde{z})] \right \vert
\leq (M+2\alpha G^2) \| p_{m+1} - \frac{1}{m} \sum_{i=1}^m p_i \|_{TV}. 	
\end{equation} 
Plugging \eqref{eqn:gen_new_task_8} into \eqref{eqn:gen_new_task_7} gives us the desired result.
\subsection{Proof of Corollary \ref{corr:excess_new_task}} \label{proof-corr:excess_new_task}
Note that
\begin{align*}
\E_{\A,\matS}[F_{m+1}(w_\matS)] - \min_\W F_{m+1}   \leq
&\bigg( \E_{\A,\matS}[F_{m+1}(w_\matS) - F(w_\matS)] \bigg)
+\bigg ( \E_{\A,\matS}[F(w_\matS)] - \min_\W F \bigg) \\
&+ \bigg(\min_\W F - \min_\W F_{m+1} \bigg),
\end{align*}
where the second term on the right hand side is bounded by $\epsilon$ by assumption, and the first and last term are both bounded by $D(p_{m+1}, \{p_i\}_{i=1}^m)$ based on Theorem \ref{thm:gen_new_task}.
\subsection{Generalization to a task drawn from a distribution of recurring and unseen tasks} \label{proof-thm:gen_new_taskb}
Here we show how our result for generalization to an unseen task can be extended to the case that the task at test time is generated from a distribution $\pi$ over both recurring tasks $\{\T_i\}_{i=1}^m$ and the unseen task $\T_{m+1}$.
\begin{corollary}\label{corollary:gen_new_task}
Under the premise of Theorem \ref{thm:gen_new_task}, and if the task at the test time is generated from the distribution $\pi$ over $\{\T_i\}_{i=1}^{m+1}$, we have
\begin{align*}
\left | \E_{\pi}[F_i(w)] - F(w) \right | \leq \pi(\T_{m+1}) ~ D(p_{m+1}, \{p_j\}_{j=1}^m) 
 (1-\pi(\T_{m+1})) \sum_{i=1}^m |\pi(\T_{i}) - \frac{1}{m}| ~ D(p_{i}, \{p_j\}_{j=1}^m),
\end{align*}
where $\pi(\T_i)$ is the probability of task $\T_i$ according to distribution $\pi$.
\end{corollary}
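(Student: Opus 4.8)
The plan is to reduce Corollary~\ref{corollary:gen_new_task} to $m+1$ instances of Theorem~\ref{thm:gen_new_task}, one per task in the support of $\pi$. The first thing I would observe is that the proof of Theorem~\ref{thm:gen_new_task} never uses that $p_{m+1}$ is genuinely unseen: the coupling estimate of Lemma~\ref{lemma:coupling} and the total-variation bound \eqref{eqn:gen_new_task_8} only compare one reference distribution against $p_1,\dots,p_m$. Consequently the identical argument, run with any index $k\in\{1,\dots,m+1\}$ in the role of $m+1$, gives the per-task estimate $|F_k(w)-F(w)|\le D(p_k,\{p_j\}_{j=1}^m)$, where for a recurring index $k\le m$ the symbol $D(p_k,\{p_j\}_{j=1}^m)$ is read off \eqref{eqn:D} with $p_{m+1}$ replaced by $p_k$. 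I would record this as the single nontrivial ingredient, since everything else is bookkeeping.

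Next I would set up the decomposition. Here $\E_\pi[F_i(w)]=\sum_{k=1}^{m+1}\pi(\T_k)F_k(w)$ denotes the expectation over the random test-task index $i\sim\pi$. Writing $q:=1-\pi(\T_{m+1})=\sum_{i=1}^m\pi(\T_i)$ and using $\sum_{k=1}^{m+1}\pi(\T_k)=1$, I would express
\[
\E_\pi[F_i(w)]-F(w)=\sum_{k=1}^{m+1}\pi(\T_k)\,(F_k(w)-F(w))=\pi(\T_{m+1})\,(F_{m+1}(w)-F(w))+\sum_{i=1}^m\pi(\T_i)\,(F_i(w)-F(w)).
\]
The key algebraic step is to re-center the recurring sum: since $F=\frac1m\sum_{i=1}^m F_i$ we have $\sum_{i=1}^m(F_i(w)-F(w))=0$, so subtracting the constant weight $q/m$ costs nothing,
\[
\sum_{i=1}^m\pi(\T_i)\,(F_i(w)-F(w))=\sum_{i=1}^m\Bigl(\pi(\T_i)-\tfrac{q}{m}\Bigr)(F_i(w)-F(w)).
\]
This re-centering is exactly what forces the recurring contribution to vanish when $\pi$ restricted to $\{\T_1,\dots,\T_m\}$ is uniform, which is the behaviour encoded by the deviation-from-uniform factor in the claimed bound.

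I would then apply the triangle inequality to both displayed identities and insert the per-task estimate from the first step, obtaining
\[
\bigl|\E_\pi[F_i(w)]-F(w)\bigr|\le\pi(\T_{m+1})\,D(p_{m+1},\{p_j\}_{j=1}^m)+\sum_{i=1}^m\Bigl|\pi(\T_i)-\tfrac{q}{m}\Bigr|\,D(p_i,\{p_j\}_{j=1}^m).
\]
To match the stated form I would finally write $|\pi(\T_i)-q/m|=q\,|\pi(\T_i)/q-1/m|$ and interpret $\pi(\T_i)/q$ as the probability of $\T_i$ conditioned on the test task being recurring, so that $q=1-\pi(\T_{m+1})$ factors out in front of the sum exactly as written (and a ``$+$'' is understood between the two terms on the right-hand side of the statement). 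The only real obstacle is this last identification together with the verification that Theorem~\ref{thm:gen_new_task} is genuinely index-agnostic; both are careful readings of existing arguments rather than new estimates.
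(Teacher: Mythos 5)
Your proposal is correct, and it actually does more than the paper's own proof. Both arguments share the same key ingredient: Theorem~\ref{thm:gen_new_task} is index-agnostic, so $|F_k(w)-F(w)|\le D(p_k,\{p_j\}_{j=1}^m)$ holds for every $k\in\{1,\dots,m+1\}$, and the paper invokes exactly this for the recurring indices as well. The difference is in the decomposition. The paper applies the triangle inequality directly to $\E_\pi[F_i(w)]-F(w)=\pi(\T_{m+1})(F_{m+1}(w)-F(w))+\sum_{i=1}^m\pi(\T_i)(F_i(w)-F(w))$ and plugs in the per-task bounds, which yields weights $\pi(\T_i)$ on the recurring terms --- not the $|\pi(\T_i)-\tfrac1m|$ weights that appear in the corollary statement. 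Your re-centering step, subtracting the constant $q/m$ from the weights using $\sum_{i=1}^m(F_i(w)-F(w))=0$, is precisely the missing observation that produces the deviation-from-uniform factor and makes the recurring contribution vanish when $\pi$ is uniform on $\{\T_1,\dots,\T_m\}$; in that sense your argument is the one that actually justifies the bound as stated (up to the typographically missing ``$+$'' and the reading of $\pi(\T_i)$ in the sum as the conditional probability given a recurring task, both of which you flag). The paper's route buys brevity at the cost of proving a formally different (and, for near-uniform $\pi$, weaker) inequality; yours costs one extra algebraic identity and delivers the sharper, stated form.
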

\begin{proof}
Note that
\begin{align}\label{eqn:gen_new_C_1}
|\E_\pi[F_i(w)] - F(w)|
\leq
\pi_{m+1} |F_{m+1} - F(w)| + (1-\pi_{m+1}) \sum_{i=1}^m \pi_i |F_i(w) - F(w)|.
\end{align}
Note that by Theorem \ref{thm:gen_new_task} we have
\begin{equation*}
|F_{m+1} - F(w)| \leq D(p_{m+1}, \{p_j\}_{j=1}^m), \quad |F_{i} - F(w)| \leq D(p_{i}, \{p_j\}_{j=1}^m) .   
\end{equation*}
Plugging these into \eqref{eqn:gen_new_C_1} completes the proof.
\end{proof}

\section{Limitations of the algorithmic stability analysis}\label{sec:stability_limit}
Upon reviewers' suggestion, we briefly discuss why the algorithmic stability technique does not lead to meaningful generalization results for nonconvex loss functions. The main issue with applying the stability framework for the nonconvex case is that we have to select a small stepsize to obtain reasonable generalization bounds, but with such small stepsizes, we cannot guarantee that we will find a first-order stationary point (FOSP) solution of the empirical loss in polynomial time. 

To be more precise, consider Theorem 3.12 in Section 3.5 of \cite{hardt2016train}. There, the authors assume the stepsize $\alpha_t$ satisfies the condition $\alpha_t \leq c/t$. To see how this prohibits us from finding an FOSP efficiently, let us recall the convergence analysis of a non-convex smooth objective function $f$. There the main inequality is the following (see Section 1.2.3 in \cite{hardt2016train}):
$$f(w_T) - f^* \geq \sum_{t=0}^T \alpha_t (1-\alpha_t L/2) \| \nabla f(w_t) \|^2,$$
where $L$ is the smoothness parameter and $w_t$ is $t$-th iterate. It can be shown that by setting the stepsize to $\alpha_t = \Theta(1/t)$, as suggested by \cite{hardt2016train}, we would require  $\exp(\Theta(1/\epsilon^2))$ iterations to find an $\epsilon$-FOSP. However, with a constant stepsize, we can achieve the significantly improved rate of $\mathcal{O}(1/\epsilon^2)$ which matches the lower bound for this setting. As this argument shows, to obtain a meaningful generalization bound using algorithmic stability the stepsize should be selected much smaller than the required threshold and as a result the overall iteration/sample complexity could be very large.

Considering this discussion, the algorithmic stability technique imposes a very restrictive assumption on the stepsizes in the \textit{nonconvex setting} which has a detrimental effect on the training error analysis. 
\section{A toy example}
In this section, we provide a simple numerical experiment to validate our theoretical results. We consider a linear regression problem with dimension $d=10$ for the case that we have $m$ tasks and $n$ samples per task. For each task $i$, the feature vector $x$ is drawn according to a normal distribution of $\mathcal{N}(\mu_i, 0.2 I_d)$, where $\mu_i$ is a vector uniformly at random drawn from $[0,1]^d$. In addition, for a given $x$, the label $y$ is given by 
$
y = a_i^\top x + \epsilon_i,    
$
where $\epsilon_i \sim \mathcal{N}(0,0.1)$ and $a_i$ is a random vector. To make tasks similar, we generate the vectors $a_i$ according to $a_i = \frac{u_i + 1_d}{\|u_i + 1_d\|}$, where $u_i$ is a random vector, uniformly drawn from $[0,1]^d$, and $1_d$ is the all-one vector.  

For the loss function, we consider quadratic loss with quadratic regularization, i.e., $l(w, (x,y)) = (w^\top x - y)^2 + \lambda \|w\|^2$, with  $\lambda = 0.01$. We choose the number of samples in the stochastic gradient for adaptation as $K=5$ and the test time learning rate $\alpha = 0.1$, and run MAML for $T=20000$ iterations. 

Figure \ref{fig:recurring} shows the dependence of test error over recurring tasks on $m$ and $n$. In this case the task at test time is a recurring task. We see that the error decreases as $m$ or $n$ increases which is consistent with our theoretical results. 

Next, we consider the case that the task at test time is new and unseen. Note that, in this case, from our theoretical results we know that the error bound includes a term $ D(p_{m+1}, \{p_i\}_{i=1}^m)$ which does not decay with $n$. However, if the distributions are close, this term could be relatively small if $m$ is sufficiently large. To study this matter in our example, we consider two cases:
\begin{itemize}
    \item First, we assume this new task is similar to the observed tasks in training. More formally, similar to the first $m$ tasks, we take $a_{m+1} = \frac{u_{m+1} + 1_d}{\|u_i + 1_d\|}$, where $u_{m+1}$ is again a random vector, uniformly drawn from $[0,1]^d$. Figure \ref{fig:new_similar} shows the test error in this case. As we expected, here we do not gain that much from increasing $n$, but the error decreases as $m$ increases. This matches our intuition, as for small $m$, i.e., $m=1$, the distance between two distributions $p_1$ and $p_2$ could be large. However, as $m$ increases, we have tasks where their distributions are close to $p_{m+1}$, and hence the average distance between distributions $p_i,\cdots, p_m$ and $p_{m+1}$ decreases. 
    \item Second, we make this new task less similar to the observed ones. To do so, this time, we choose $a_{m+1} = \frac{u_{m+1} - 1_d}{\|u_i - 1_d\|}$. In this case, we expect to see a relatively large error which does not decrease with either $m$ or $n$, and Figure \ref{fig:new} exactly shows this matter.
\end{itemize}

\begin{figure}[t!]
\centering
\begin{subfigure}[b]{.45\textwidth}
  \centering
  \includegraphics[width=\linewidth]{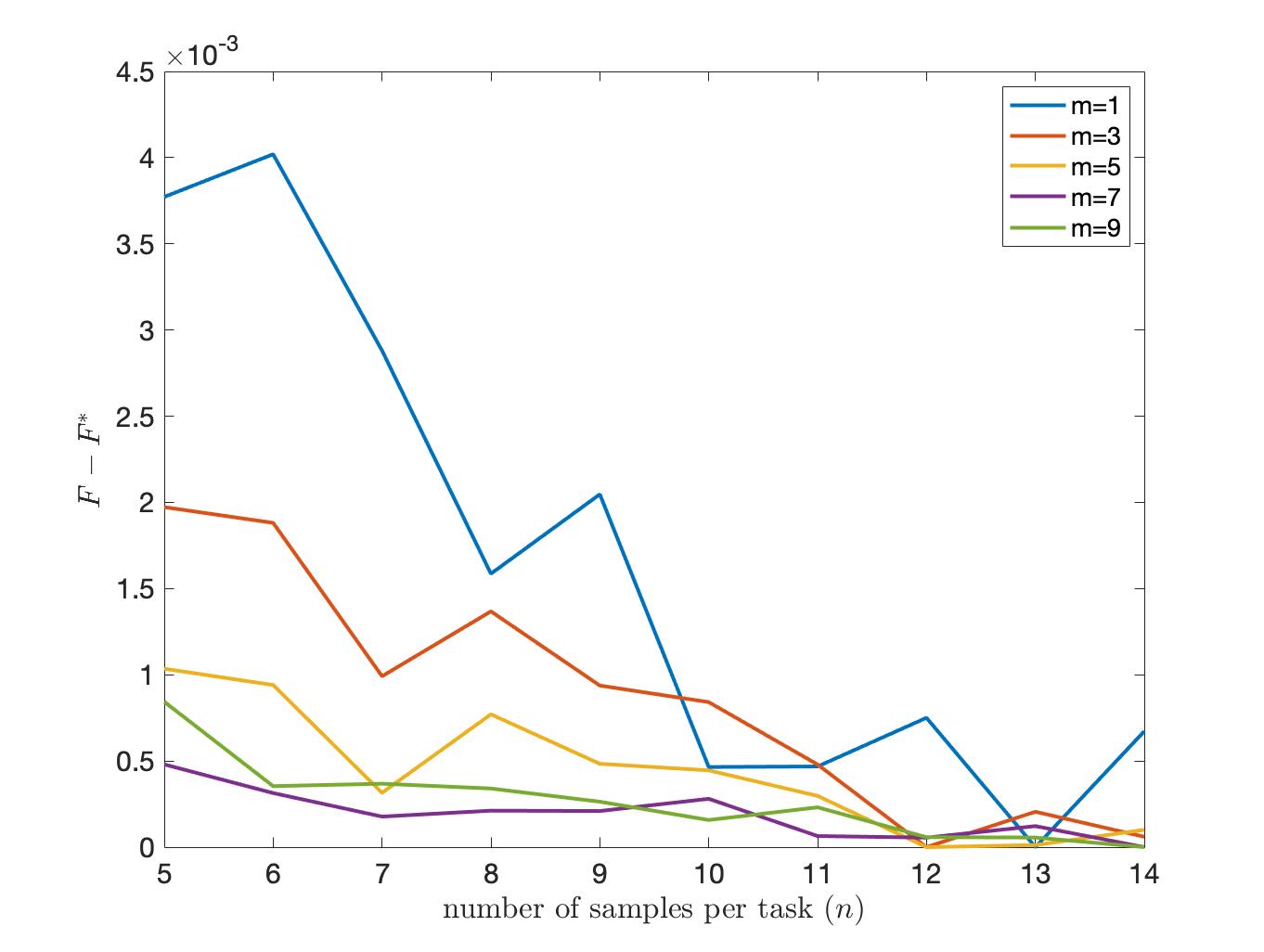}
  \caption{Test error as a function of $n$ for different $m$}
  \label{fig_recurring1}
\end{subfigure}%
\begin{subfigure}[b]{.45\textwidth}
  \centering
  \includegraphics[width=\linewidth]{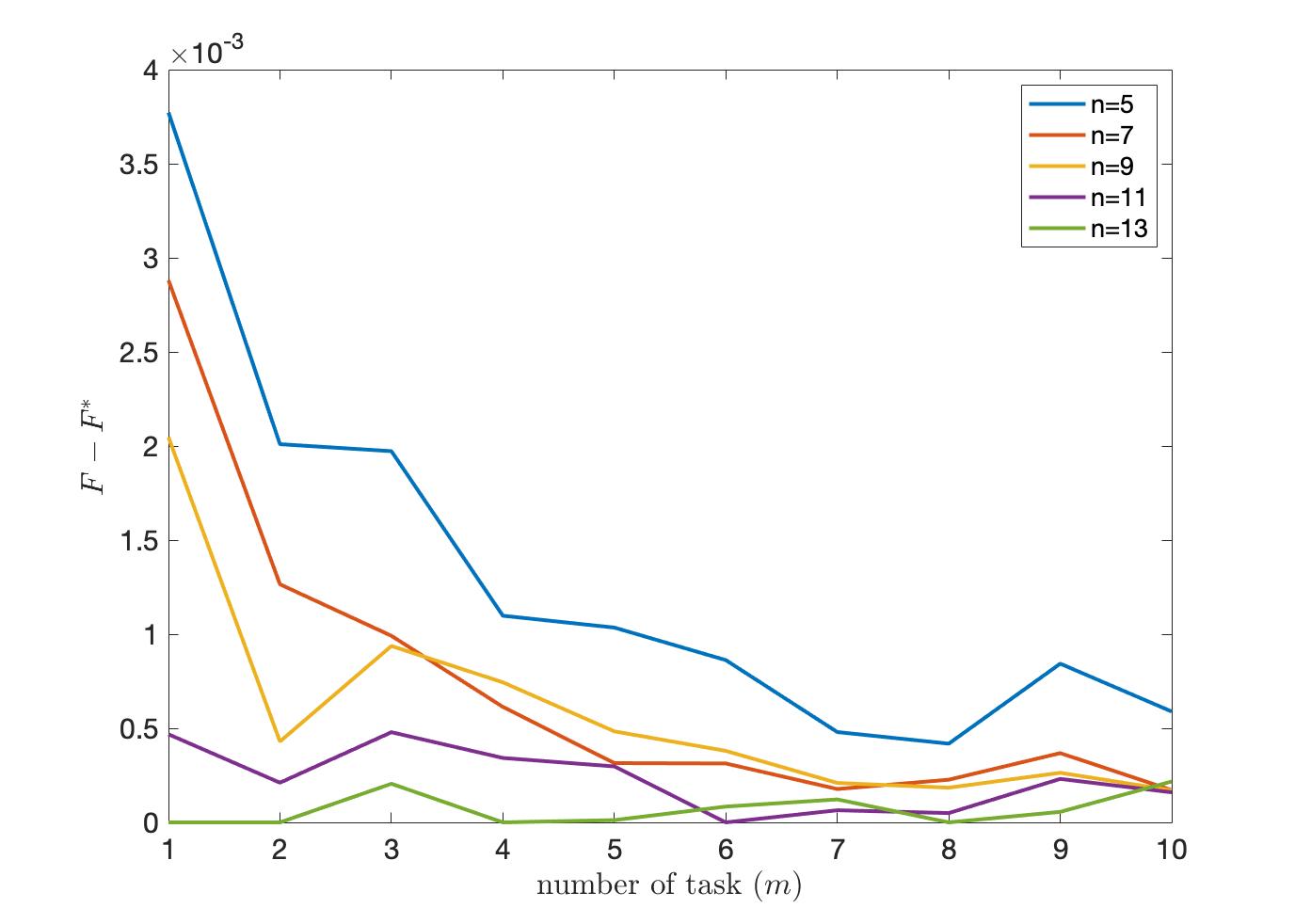}
  \caption{Test error as a function of $m$ for different $n$}
  \label{fig_recurring2}
\end{subfigure}
\vspace{-2mm}
\caption{Test error over recurring tasks}
\label{fig:recurring}
\vspace{-4mm}
\end{figure}

\begin{figure}[t!]
\centering
\begin{subfigure}[b]{.45\textwidth}
  \centering
  \includegraphics[width=\linewidth]{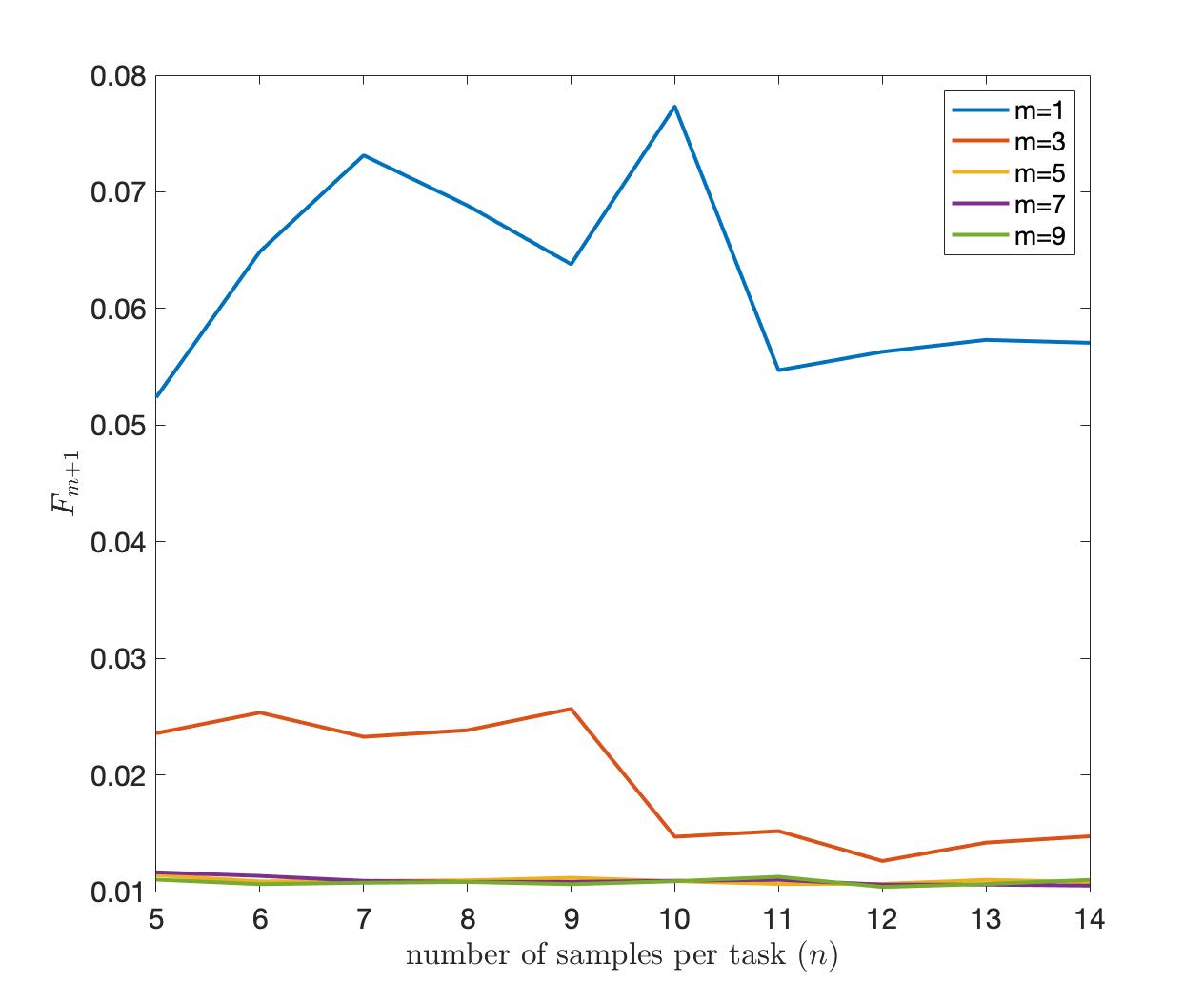}
  \caption{Test error as a function of $n$ for different  $m$}
  \label{fig_new_similar1}
\end{subfigure}%
\begin{subfigure}[b]{.45\textwidth}
  \centering
  \includegraphics[width=\linewidth]{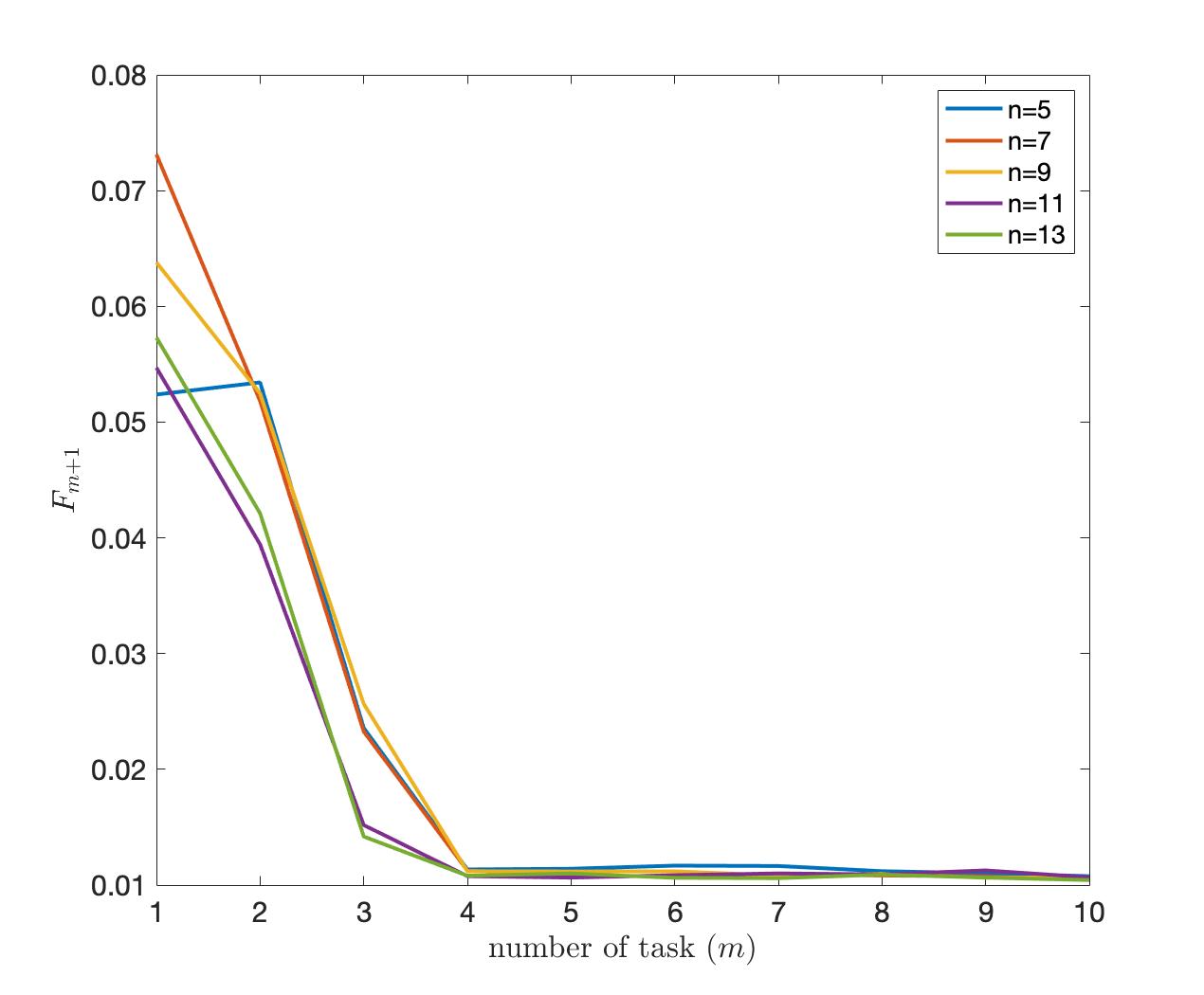}
  \caption{Test error as a function of $m$ for different $n$}
  \label{fig_new_similar2}
\end{subfigure}
\vspace{-2mm}
\caption{Test error over a new but similar task}
\label{fig:new_similar}
\vspace{-4mm}
\end{figure}

\begin{figure}[t!]
\centering
\begin{subfigure}[b]{.45\textwidth}
  \centering
  \includegraphics[width=\linewidth]{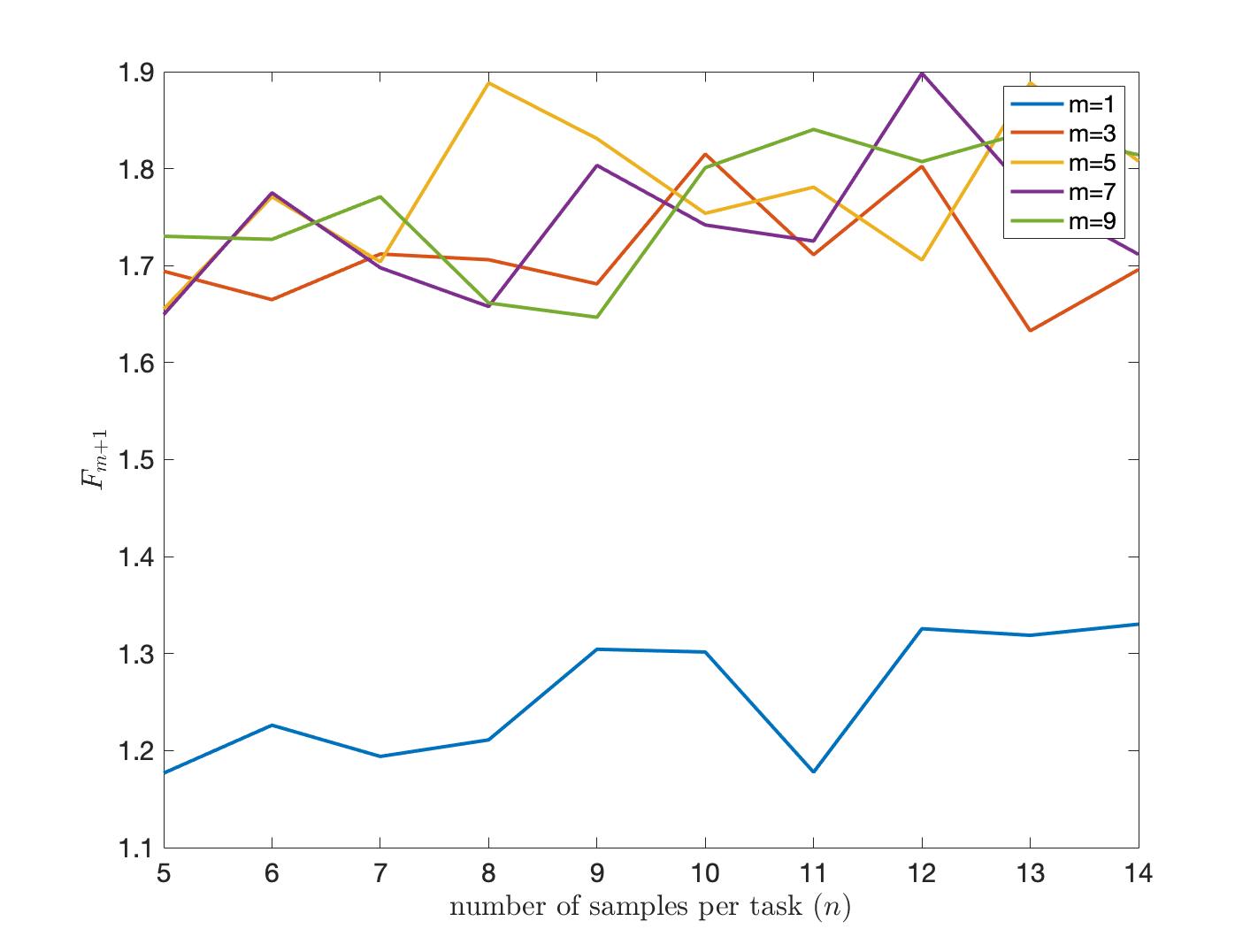}
  \caption{Test error as a function of $n$ for different $m$}
  \label{fig_new1}
\end{subfigure}%
\begin{subfigure}[b]{.45\textwidth}
  \centering
  \includegraphics[width=\linewidth]{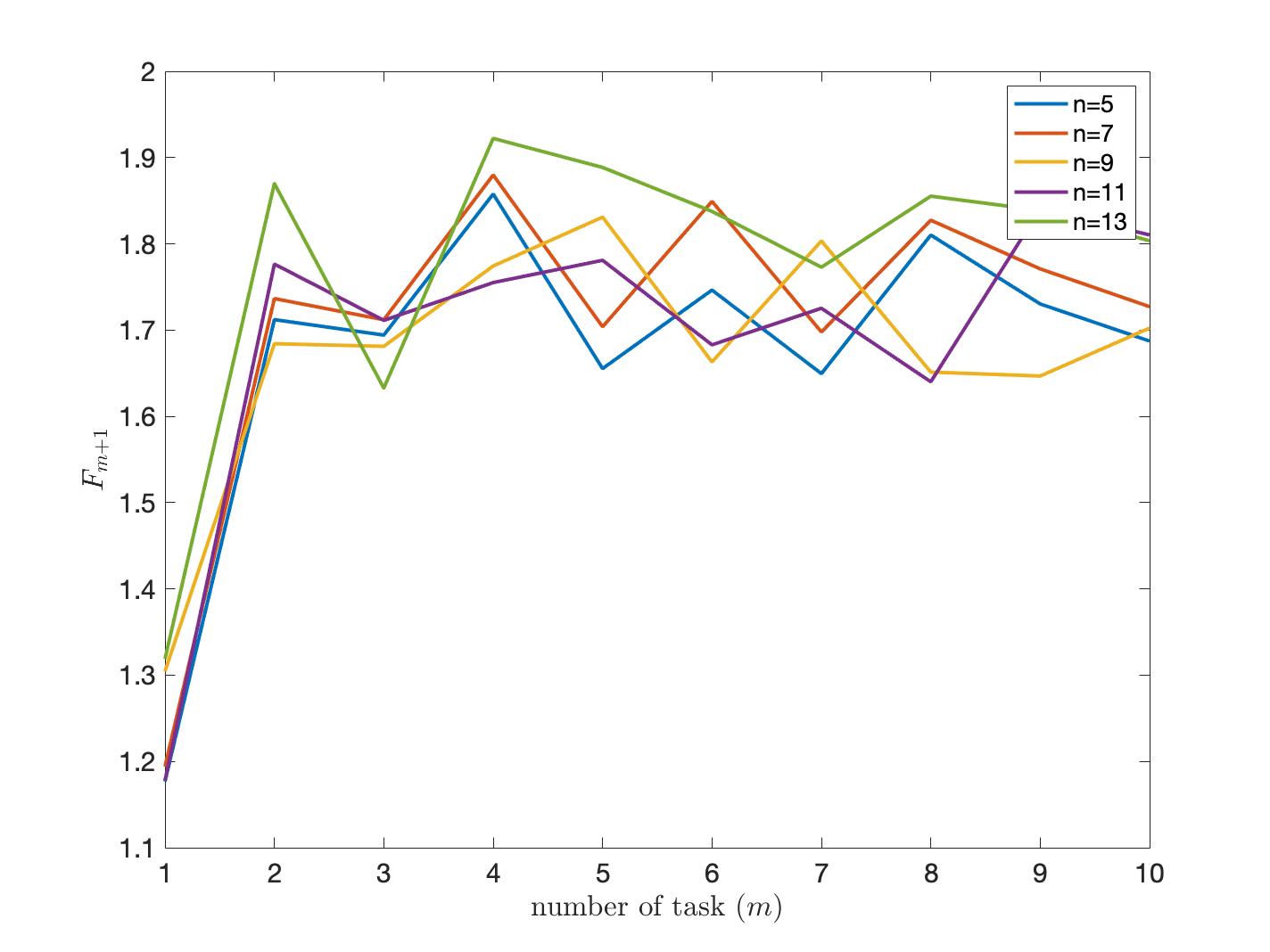}
  \caption{Test error as a function of $m$ for different $n$}
  \label{fig_new2}
\end{subfigure}
\vspace{-2mm}
\caption{Test error over a new and less similar task}
\label{fig:new}
\vspace{-4mm}
\end{figure}

\end{document}